\documentclass[sigconf]{acmart}

\usepackage{booktabs} % For formal tables

% Copyright
\setcopyright{none}
%\setcopyright{acmcopyright}
%\setcopyright{acmlicensed}
%\setcopyright{rightsretained}
%\setcopyright{usgov}
%\setcopyright{usgovmixed}
%\setcopyright{cagov}
%\setcopyright{cagovmixed}

%\settopmatter{printacmref=false, printccs=false, printfolios=true}
% DOI
%\acmDOI{10.475/123_4}

% ISBN
%\acmISBN{123-4567-24-567/08/06}

%Conference
%\acmConference[WOODSTOCK'97]{ACM Woodstock conference}{July 1997}{El   Paso, Texas USA} 
%\acmYear{1997}
%\copyrightyear{2016}

%\acmPrice{15.00}

%\makeatletter
%\def\runningfoot{\def\@runningfoot{}}
%\def\firstfoot{\def\@firstfoot{}}
%\makeatother
%
%\makeatletter
%\def\@copyrightspace{\relax}
%\makeatother
\usepackage{wrapfig,lipsum,booktabs}
\usepackage{enumitem}
\usepackage{graphicx}
\usepackage[para,online,flushleft]{threeparttable}
\usepackage{caption}
\usepackage{pifont}
\usepackage{booktabs}
 \usepackage[T1]{fontenc}
\usepackage{enumitem}
\usepackage{lipsum}
\setlist[itemize]{leftmargin=*}
\usepackage{amssymb}% http://ctan.org/pkg/amssymb
\usepackage{pifont}% http://ctan.org/pkg/pifont
 \usepackage[flushleft]{threeparttable}
\usepackage{adjustbox}
\usepackage{array}
\usepackage{amsthm}
\usepackage{multirow}
\newtheorem{remark}{Remark}

\newcolumntype{R}[2]{%
    >{\adjustbox{angle=#1,lap=\width-(#2)}\bgroup}%
    l%
    <{\egroup}%
}

\usepackage{algorithmic}
\usepackage{algorithm}

\DeclareMathOperator*{\argmax}{arg\,max}
\begin{document}
%\title{A Generic Framework for Anomalous Subspace Cluster Detection in Large Multivariate Networks}
\title{A Generic Framework for Interesting Subspace Cluster Detection in Multi-attributed Networks}
%\titlenote{Produces the permission block, and   copyright information}
%\subtitle{Extended Abstract}
%\subtitlenote{The full version of the author's guide is available as
%  \texttt{acmart.pdf} document}

 \author{Feng Chen, Baojian Zhou *, Adil Alim *}
 %\authornote{Dr.~Trovato insisted his name be first.}
 %\orcid{1234-5678-9012}
 \affiliation{%
   \institution{University at Albany}
 %  \streetaddress{P.O. Box 1212}
 }
 \email{{fchen5, aalimu, bzhou6}@albany.edu}

 \author{Liang Zhao}
% %\authornote{The secretary disavows any knowledge of this author's actions.}
 \affiliation{%
   \institution{George Manson University}
% %  \streetaddress{P.O. Box 1212}
% %  \city{Dublin} 
% %  \state{Ohio} 
% %  \postcode{43017-6221}
 }
 \email{zhaoliangvaio@gmail.com}

\thanks{* These two authors contributed equally}

% The default list of authors is too long for headers}
%\renewcommand{\shortauthors}{B. Trovato et al.}

\begin{abstract}

Detection of interesting (e.g., coherent or anomalous) clusters has been studied extensively on plain or univariate networks, with various applications. Recently, algorithms have been extended to networks with multiple attributes for each node in the real-world. In a multi-attributed network, often, a cluster of nodes is only interesting for \underline{\smash{a subset (subspace)}} of attributes, and this type of clusters is called \textit{subspace clusters}. However, in the current literature, few methods are capable of detecting  subspace clusters, 
which involves concurrent feature selection and network cluster detection. These relevant methods are mostly heuristic-driven  and customized for specific application scenarios. 

In this work, we present a generic and theoretical framework for detection of interesting subspace clusters in large multi-attributed networks. 
Specifically, we propose a subspace graph-structured matching pursuit algorithm, namely, \texttt{SG-Pursuit}, to address a broad class of such problems for different score functions (e.g., coherence or anomalous functions) and topology constraints (e.g., connected subgraphs and dense subgraphs). 
We prove that our algorithm 1) runs in nearly-linear time on the network size and the total number of attributes and 2) enjoys rigorous guarantees (geometrical convergence rate and tight error bound) analogous to those of the state-of-the-art algorithms for sparse feature selection problems and subgraph detection problems. As a case study, we specialize \texttt{SG-Pursuit} to optimize a number of well-known score functions for two typical tasks, including detection of coherent dense and anomalous connected subspace clusters in real world networks. Empirical evidence demonstrates that our proposed
generic algorithm \texttt{SG-Pursuit} performs superior over state-of-the-art methods that are designed specifically for these two tasks. 

\vspace{-3mm}
\end{abstract}

\maketitle

\section{Introduction}
With recent advances in hardware and software technologies, the huge volumes of data now being collected from multiple sources are naturally modeled as \textit{multi-attributed networks}. For example, massive multi-attributed biological networks have been created by integrating gene expression data with secondary data such as pathway or protein-protein interaction data for improved outcome prediction of cancer patients~\cite{mihaly2013meta}. Other examples include the multi-attributed networks that combine ``Big data'' (e.g., Twitter feeds) and traditional surveillance data for influenza studies~\cite{davidson2015using} and the social networks that contain both the friendship relations  and user attributes such as interests, frequencies of keywords mentioned in posts, and demographics~\cite{gunnemann2014gamer}. 

\begin{figure}[t]
\center
\includegraphics[width=8.5cm]{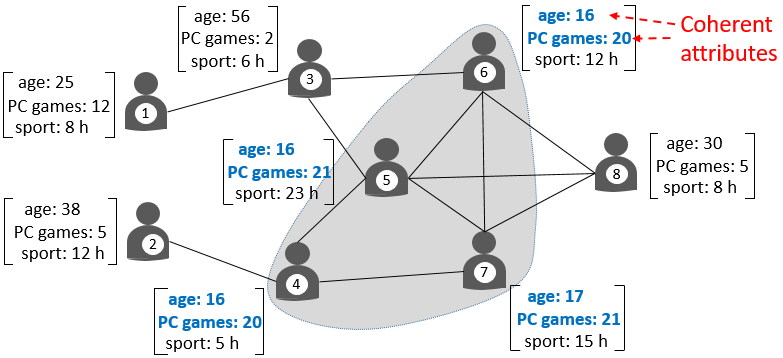}
%\vspace{-6pt}
\captionof{figure}{A social network with three attributes (age, PC games, and sport) in each user node and one potential coherent dense subspace cluster (highlighted in the shaded region and blue-colored texts) that has a coherent subset of attributes (age and PC games) and a dense subgraph of nodes (4, 5, 6, and 7). This cluster might be of interest to video game producers. (Adapt from \cite{gunnemann2014gamer}) }\label{fig:dense-coherent-subgraphs}
\vspace{-13pt}
  \end{figure}
  
  \begin{figure}[t]
\center
\includegraphics[width=8.5cm]{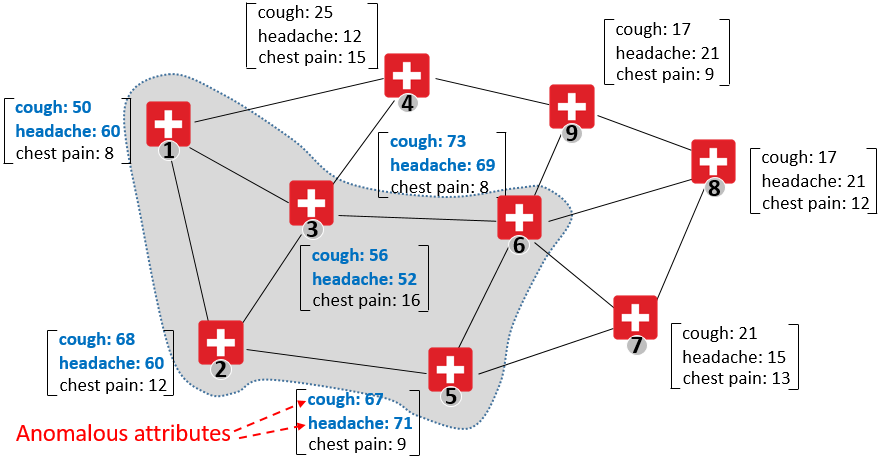}
%\vspace{-6pt}
\captionof{figure}{A health surveillance network of emergency departments (EDs) with three attributes (counts of cases of three different ICD-9 disease symptoms~\cite{neill2013fast}, including cough, headache, chest pain) in each ED node and one potential anomalous connected subspace cluster (highlighted in the shaded region and blue-colored texts) that has a anomalous subset of attributes (cough and headache) and a connected subgraph of nodes (1, 2, 3, 5, and 6).  The counts of these two attributes within the subgraph are abnormally higher than those outside the subgraph. In this scenario, the anomalous connected subspace cluster is used for disease outbreak detection.} \label{fig:dense-coherent-subgraphs}
\vspace{-10pt}  
\end{figure}
  
  \begin{table*}[t]
%\scriptsize
%\captionsetup{font=scriptsize}
\small
\caption{Comparison of related work (``Generality'' refers to the capability of a method to support different score functions and topological constraints on \underline{\smash{subspace clusters}} on attributed networks. ``Good tradeoff'' refers to the good trade-off between tractability and quality guarantee on \underline{\smash{subspace clusters}}, when the number of feasible subgraphs (neighborhoods) is large. 
%``-'' means: unapplicable)
}
\vspace{-3mm}
\centering
\newcommand*\rot{\multicolumn{1}{R{45}{1em}}}% no optional argument here, please!
\renewcommand*\rot[2]{\multicolumn{1}{R{#1}{#2}}}% no optional argument here, please!

\resizebox{\textwidth}{!}{
\begin{tabular}{m{7.5cm}|m{1.1cm} m{1.1cm} m{1.1cm} m{1.1cm} m{1.1cm} m{1.1cm} m{1.1cm}}\hline
&
\rot{25}{1em}{Cluster detection} &
\rot{25}{1em}{On attributed networks} &
\rot{25}{1em}{Attribute subspace} &
\rot{25}{1em}{Coherent dense clusters} &
\rot{25}{1em}{Anomalous connected clusters} &
\rot{25}{1em}{Generality} & 
\rot{25}{1em}{Good tradeoff}
    \\ \hline
METIS~\cite{karypis1998multilevel}, Spectral~\cite{ng2001spectral}, Co-clustering~\cite{dhillon2003information}, PageRank-Nibble~\cite{andersen2006local}        &  \checkmark      &       &    & &  &  &        \\  
PICS~\cite{akoglu2012pics}, CODA~\cite{gao2010community}        &  \checkmark      &   \checkmark    &    & &  &  &     \\  
NPHGS~\cite{chen2014non}, EDAN~\cite{rozenshtein2014event}, CSGN~\cite{qian2014connected}, GSSO~\cite{zhou2016icdm}, GSPA~\cite{chen2016generalized}        &  \checkmark      &   \checkmark    &    & &  \checkmark &  &     \\  
CoPaM~\cite{moser2009mining}, Gamer~\cite{gunnemann2010subspace, gunnemann2014gamer}, \texttt{FocusCO}~\cite{perozzi2014focused}, AW-NCut~\cite{gunnemann2013spectral}       &  \checkmark      &  \checkmark      & \checkmark    & \checkmark &  & &    \\  
SODA~\cite{gupta2014local}, AMEN~\cite{perozzi2016scalable}   &  \checkmark      &  \checkmark      & \checkmark    &  & \checkmark &  &     \\  \hline
\texttt{SG-Pursuit}  [this paper]     &  \checkmark      &  \checkmark      & \checkmark    &  \checkmark & \checkmark &  \checkmark &    \checkmark  \\  \hline
\end{tabular}
}
\label{table:comparison}
\vspace{-3 mm}
\end{table*}

As one of the major tasks
in network mining, the detection of interesting clusters in attributed networks, such as coherent or anomalous clusters, has attracted a great deal of attention
in many applications, including medicine and public health~\cite{gunnemann2014gamer, neill2013fast}, law enforcement~\cite{wu2010identifying}, cyber security~\cite{pasqualetti2013attack}, transportation~\cite{anbarouglu2015non}, among others~\cite{ramakrishnan2014beating, ranu2013mining, rozenshtein2014event}. To deal with the multiple or even high-dimensional attributes, most existing methods either utilize all the given attributes~\cite{gao2010community, noble2003graph} or preform a unsupervised feature selection as a preprocessing step~\cite{tang2012unsupervised}. However, as demonstrated in a number of studies~\cite{perozzi2014focused, perozzi2016scalable,  gunnemann2014gamer, gunnemann2013spectral, gunnemann2010subspace},  clusters of interest in a multi-attributed network are often \textbf{subspace clusters}, each of which is defined by \textbf{a cluster of nodes} and \textbf{a relevant subset of attributes}. For example, in social networks, it is very unlikely that people are similar within all of their characteristics~\cite{gunnemann2014gamer}. In health surveillance networks, it is very rare that outbreaks of different disease types have identical symptoms~\cite{neill2013fast}.  In order to detect subspace clusters, it is required to conduct feature selection and cluster detection, concurrently, as 
%One of the key intuitions is that 
without knowing the true clusters of nodes, it is difficult to identify their relevant attributes, and vice versa.

In recent years, a limited number of methods have been proposed to detect \textit{subspace clusters}, which fall into two main categories, 
including detection of coherent dense subspace clusters and detection of anomalous connected subspace clusters. The methods for detecting \textbf{coherent dense subspace clusters} search for subsets of nodes that show high similarity in subsets of their attributes and that are as well densely connected within the input network. Customized algorithms are developed for specific combinations of similarity functions of attributes (e.g., threshold based~\cite{gunnemann2014gamer, gunnemann2013spectral} and pairwise distance based~\cite{perozzi2014focused} functions) and density functions of nodes~\cite{perozzi2014focused, gunnemann2014gamer, gunnemann2013spectral, gunnemann2010subspace, moser2009mining}. The methods for detecting \textbf{anomalous connected subspace clusters} search for subsets of nodes that are significantly different from the other nodes on subsets of their attributes and that are as well connected (but not necessary dense) within the input network. The connectivity constraint ensures that the clusters of nodes reflect changes due to localized in-network processes. All the existing methods in this category consider a small set of neighborhoods (e.g., social circles and ego networks~\cite{perozzi2016scalable}, subgraphs isomorphic to a query graph~\cite{gupta2014local}, and small-diameter subgraphs~\cite{neill2013fast}), and identify anomalous subspace clusters \textit{among only these given neighborhoods}. 

However, the aforementioned methods have two main limitations: 1) \textbf{Lack of generality.} All these methods are customized for specific score functions of attributes and topological constraints on clusters, and may be inapplicable if the functions or constraints are changed. 
As discussed in recent surveys~\cite{akoglu2015graph}, the definition of an interesting subgraph pattern, in which subspace clusters is a specific type, is meaningful only under a given context or application. There is a strong need of generic methods that can handle a broad class of score functions, such as parametric/nonparametric scan statistic functions~\cite{chen2014non}, discriminative functions~\cite{ranu2013mining}, and least square functions~\cite{Dang2016icdm}; and topological constraints, such as the types of subgraphs aforementioned~\cite{perozzi2016scalable, gupta2014local, neill2013fast, perozzi2014focused, gunnemann2014gamer}, compact subgraphs~\cite{rozenshtein2014event}, trees~\cite{mairal2011convex}, and paths~\cite{asteris2015stay}.
 2) \textbf{Lack of good tradeoff between tractability and quality guarantees.} The methods for detecting \underline{\smash{anomalous connected subgraphs}} conduct exhaust search over all feasible subgraphs (neighborhoods), but will be intractable when the number of feasible subgraphs is large (e.g., all connected subgraphs). Several methods for detecting \underline{\smash{coherent dense subspace}} \underline{\smash{clusters}} are tractable to large networks, but do not provide worst-case theoretical guarantees on the quality of the detected clusters. 
 
This paper presents a novel generic and theoretical framework to address the above two main limitations of existing methods for a broad class of interesting subspace cluster detection problems. In particular, we consider the general form of subspace cluster detection as an optimization problem that has a general score function measuring the interestingness of a subset of features and a cluster of nodes, a sparsity constraint on the subset of features, and topological constraints on the cluster of nodes.  We propose a novel subspace graph-structured matching pursuit algorithm, namely, \texttt{SG-Pursuit}, to approximately solve this general problem in nearly-linear time. The key idea is to iteratively search for a close-to-optimal solution by solving easier subproblems in each iteration, including i) identification of topological-free clusters of nodes and a sparsity-free subset of attributes that maximizes the score function in a sub-solution-space determined by the gradient of the current solution; and ii) projection of the identified intermediate solution onto the solution-space defined by the sparsity and topological constraints. The contributions of this work are summarized as follows:
%\vspace{-2mm}
\begin{itemize}
\item \textbf{Design of a generic and efficient approximation algorithm for the subspace cluster detection problem.} We propose a novel generic algorithm, namely, \texttt{SG-Pursuit}, to approximately solve a broad class of subspace cluster detection problems that are defined by different score functions and topological constraints in nearly-linear time. 
To the best of our knowledge, this is the first-known generic algorithm for such problems. 
\item \textbf{Theoretical guarantees and connections.} We present a theoretical analysis of the proposed \texttt{SG-Pursuit} and show that \texttt{SG-Pursuit} enjoys a geometric rate of convergence and a tight error bound on the quality of the detected subspace clusters. 
We further demonstrate that \texttt{SG-Pursuit} enjoys strong guarantees analogous to state-of-the-art methods for sparse feature selection in high-dimensional data and for subgraph detection in attributed networks.   
\item \textbf{Compressive experiments to validate the effectiveness and efficiency of the proposed techniques.} \texttt{SG-Pursuit} was specialized to conduct the specific tasks of coherent dense subspace cluster detection and anomalous connected subspace cluster detection on several real-world data sets. The results demonstrate that \texttt{SG-Pursuit} outperforms state-of-the-art methods that are designed specifically for these tasks, even that \texttt{SG-Pursuit} is designed to address general subspace cluster detection problems.  
\end{itemize}

\noindent \textbf{Reproducibility}: The implementations of \texttt{SG-Pursuit} and baseline methods and the data sets are available via the link~\cite{fullversion}. 

The rest of this paper is organized as follows. Section~\ref{sect:method} introduces the proposed method \texttt{SG-Pursuit} and analyzes its theoretical properties. Section~\ref{sect:applications} discussions applications of our proposed algorithm for the tasks of coherent dense subspace cluster detection and anomalous connected subspace cluster detection. Experiments on several real world benchmark datasets are presented in Section~\ref{sect:experiments}. Section~\ref{sect:conclusions} concludes the paper and describes future work. 

\section{METHOD \texttt{SG-Pursuit}}
\label{sect:method}

In this section we first introduce the notation and define the problem of subspace cluster detection formally. Next, we present the algorithm \texttt{SG-Pursuit} and analyze its theoretical properties, including its convergence rate, error-bound, and time complexity. 

\subsection{Problem Formulation}
\label{sect:method-problem-formulation}
We consider a multi-attributed network that is defined as $\mathbb{G} = (\mathbb{V}, \mathbb{E}, w)$, where $\mathbb{V} = \{1, \cdots, n\}$ is the ground set of nodes of size $n$, $\mathbb{E} \subseteq \mathbb{V} \times \mathbb{V}$ is the set of edges, and the  function $w:\mathbb{V}\rightarrow \mathbb{R}^p$ defines a vector of attributes of size $p$ for each node $v\in \mathbb{V}$: $w(v) \in \mathbb{R}^p$. For simplicity, we denote the attribute vector $w(v)$ by $w_v$. 

We introduce two vectors of coefficients, including $x \in \mathbb{R}^n$ and $y\in \mathbb{R}^p$, that will be optimized for detecting the most interesting subspace cluster in $\mathbb{G}$, where $x$ identifies the cluster (subset) of nodes and $y$ identifies their relevant attributes.  In particular, the vector $x$ refers to the vector of coefficients of the nodes in $\mathbb{V}$. Each node $i \in \mathbb{V}$ has a coefficient score $x_i$ indicating the importance of this node in the cluster of interest. If $x_i \neq 0$, it means that the node $i$ belongs to the cluster of interest. Similarly, the vector $y$ refers to the vector of coefficients of the $p$ attributes. Each attribute  $j \in \{1, \cdots, p\}$ has a coefficient score $y_j$ indicating the relevance of this attribute to the clusters of interest. Let $\text{supp}(x)$ be the support set of indices of nonzero entries in $x$: $\text{supp}(x) = \{i \ | \ x_i \neq 0\}$. Then the support set $\text{supp}(x)$ represents the subset of nodes that belong to the cluster of interest. The support set $\text{supp}(y)$ represents the subset of relevant attributes. 
We define the feasible space of clusters of nodes as 
\begin{eqnarray}\mathbb{M}(k) =\{ S\ | \ S\subseteq \mathbb{V}; |S| \le k; \mathbb{G}_{S} \text{ satisfies predefined topological} \nonumber \\ \text{constraints. } \},\nonumber 
\end{eqnarray}
where $S$ refers to a subset of nodes in $\mathbb{V}$, $\mathbb{G}_{S} = (S, \mathbb{E}\cap S\times S)$ refers to the subgraph induced by $S$, $|S|$ refers to the total number of nodes in $S$, and $k$ refers to an upper bound on the size of the cluster. The topological constraints can be any topological constraints on $\mathbb{G}_S$, such as connected subgraphs~\cite{perozzi2016scalable, neill2013fast}, dense subgraphs~\cite{gunnemann2014gamer, perozzi2014focused}, subgraphs that are isomorphic to a query graph~\cite{gupta2014local}, compact subgraphs~\cite{rozenshtein2014event}, trees~\cite{mairal2011convex}, and paths~\cite{asteris2015stay}, among others.

Based on the above notations, we consider \textbf{a general form of the subspace cluster detection problem} as 
\begin{eqnarray}
\max_{x \in \mathcal{C}_x, y\in \mathcal{C}_y} f(x, y)\ \ \ s.t.\ \ \ \text{supp}(x) \in \mathbb{M}(k) \text{  and  } \|y\|_0 \le s, \label{prob1}
\end{eqnarray}
where $f(x, y): \mathbb{R}^n \times \mathbb{R}^p \rightarrow \mathbb{R}$ is a score function that measures the overall level of interestingness of the subspace clusters indicated by $x$ and $y$;  $\mathcal{C}_x \subseteq \mathbb{R}^n$ represents a convex set in the Euclidean space $\mathbb{R}^n$, $\mathcal{C}_y \subseteq \mathbb{R}^p$ represents a convex set in the Euclidean space $\mathbb{R}^p$, $\mathbb{M}(k)$ refers to the feasible space of clusters of nodes as defined a above, and $s$ refers to an upper bound on the number of attributes relevant to the subspace clusters of interest. The parameters $k$ and $s$ are predefined by the user. Let $\hat{x}$ and $\hat{y}$ be the solution to Problem~(\ref{prob1}). Denote by $S$ the support set $\text{supp}(\hat{x})$ that represents the most interesting cluster of nodes, and by $R$ the support set $\text{supp}(\hat{y})$ represents the subset of relevant attributes. The most interesting subspace cluster can then be identified as $(S, R)$. 

\begin{figure}[t]
\center
\includegraphics[width=8.1cm]{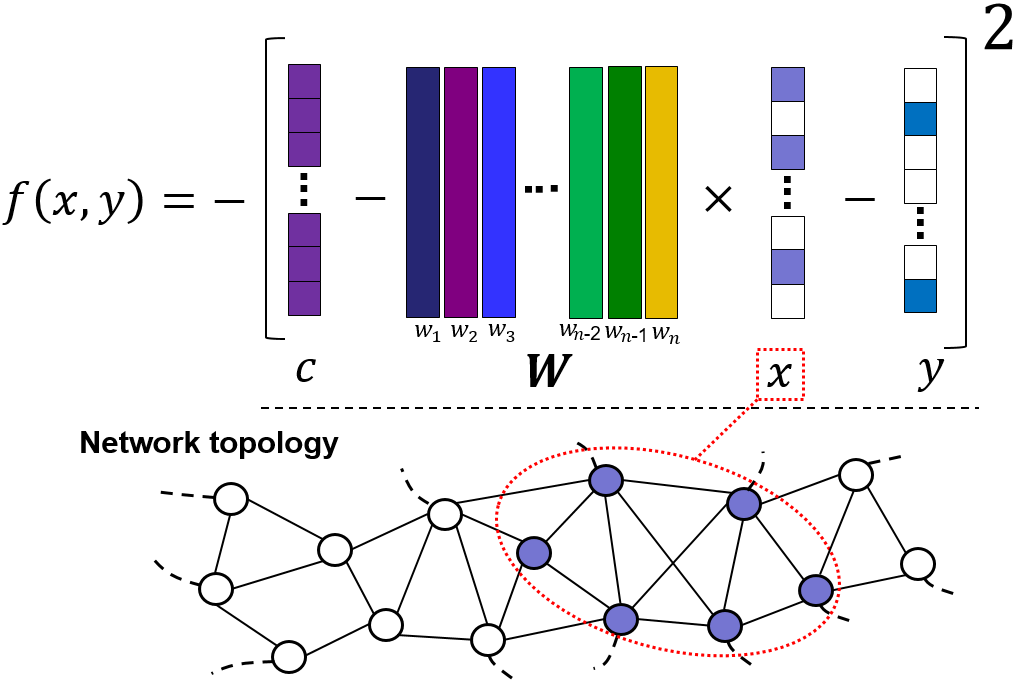}
%\vspace{-6pt}
\captionof{figure}{An example function of $f(x, y)$ (negative squared error function~(\ref{eqn:squared-error})) for robust linear regression models that has been widely used in anomaly detection tasks~\cite{Dang2016icdm, rousseeuw2005robust, tong2011non, shcherbatyi2016convexification}. In this example, the vector $x$ is a vector of sparse coefficients of the nodes in the input network that must satisfy the topological constraints ($\mathbb{M}(k = 6)$): the size of $\text{supp}(x)$ is at most $6$. The residual vector $y$ is a sparse vector as defined by the constraint $\|y\|_0 \le s$ and is used to identify anomalous attributes. }\label{fig:problem-formulation}
\vspace{-10pt}
  \end{figure}
  
As illustrated in Figure~\ref{fig:problem-formulation}, an example score function $f(x, y)$ is a negative squared error function for robust linear regression that has been widely used in anomaly detection tasks~\cite{Dang2016icdm, rousseeuw2005robust, tong2011non, shcherbatyi2016convexification}: 
\begin{eqnarray}
f(x, y) = - \|c - W^\intercal x - y\|_2^2, \label{eqn:squared-error}
\end{eqnarray}
where $x \in C_x := {\mathbb{R}}^n$, $y \in C_y :={\mathbb{R}}^p$, $c \in \mathbb{R}^p$ refers to a vector of observed response values, and $W = [w_1, w_2, \cdots, w_n]^\intercal \in \mathbb{R}^{n \times p}$. The residual vector $y$ is used to identify anomalous attributes, and its sparsity $s$ is usually much smaller than $p$ (the total number of attributes).  There are also applications where both $x$ and $y$ need to be vectors of positive coefficients~\cite{tong2011non}: $C_x := {\mathbb{R}^+}^n$ and  $C_y :={\mathbb{R}^+}^p$. 

\begin{remark}
There are scenarios where $x$ is considered as a vector of binary values, instead of numerical coefficients, and the resulting problem becomes a discrete optimization problem that is NP-hard in general and does not have known solutions. 
In this case, by relaxing the input domain of $x$ from $\{0, 1\}^n$ to the convex set $\mathcal{C}_x := [0, 1]^n$ and replacing the score function $f(x,y)$ with its tight concave surrogate function, the resulting relaxed problem becomes a special case of Problem~(\ref{prob1}). 
In particular, when the cost function is a supermodular function of $x$, a tight concave surrogate function can be obtained based on Lobasz extensions, such that the solutions to the relaxed problem are identical to the solutions to the original discrete optimization problem. In addition, the same equivalence also holds for a number of popular non-convex functions that are non-supermodular, such as Hinge and Squared Hinge functions, and their tight concave surrogate functions have been studied in recent work~\cite{shcherbatyi2016convexification, ballerstein2013convex}. 

\end{remark}

\begin{remark}

Problem (\ref{prob1}) considers the detection of the most interesting subspace cluster in a multi-attributed network. There are applications, where top $k$ most interesting subspace clusters are of interest, where $k$ is predefined by the user.  
In this case, the $k$ clusters can be identified one-by-one, repeatedly, by solving Problem (\ref{prob1}) for each subspace cluster and deflating the attribute data to remove information captured by previously extracted subspace clusters. 
\end{remark}

\subsection{Head and Tail Projections on $\mathbb{M}(k)$}

Before we present our proposed algorithm \texttt{SG-Pursuit}, we first introduce two major components related to the support of the topological constraints ``$\text{supp}(x) \in \mathbb{M}(k)$'', including head and tail projections. The key idea is that, suppose we are able to find a good intermediate solution $x$ that does not satisfy this constraint, these two types of projections will be used to find good approximations of $x$ in the feasible space defined by $\mathbb{M}(k)$. 
\begin{itemize}
\item \textbf{Tail Projection} ($\text{T}(x)$)\cite{hegde2015nearly}: Find a $S \subseteq \mathbb{V}$ such that 
\begin{eqnarray}
\|x - x_S\|_2 \le c_T \cdot \min_{S^\prime \in \mathbb{M}(k)} \|x - x_{S^\prime}\|_2,
\end{eqnarray}
where $c_T\ge 1$, and $x_S$ is the restriction of $x$ to indices in $S$: we have $(x_S)_i = x_i$ for $i \in S$ and $(x_S)_i = 0$ otherwise. When $c_T = 1$, $\text{T}(x)$ returns an optimal solution to the problem: $\min_{S^\prime \in \mathbb{M}(k)} \|x - x_{S^\prime}\|_2$. When $c_T > 1$, $\text{T}(x)$ returns an approximate solution to this problem with the approximation factor $c_T$. 
\item \textbf{Head projection} ($\text{H}(x)$)\cite{hegde2015nearly}: Find a $S \subseteq \mathbb{V}$ such that 
\begin{eqnarray}
\|x_S\|_2 \ge c_H\cdot \max_{S^\prime \in \mathbb{M}(k)} \|x_{S^\prime}\|_2,\label{head-projection}
\end{eqnarray}
where $c_H \leq 1$. When $c_H = 1$, $\text{H}(x)$ returns an optimal solution to the problem: $\max_{S^\prime \in \mathbb{M}(k)} \|x_{S^\prime}\|_2$. When $c_H < 1$, $\text{H}(x)$ returns an approximate solution to this problem with the approximation factor $c_H$. 
\end{itemize}

It can be readily proved that, when $c_T = 1$ and $c_H = 1$,  both $\text{T}(x)$ and $\text{H}(x)$ return the same subset $S$, and the corresponding vector $x_S$ is an optimal solution to the standard projection oracle in the traditional projected gradient descent algorithm~\cite{bahmani2016learning}: 
\begin{eqnarray}
\arg \min_{x^\prime \in \mathbb{R}^n} \|x - x^\prime\|_2 \ \ s.t.\ \  \text{supp}(x^\prime) \in \mathbb{M}(k), \label{projection-oracle} \end{eqnarray}
which is NP-hard in general for popular topological constraints, such as connected subgraphs and dense subgraphs~\cite{qian2014connected}. However, when $c_T> 1$ and $c_H$ < 1, $\text{T}(x)$ and $\text{H}(x)$ return different approximate solutions to the standard projection problem~(\ref{projection-oracle}). Although the head and tail projections are NP-hard problems when $c_T=1$ and $c_H = 1$, these two projections can often be implemented in nearly-linear time when we allow relaxations on $c_T$ and $c_H$: $c_T > 1$ and $c_H < 1$. For example, when the topological constraints considered in $\mathbb{M}(k)$ is that: ``$\mathbb{G}_{S_i}$ is a connected subgraph'', where $S_i$ is a specific cluster of nodes, the resulting head and tail projections can be implemented in nearly-linear time with the parameters: $c_T = \sqrt{7}$ and $c_H = \sqrt{1/14}$~\cite{hegde2015nearly}. The impact of these two parameters on the performance of \texttt{SG-Pursuit} will be discussed in Section~\ref{section:theoretical-analysis}.  

As discussed above, the head and tail projections can be considered as two different approximations to the standard projection problem~(\ref{projection-oracle}). It has been demonstrated that the joint utilization of both head and tail projections is critical in design of approximate algorithms for network-related optimization problems~\cite{hegde2015nearly, hegde2015approximation, chen2016generalized, zhou2016icdm}. 

\subsection{Algorithm Details}
We propose a novel \texttt{S}ubspace \texttt{G}raph-structured matching \texttt{Pursuit} algorithm, namely, \texttt{SG-Pursuit}, to approximately solve Problem (\ref{prob1}) in nearly-linear time. The key idea is to iteratively search for a close-to-optimal solution by solving easier subproblems in each iteration $i$, including i) \underline{\smash{identification}} of the intermediate solution ($b^i_x$, $b^i_y$) that maximizes the score function $f(x, y)$ in a solution-subspace determined by the partial derivatives of the function on the current solution, including $\nabla_x f(x^{i}, y^{i})$ and $\nabla_y f(x^{i}, y^{i})$, and ii) \underline{\smash{projection}} of the  intermediate solution ($b^i_x$, $b^i_y$) to the feasible space defined by the topological constraints: ``$\text{supp}(x) \in \mathbb{M}(k)$'', and the sparsity constraint: ``$\|y\|_0 \le s$''. The projected solution ($x^{i+1}$, $y^{i+1}$) is then the updated intermediate solution returned by this iteration. 

The main steps of \texttt{SG-Pursuit} are shown in Algorithm~\ref{alg:SG-Pursuit}. The procedure generates a sequence of intermediate solutions $(x^0, y^0)$, $(x^1, y^1), \cdots$, from an initial solution $(x^0, y^0)$. At the $i$-th iteration, the first step (\textbf{Line 6}) calculates the partial derivative $\nabla_x f(x^{i}, y^{i})$, and then identifies a subset of nodes via head projection that returns a support set with the head value at least a constraint factor $c_H$ of the optimal head value: ``$\Gamma_x = \text{H}(\nabla_x f(x^{i}, y^{i}))$''. The support set $\Gamma_x$ can be interpreted as the directions  where the nonconvex set ``$\text{supp}(x) \in \mathbb{M}(k)$'' is located, within which pursuing the maximization over $y$ will be most effective. The second step (\textbf{Line 7}) identifies the $2s$ nodes of the partial derivative vector $\nabla_y f(x^{i}, y^{i})$ that have the largest magnitude that are chosen as the directions in which pursuing the maximization on $y$ will be most effective: \[\Gamma_y\ = \argmax_{R \subseteq \{1, \cdots, p\}}\{\|[\nabla_y f(x^i, y^i)]_R\|_2^2: \|R\|_0 \le 2 s\},\] where $[\nabla_y f(x^i, y^i)]_R$ refers to the projected vector in the subspace defined by the subset $R$. Denote by $w$ the projected vector $[\nabla_y f(x^i, y^i)]_R$. We then have $w_i = [\nabla_y f(x^i, y^i)]_i$, the $i$-th entry in the gradient vector $\nabla_y f(x^i, y^i)$, if $i \in R$; otherwise, $w_i = 0$. The subsets $\Gamma_x$ and $\Gamma_y$ are then merged in \textbf{Line 8} and \textbf{Line 9} with the supports of the current estimates ``$\text{supp}(x^i)$'' and ``$\text{supp}(y^i)$'', respectively, to obtain ``$\Omega_x= \Gamma_x\cup \text{supp}(x^i)$'' and ``$\Omega_y= \Gamma_y\cup \text{supp}(y^i)$''. The combined support sets define a subspace of $x$ and $y$ over which the function $f(x,y)$ is maximized to produce an intermediate solution in \textbf{Line 10}: 
\[(b_x^i, b_y^i) = \argmax_{x \in \mathcal{C}_x, y\in \mathcal{C}_y} f(x, y) \ \ s.t. \ \ \text{supp}(x) \subseteq \Omega_x, \text{supp}(y) \subseteq \Omega_y. \]
Then a subset of nodes are identified via tail projection of $b_x^i$ in \textbf{Line 11}: ``$\Psi_x^{i+1} = \text{T}(b_x^i)$'', that returns a support set with the tail value at most a constant $c_T$ times larger than the optimal tail value. A subset of attributes of size $s$ that have the largest magnitude are chosen in \textbf{Line 12} as the subset of relevant attributes: \[\Psi_y^{i+1} = \argmax_{R \subseteq \{1, \cdots, p\}}\{\|[b_y^i]_R\|_2^2: \|R\|_0 \le  s\}.\] As the final steps of this iteration (\textbf{Line 13} and \textbf{Line 14}), the estimates $x^{i+1}$ and $y^{i+1}$ are updated as the restrictions of $b_x^i$ and $b_y^i$ on the support sets $\Psi_x^{i+1}$ and $\Psi_y^{i+1}$, respectively: ``$x^{i+1}\ = [b_x^i]_{\Psi_x^{i+1}}$'' and ``$y^{i+1}\ = [b_y^i]_{\Psi_y^{i+1}}$.'' These steps are conducted to ensure that the estimates $x^{i+1}$ and $y^{i+1}$ returned by each iteration always satisfy the sparsity and topological constraints, respectively. After the termination of the iterations, \textbf{Line 17} identifies the subspace cluster: $\mathcal{C}=(\Psi_x^{i}, \Psi_y^{i})$, where $\Psi_x^{i}$ represents the subset (cluster) of nodes and $\Psi_y^{i}$ represents the subset of relevant attributes.

      \begin{algorithm}[H]
        \caption{\texttt{SG-Pursuit}}
        \begin{algorithmic}[1]
         \STATE \textbf{Input}: Network instance $\mathbb{G}$ and the parameters, including $k$ (maximum number of nodes in the subspace cluster) and $s$ (maximum size of selected features). 
         \STATE \textbf{Output}: The vectors of coefficients of nodes and attributes, including $x^i$ and $y^i$, and the identified subspace cluster $\mathcal{C}$. 
         \STATE $\epsilon = 0.0001$ \ \%  The termination criterium of the iterations
          \STATE $i = 0$; $x^i, y^i = \text{initial vectors}$\;
          \REPEAT 
          \STATE $\Gamma_x\ = \text{H}(\nabla_x f(x^i, y^i))$\; 
          \STATE $\Gamma_y\ = \arg\max_{R \subseteq \{1, \cdots, p\}}\{\|[\nabla_y f(x^i, y^i)]_R\|_2^2: \|R\|_0 \le 2 s\}$\;
          \STATE $\Omega_x= \Gamma_x\cup \text{supp}(x^i)$\; 
          \STATE$\Omega_y= \Gamma_y\cup \text{supp}(y^i)$\;
          \STATE $(b_x^i, b_y^i) = \argmax_{x \in \mathcal{C}_x, y\in \mathcal{C}_y} f(x, y)$ \ \   s.t. \ \  $\text{supp}(x) \subseteq \Omega_x$, $\text{supp}(y) \subseteq \Omega_y$\; 
          \STATE $\Psi_x^{i+1} = \text{T}(b_x^i)$\; 
          \STATE $\Psi_y^{i+1} = \arg\max_{R \subseteq \{1, \cdots, p\}}\{\|[b_y^i]_R\|_2^2: \|R\|_0 \le  s\}$\;
          \STATE $x^{i+1}\ = [b_x^i]_{\Psi_x^{i+1}}$
          \STATE $y^{i+1}\ = [b_y^i]_{\Psi_y^{i+1}}$
          \STATE $i\ \  \ \ \ \ \ = i+1$\;
          \UNTIL{$\|x^{i} - x^{i-1}\|\le \epsilon $ and $\|y^{i} - y^{i-1}\|\le \epsilon $}
          \STATE $\mathcal{C}=(\Psi_x^{i}, \Psi_y^{i})$.
          \STATE \textbf{return}  $x^{i}, y^{i}, \mathcal{C}$     
           \end{algorithmic}\label{alg:SG-Pursuit}
      \end{algorithm}
      
\subsection{Theoretical Analysis}
\label{section:theoretical-analysis}
In order to demonstrate the accuracy and efficiency of \texttt{SG-Pursuit}, we require that the score function $f(x, y)$ satisfies the Restricted Strong Concavity/Smoothness (RSC/RSS) condition as follows: 
\begin{definition}[Restricted Strong Concavity/Smoothness (RSC/RSS)]
A score function $f$ satisfies the $(\mathbb{M}(k), s, \gamma^-, \gamma^+)$-$RSS/RSC$ if, for every $x, x^\prime \in \mathbb{R}^n$ and $y, y^\prime \in \mathbb{R}^p$ with $\text{supp}(x) \subseteq \mathbb{M}(2k)$, $\text{supp}(x^\prime) \subseteq \mathbb{M}(2k)$, $|\text{supp}(y)| \leq 2s$, and $|\text{supp}(y^\prime)| \leq 2s$, the following inequalities hold:
\begin{small}
\begin{eqnarray}
\frac{\gamma^-}{2}\left(\|x - x^\prime\|_2^2 + \|y - y^\prime\|_2^2\right) \le \nonumber \\  f(x, y) - f(x^\prime, y^\prime)  -   \nabla_x f(x, y)^\intercal(x - x^\prime) - \nabla_y f(x, y)^\intercal(y - y^\prime) \le \nonumber \\ \frac{\gamma^+}{2}\left(\|x - x^\prime\|_2^2 + \|y - y^\prime\|_2^2\right).
\label{RSS-RSC}
\end{eqnarray}
\end{small}\label{def:RSS-RSC}
\end{definition}
The RSC/RSS condition basically characterizes cost functions
that have quadratic bounds on the derivative of the objective
function when restricted to the graph-structured vector $x$ and the sparsity-constrained vector $y$. When the score function $f$ is a quadratic function of $x$ and $y$, RSC/RSS condition degeneralizes to the restricted isometry property (RIP) that is well-known in the field of compressive sensing. For example, we consider the negative squared error function~(\ref{eqn:squared-error}) as discussed in  Section~\ref{sect:method-problem-formulation}: $f(x, y) = - \|c - W^\intercal x - y\|_2^2$. Let $\bar{W} =[W^\intercal, I]$, where $I$ is a $p$ by $p$ identity matrix. Let $z = [x^\intercal, y^\intercal]^\intercal$. 
The RSC/RSS condition can then be reformulated as the RIP condition: 
\[
(1 - \delta)\|z\|_2^2 \le \|\bar{W} z\|_2^2 \le (1 + \delta) \|z\|_2^2,
\]
where $\gamma^+ = 2(1 + \delta)$, $\gamma^- =2 (1 - \delta)$, and $\delta\in [0, 1]$ is the standard parameter as defined in RIP. However, the RIP condition in this example \underline{\smash{is different from}} the traditional RIP condition in that the components of $z$, including $x$ and $y$, must satisfy the constraints related to $\mathbb{M}(k)$ and the sparsity $s$ as described in Definition~\ref{def:RSS-RSC}.

\begin{theorem}
If the score function $f$ satisfies the property $(\mathbb{M}(k)$, $s, \gamma^-, \gamma^+)$-$RSS/RSC$, then for any true $(x^*, y^*) \in \mathbb{R}^n\times \mathbb{R}^p$, the iterations of the proposed algorithm \texttt{SG-Pursuit} satisfy the inequality
\begin{eqnarray}
\|r_x^{i+1}\|_2 + \|r_y^{i+1}\|_2 \le \alpha \Big(\|r_x^{i}\|_2 + \|r_y^{i}\|_2 \Big) +  \beta (\varepsilon_x + \varepsilon_y)\nonumber
\end{eqnarray}
where $r_x^{i+1} = x^{i+1} - x^*$, $r_y^{i+1} = y^{i+1} - y^*$,
$\alpha_0 = c_\text{H}(1-\rho) - \rho$, $\rho = \sqrt{1 - (\frac{\gamma^-}{\gamma^+})}$, $\beta_0  = (c_H+1) \frac{\gamma^-}{(\gamma^+)^2}$,  $\alpha = \frac{(c_T + 1) \sqrt{2 - 2\alpha_0^2} }{1 -  \sqrt{2}\rho}$,  $\beta = \frac{(c_T + 1) }{1 -  \sqrt{2}\rho}\Big(\frac{\gamma^-}{(\gamma^+)^2} +\Big(\frac{\sqrt{2}\alpha_0 \beta_0  }{1 - \alpha_0^2} + \frac{\sqrt{2}\beta_0}{\alpha_0}\Big)  \Big)$, $\varepsilon_x = \max_{S \in \mathbb{M}(2k)} \|[\nabla f_x(x^*, y^*)]_S\|_2^2$, and $\varepsilon_y = \max_{R\subseteq\{1, \cdots, p\}, |R| \le 3s } \|[\nabla f_y(x^*, y^*)]_R\|_2^2$. \label{theorem:accuracy}
\end{theorem}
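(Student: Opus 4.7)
The plan is to mirror the convergence analyses developed for graph-structured matching pursuit (\texttt{Graph-Mp}/\texttt{GraSP}-style algorithms) but extend them to the coupled pair $(x,y)$ governed by the joint RSC/RSS condition of Definition~\ref{def:RSS-RSC}. I would concatenate $z^i = [(x^i)^\intercal,(y^i)^\intercal]^\intercal$ and $z^* = [(x^*)^\intercal,(y^*)^\intercal]^\intercal$, writing the joint residual as $r^i = z^i - z^*$ so that $\|r_x^i\|_2 + \|r_y^i\|_2 \le \sqrt{2}\,\|r^i\|_2$. The strategy is to walk through Algorithm~\ref{alg:SG-Pursuit} line by line and bound how much residual energy survives each step: the support-identification block (Lines~6--9) produces $\Omega_x\cup\Omega_y$ that should capture most of the ``missing'' residual; the inner maximization (Line~10) makes $(b_x^i,b_y^i)$ near-optimal on that enlarged support; and the two projection/thresholding steps (Lines~11--14) pull the result back into the feasible set while losing at most a factor $c_T+1$.

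The three lemmas I would prove in order are as follows. \textbf{(i) Tail step.} Using $\|x^{i+1}-b_x^i\|_2 \le c_T\|b_x^i - x^*\|_2$ from the tail projection and the exact top-$s$ guarantee $\|y^{i+1}-b_y^i\|_2 \le \|b_y^i - y^*\|_2$, triangle inequality gives $\|r_x^{i+1}\|_2 + \|r_y^{i+1}\|_2 \le (c_T+1)\bigl(\|b_x^i-x^*\|_2 + \|b_y^i-y^*\|_2\bigr)$. \textbf{(ii) Inner maximization.} Because $(b_x^i,b_y^i)$ maximizes $f$ over $\Omega_x\times\Omega_y$, its gradient vanishes on those coordinates; plugging into the RSC inequality of (\ref{RSS-RSC}) with the reference point $(x^*,y^*)_{|\Omega_x\cup\Omega_y}$, and using Cauchy--Schwarz on the gradient noise at the true optimum, bounds $\|b_x^i - x^*\|_2^2 + \|b_y^i - y^*\|_2^2$ in terms of $\|r^i - r^i_{|\Omega_x\cup\Omega_y}\|_2^2$ plus the noise $\varepsilon_x + \varepsilon_y$; this is where the coefficient $\beta_0=(c_H+1)\gamma^-/(\gamma^+)^2$ enters. \textbf{(iii) Head step.} The head projection guarantee (\ref{head-projection}) for $\Gamma_x$ and the exact top-$2s$ selection for $\Gamma_y$ together imply that $\Omega_x\cup\Omega_y$ captures, up to the factor $c_H$, the gradient energy on $S^*\cup R^*$. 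Converting gradient capture into residual capture via the combined RSC/RSS inequality — which is equivalent to an upper bound on $\rho = \sqrt{1-\gamma^-/\gamma^+}$ — yields $\|r^i - r^i_{|\Omega_x\cup\Omega_y}\|_2 \le \alpha_0\,\|r^i\|_2 + O(\varepsilon_x + \varepsilon_y)$ with $\alpha_0 = c_H(1-\rho)-\rho$. Substituting (iii) into (ii) into (i) and splitting $\|r^i\|_2 \le \|r_x^i\|_2+\|r_y^i\|_2$ with the $\sqrt{2}$ factors that arise from norm-sum vs.\ joint-norm conversions gives exactly the claimed $\alpha$ and $\beta$.

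The main obstacle will be step~(iii): the head guarantee of (\ref{head-projection}) is stated only for $\nabla_x f$ in a single-block sense, whereas I need a capture statement for the concatenated gradient on the admissible set $\mathbb{M}(2k)\times\{R:|R|\le 2s\}$. I expect to handle this by treating the $x$- and $y$-coordinates as two disjoint blocks in $z$, exploiting that the $y$-side head projection (top-$2s$) is exact with $c_H=1$, and then arguing blockwise: any vector supported on $(S^*\cup R^*)\setminus(\Omega_x\cup\Omega_y)$ splits into an $x$-part whose missing mass is controlled by the head approximation factor $c_H$ and a $y$-part whose missing mass is zero. The delicate algebra — tracking how $\rho$ and $c_H$ combine and ensuring the $1-\sqrt{2}\rho$ denominator in $\alpha,\beta$ is positive (i.e.\ $\gamma^+/\gamma^-<2$ up to the $c_H$-dependent factor) — is where I anticipate most of the effort; once that geometric statement is in place, the rest of the proof is a routine triangle-inequality bookkeeping that collapses to the displayed recursion.
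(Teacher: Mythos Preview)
Your three-step decomposition (tail, inner maximization, head) matches the paper's proof, and steps (i) and (ii) are essentially what the paper does. The real gap is in step~(iii). What the head projection together with the RSC/RSS conversion gives you is a \emph{lower} bound on the \emph{captured} residual,
\[
\bigl\|\bigl[[r_x^i]_{\Gamma_x},[r_y^i]_{\Gamma_y}\bigr]\bigr\|_2 \;\ge\; \alpha_0\,\|[r_x^i,r_y^i]\|_2 - \beta_0(\varepsilon_x+\varepsilon_y),
\]
not the upper bound $\|r^i_{|\Omega^c}\|_2 \le \alpha_0\|r^i\|_2 + \cdots$ that you wrote. The coefficient on the missed residual in the final recursion is $\sqrt{1-\alpha_0^2}$, and this is precisely what produces the $\sqrt{2-2\alpha_0^2}$ in the stated $\alpha$. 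Passing from ``captured $\ge\alpha_0$'' to ``missed $\le\sqrt{1-\alpha_0^2}$'' is not a triangle-inequality step: it uses the Pythagorean identity $\|r^i_{\Gamma^c}\|_2^2 = \|r^i\|_2^2 - \|r^i_{\Gamma}\|_2^2$, a two-case split on whether $\alpha_0\|r^i\|_2$ dominates $\beta_0(\varepsilon_x+\varepsilon_y)$, and the scalar bound $\sqrt{1-\omega_0^2}\le (1-\omega\omega_0)/\sqrt{1-\omega^2}$ optimized at $\omega=\alpha_0$ (the paper borrows this from the ADMiRA analysis). If you substitute your version of~(iii) into (ii) and (i) you will get $\alpha_0$ where $\sqrt{1-\alpha_0^2}$ is needed, and the constants will not match the statement.

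A smaller bookkeeping point: $\beta_0=(c_H+1)\gamma^-/(\gamma^+)^2$ enters in step~(iii), not~(ii). In the inner-maximization step the noise coefficient is only $\gamma^-/(\gamma^+)^2$; the factor $c_H+1$ appears when you sandwich $\|[\nabla_x f]_{\Gamma_x},[\nabla_y f]_{\Gamma_y}\|_2$ between the head-capture lower bound on the true-support gradient and the RSS-based upper bound, which is the core of step~(iii).
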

\begin{proof}
See the Appendix~\ref{proof-theorem-2-2} for details. 
\end{proof}

\begin{theorem}
Let $(x^*, y^*)$ the optimal solution to Problem~(\ref{prob1}) and $f$ be a score function that satisfies the $(\mathbb{M}(k), s, \gamma^-, \gamma^+)$-$RSS/RSC$ property. Let $T$ and $H$ be the tail and head projections with $c_T$ and $c_H$ such that $0 < \alpha < 1$. Then after $t = \left \lceil \log \left(\frac{\|x^*\|_2 + \|y^*\|_2}{\varepsilon_x + \varepsilon_y}\right) / \log \frac{1}{\alpha} \right \rceil $ iterations, \texttt{SG-Pursuit} returns a single estimate $(\hat{x}, \hat{y})$ satisfying 
\begin{eqnarray}\|\hat{x} - x^*\|_2 + \|\hat{y} - y^*\|_2 \le c(\varepsilon_x + \varepsilon_y),\label{error-bound}\end{eqnarray}
where  $c = (1 + \frac{\beta}{1-\alpha})$ is a fixed constant. Moreover, \texttt{SG-Pursuit} runs in time 
\begin{eqnarray}
O\left((T_1 + T_2 + p\log p) \log \Big((\|x^*\|_2 + \|y^*\|_2)/(\varepsilon_x + \varepsilon_y)\Big)\right), \label{time-complexity1}
\end{eqnarray}
where $T_1$ is the time complexity of one execution of the subproblem in Line 10 in \texttt{SG-Pursuit} and $T_2$ is the time complexity of one execution of the head and tail projections. 

In particular, when the connectivity constraint or a density constraint  is considered as the topological constraint on the feasible clusters of nodes in $\mathbb{M}(k)$, there exist efficient algorithms for the head and tail projections that have the time complexity $O(|\mathbb{E}| \log^3 n)$~\cite{hegde2015nearly}. When $s$ and $k$ are fixed small constants with respect to $n$, the subproblem in Line 10 in \texttt{SG-Pursuit} can be solved in nearly linear time  in practice using convex optimization algorithms, such as the project gradient descent algorithm. Therefore, under these conditions, for coherent dense subgraph detection and connected anomalous subspace cluster detection problems, \texttt{SG-Pursuit} has a nearly-linear time complexity on the network size $n$ and the cardinality of attributes $p$:  \begin{eqnarray}O\left((|\mathbb{E}| \log^3 n + p\log p) \log \left((\|x^*\|_2 + \|y^*\|_2)/(\varepsilon_x + \varepsilon_y)\right)\right).  \label{time-complexity2}\end{eqnarray}\label{theorem:convergence}
  
\end{theorem}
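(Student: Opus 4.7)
The plan is to derive the error bound by unrolling the one-step contraction supplied by Theorem~\ref{theorem:accuracy}, and to obtain the running time by costing each line of Algorithm~\ref{alg:SG-Pursuit} and multiplying by the resulting iteration count. Write $e_i := \|r_x^i\|_2 + \|r_y^i\|_2$ and $\varepsilon := \varepsilon_x + \varepsilon_y$, so that Theorem~\ref{theorem:accuracy} becomes the scalar recursion $e_{i+1} \le \alpha e_i + \beta\varepsilon$. Because $\alpha \in (0,1)$ by hypothesis, a routine geometric-series unrolling yields
\[
e_t \;\le\; \alpha^t e_0 \;+\; \beta\varepsilon \sum_{j=0}^{t-1}\alpha^j \;\le\; \alpha^t e_0 \;+\; \frac{\beta}{1-\alpha}\,\varepsilon .
\]
With the natural initialization $(x^0,y^0)=(0,0)$ we have $e_0 \le \|x^*\|_2 + \|y^*\|_2$, so imposing $\alpha^t e_0 \le \varepsilon$ gives $e_t \le (1 + \beta/(1-\alpha))\varepsilon = c\varepsilon$, which is precisely (\ref{error-bound}). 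Solving $\alpha^t(\|x^*\|_2 + \|y^*\|_2) \le \varepsilon$ for $t$ and taking a ceiling produces the iteration count stated in the theorem.

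For the generic running time I walk through the body of the loop once. The gradient evaluations on Lines~6--7 cost whatever $f$ demands and are absorbed into $T_1$. The head projection on Line~6 and the tail projection on Line~11 together cost $T_2$ by definition. The top-$2s$ and top-$s$ selections on a vector in $\mathbb{R}^p$ (Lines~7 and~12) each cost $O(p\log p)$ via a single sort (or $O(p)$ via quickselect). The set operations and restrictions on Lines~8--9 and~13--14 are linear in $n$ or $p$. The restricted maximization on Line~10 takes $T_1$ by assumption. Summing over one iteration and multiplying by the iteration count established above yields (\ref{time-complexity1}). For the specialized bound (\ref{time-complexity2}) I import $T_2 = O(|\mathbb{E}|\log^3 n)$ for the connectivity and density head/tail projections directly from \cite{hegde2015nearly}; then, when $k$ and $s$ are small constants, Line~10 is a low-dimensional concave program on active sets of sizes $O(k)$ and $O(s)$, whose curvature inherits $\gamma^-,\gamma^+$ from Definition~\ref{def:RSS-RSC} (both supports lie inside the regime on which RSC/RSS is postulated), so projected gradient descent converges geometrically with per-step cost dominated by a single gradient evaluation of $f$, which for the instance (\ref{eqn:squared-error}) is $O(|\mathbb{E}| + p)$. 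Plugging $T_1 = \tilde O(|\mathbb{E}| + p)$ and $T_2 = O(|\mathbb{E}|\log^3 n)$ into (\ref{time-complexity1}) collapses to (\ref{time-complexity2}).

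The main obstacle is precisely the nearly-linear-time claim for Line~10. The difficulty is that $f$ is only assumed concave/smooth on structured supports, so to apply an off-the-shelf first-order solver one must verify that the restriction of $f$ to the active sets $\Omega_x,\Omega_y$ produced by the head projection and the top-$2s$ selection inherits the condition number $\gamma^+/\gamma^-$ without degradation, and that the feasible set $\mathcal{C}_x \times \mathcal{C}_y$ intersected with these active coordinates admits an $O(|\mathbb{E}|+p)$-time Euclidean projection. Once those two points are settled, the remainder of the argument is bookkeeping: the error bound is a direct geometric-series unrolling of Theorem~\ref{theorem:accuracy}, the top-$s$ selections are handled by a sort, and the head/tail projection complexities are quoted verbatim from \cite{hegde2015nearly}.
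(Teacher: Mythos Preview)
Your proposal is correct and follows essentially the same approach as the paper: unroll the one-step contraction of Theorem~\ref{theorem:accuracy} into a geometric series, bound the tail by $\frac{\beta}{1-\alpha}\varepsilon$, choose $t$ so that $\alpha^t(\|x^*\|_2+\|y^*\|_2)\le\varepsilon$, and then cost each line of the loop (with $O(p\log p)$ for the top-$s$ selections via sorting and $T_2$ quoted from \cite{hegde2015nearly}). The only substantive difference is that you supply more justification for the nearly-linear cost of Line~10 than the paper does---the paper simply asserts that projected gradient descent suffices when $k,s$ are small, whereas you note that one must check the restricted problem inherits the RSC/RSS condition number and that the projection onto $\mathcal{C}_x\times\mathcal{C}_y$ restricted to $\Omega_x\times\Omega_y$ is cheap; this extra care is warranted but does not change the argument's structure.
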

\begin{proof}
From Theorem~\ref{theorem:accuracy}, the following inequality can be obtained via an inductive argument:
\[\|x^i - x^*\|_2 + \|y^i - y^*\|_2 \le \alpha^i (\|x^*\|_2 + \|y^*\|_2) +  \beta (\varepsilon_x + \varepsilon_y)\sum_{j=0}^i \alpha^i.\]
For $i = \left \lceil \log \left(\frac{\|x^*\|_2 + \|y^*\|_2}{\varepsilon_x + \varepsilon_y}\right) / \log \frac{1}{\alpha} \right \rceil $, we have $\alpha^i (\|x^*\|_2 + \|y^*\|_2) \le (\varepsilon_x + \varepsilon_y)$. The geometric series $\sum_{j=0}^i \alpha^i$ can be bounded by $\frac{1}{1 - \alpha}$. The error bound~(\ref{error-bound}) can be obtained by coming the preceding inequalities. The time complexity of the subproblem in Line 10 is denoted by $O(T_1)$, and the \underline{\smash{time complexities}} of both head and tail projections are denoted by $O(T_2)$.  The time complexity to solve the subproblem in Line 7 is $O(p\log p)$, as the exact solution can be obtained by sorting the entries in $\nabla_y f(x^i, y^i)$ in a descending order based their absolute values, and then returning the indices of the top $2s$ entries. Similarly, the time complexity to solve the subproblem in Line 12 is $O(p\log p)$. As the total number of iterations is $O\log (\frac{\|x^*\|_2 + \|y^*\|_2}{\varepsilon_x + \varepsilon_y})$, the time complexity specified in Equation~(\ref{time-complexity1}) can be calculated. accordingly. When $T_1$ is bounded by $O(n \log n)$ and $T_2 = |\mathbb{E}| \log^3 n$, the nearly-linear time complexity specified in Equation~(\ref{time-complexity2}) can be obtained. 
\end{proof} 

Theorem~\ref{theorem:convergence} shows that \texttt{SG-Pursuit} enjoys  \underline{\smash{a geometric rate of}}  \underline{\smash{convergence}} and the estimation error is determined by the multiplier of $(\varepsilon_x + \varepsilon_y)$, where $\varepsilon_x = \max_{S \in \mathbb{M}(8k)} \|[\nabla f_x(x^*, y^*)]_S\|_2^2$, and $\varepsilon_y = \max_{R\subseteq\{1, \cdots, p\}, |R| \le 2s } \|[\nabla f_y(x^*, y^*)]_R\|_2^2$. The shrinkage rate $\alpha < 1$ controls the converge rate of $\texttt{SG-Pursuit}$. In particular, if the true $x^*$ and $y^*$ are sufficiently close to an unconstrained maximum of $f$, then the estimation error is negligible because both $\varepsilon_x$ and $\varepsilon_y$ have small magnitudes. Especially, in the idea case where $\varepsilon_x = \varepsilon_y = 0$, it is guaranteed that we can obtain the true $x$ and $y$ to arbitrary precisions. Note that we can make $\gamma^+$ and $\gamma^-$ as close as we desire, such that $\gamma^+/ \gamma^- \approx 1$, since this assumption only affects the measurement bound by a constraint factor. In this case, in order to ensure that $\alpha \approx (c_T + 1)\sqrt{2 - 2 c_H^\intercal}< 1$, the factors $c_T$ and $c_H$ should satisfy the inequality: $c_H^2 > 1 - 1/\left(2(1+c_T)^2\right)$. As proved in~\cite{hegde2015approximation}, the fixed factor $c_H$ of any given algorithm for the head projection can be boosted to any arbitrary constant $c_H^\prime < 1$, such that the above condition can be satisfied. This indicates the flexibility of designing approximate algorithms for head and tail projections in order to ensure the geometric convergence rate of \texttt{SG-Pursuit}. 

\begin{remark}(Connections to existing methods) \texttt{SG-Pursuit} is a generalization of the \texttt{GraSP} (Gradient Support Pursuit) method~\cite{bahmani2013greedy}, a state-of-the-art method for general sparsity-constrained optimization problems, and the \texttt{Graph-MP} method~\cite{chen2016generalized}, a state-of-the-art method for general graph-structured sparse optimization problems. In particular, when we fix $x$ and only update $y$ in the steps of \texttt{SG-Pursuit}, \texttt{SG-Pursuit} then degeneralizes to \texttt{GraSP}. When we fix $y$ and only update $x$ in the steps of \texttt{SG-Pursuit}, \texttt{SG-Pursuit} then degeneralizes to \texttt{Graph-MP}. Surprisingly, even that \texttt{SG-Pursuit} concurrently optimizes $x$ and $y$, its convergence rate is of  \underline{\smash{the same order}} as those of \texttt{Graph-MP} and \texttt{GraSP} under the RSC/RSS property.

\end{remark}
 
\section{Example Applications}
\label{sect:applications}
In this section, we specialize \texttt{SG-Pursuit} to address two typical subspace cluster detection problems in multi-attributed networks, including coherent dense subspace cluster detection and anomalous connected subspace cluster detection. The former searches for subsets of nodes that show high similarity in subsets of their attributes and that are as well densely connected within the input network. The \underline{\smash{coherence score}} function, as shown in Table~\ref{table:scoreFunctions}, is defined as the log likelihood ratio function, $\log \frac{\text{Prob}\left(\text{Data}|H_1(x,y)\right)}{\text{Prob}(\text{Data}|H_0)}$, that corresponds to the hypothesis testing framework: 
\begin{itemize}
\item Under the null ($H_0$), $w_{i, j} \sim \mathcal{N}(0, 1), \forall i \in \mathbb{V}, j \in \{1, \cdots, p\}$, where $w_{i, j}$ refers to the observed value of the $j$-th attribute of node $i$;
\item Under the alternative $H_1(x, y)$, $w_{i, j} \sim \mathcal{N}(\mu_j, 1)$, if $x_i = 1$ and $y_j = 1$; otherwise, $w_{i, j} \sim \mathcal{N}(0, \sigma)$, where $x\in \{0, 1\}^n$, $y\in \{0, 1\}^p$, and $x_i = 1$ indicates that node $i$  belongs to the cluster, $y_j = 1$ indicates that the attribute $j$ belongs to the subset of coherent attributes. Each coherent attribute $j$ has a different mean parameter $\mu_j$ and the variance $\sigma$ should be less than 1, the variance of an incoherent attribute, in order to ensure the coherence of its observations. $\sigma$ is set 0.01 by default. 
\end{itemize}
The latter (anomalous connected subspace cluster detection) searches for subsets of nodes that are significantly different from the other nodes on subsets of their attributes and that are as well connected within the input network. The \underline{\smash{elevated mean scan statistic}}, as shown in Table~\ref{table:scoreFunctions}, is defined as the log likelihood ratio function that corresponds to a hypothesis testing framework that is the same as the above, except that 1) ``coherent'' is replace by ``anomalous'', 2) the mean of each anomalous attribute is greater than (or more anomalous than) 0, the mean of a normal attribute, and the standard deviation of each anomalous attribute is set to 1 (the same variance of a normal attribute). The \underline{\smash{Fisher test statistic}} function is considered when each $w_{i,j}$ represents the level of anomalous (e.g., negative log p-value) of the $j$-th attribute of node $i$, and $x^\intercal W y$ represents the overall level of anomalous of $x$ and $y$. A large class of scan statistic functions for anomaly detection can be transformed to the fisher test statistic function using a 2-step procedure as proposed in~\cite{qian2014connected}. 
The \underline{\smash{negative squared error}} is considered as the score function of anomalous subspace cluster detection in a regression setting and is introduced in Section~\ref{sect:method-problem-formulation}.  

\begin{theorem}
When the attribute matrix $W$ satisfies certain properties, the score functions, including the elevated mean scan statistic, the Fisher's test statistic, the negative square error, and the logistic function, satisfy the RSC/RSS property as described in Definition~\ref{def:RSS-RSC}.  \label{theorem-score-functions}
\end{theorem}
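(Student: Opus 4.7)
The plan is to reduce the RSC/RSS condition to a restricted eigenvalue bound on the Hessian and then invoke a structured RIP of the augmented matrix $\bar{W} = [W^\intercal, I] \in \mathbb{R}^{p\times(n+p)}$ introduced in Section~\ref{sect:method-problem-formulation}. For any $C^2$ score function $f$ and admissible pairs $(x,y),(x',y')$, Taylor's theorem with integral remainder expresses the centered quantity in~(\ref{RSS-RSC}) as
\[
\tfrac{1}{2}\int_0^1 (1-t)\,\big\langle -\nabla^2 f(\xi_t,\eta_t)\,z,\ z\big\rangle\, dt,
\]
where $z=((x-x')^\intercal,(y-y')^\intercal)^\intercal$ and $(\xi_t,\eta_t)$ is on the segment joining the two points. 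Under the hypothesis of Definition~\ref{def:RSS-RSC}, the $x$-block of $z$ is supported on some set in $\mathbb{M}(2k)$ and the $y$-block is $2s$-sparse. It thus suffices to exhibit $0<\gamma^-\le\gamma^+$ such that $\gamma^-\|z\|_2^2\le \langle -\nabla^2 f(\xi,\eta)z,z\rangle\le\gamma^+\|z\|_2^2$ uniformly over admissible $(\xi,\eta)$ and structured $z$.

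For the negative squared error $f(x,y)=-\|c-W^\intercal x-y\|_2^2$, the Hessian is the constant matrix $-2\bar{W}^\intercal\bar{W}$, so the two inequalities collapse precisely to the structured RIP $(1-\delta)\|z\|_2^2\le\|\bar{W}z\|_2^2\le(1+\delta)\|z\|_2^2$ of Section~\ref{sect:method-problem-formulation}, restricted to the product model class $\mathbb{M}(2k)\times\{y:\|y\|_0\le 2s\}$. The elevated-mean scan statistic is also a concave quadratic in $(x,y)$ after substituting the Gaussian density into the log-likelihood ratio, so the same reduction applies verbatim with a different scaling. The Fisher test statistic $x^\intercal W y$ is bilinear with an indefinite Hessian; I would treat it by passing to a regularized surrogate (or, equivalently, restricting to one orthant as hinted by the $\mathcal{C}_x,\mathcal{C}_y$ setup) so that an analogous restricted singular-value bound on $W$ delivers the two-sided estimate.

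For the logistic (and any GLM-type) score, the Hessian has the generalized-linear form $-\bar{W}^\intercal D(\xi,\eta)\bar{W}$, where $D$ is diagonal with entries equal to the link function's second derivative evaluated at the linear predictor $\bar{W}(\xi,\eta)$. Provided that the feasible set $\mathcal{C}_x\times\mathcal{C}_y$ is bounded and $(\xi,\eta)$ is structured, the linear predictor stays in a compact interval on which the link's second derivative is pinched between positive constants $\lambda_{\min}$ and $\lambda_{\max}$. Sandwiching $D$ by these constants and reapplying the structured RIP of $\bar{W}$ yields $\gamma^-=2\lambda_{\min}(1-\delta)$ and $\gamma^+=2\lambda_{\max}(1+\delta)$.

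The principal obstacle will be establishing the joint, model-based RIP of $\bar{W}$ on the product class $\mathbb{M}(2k)\times\{y:\|y\|_0\le 2s\}$, since classical RIP arguments handle only plain sparse supports. I would follow the model-based compressed-sensing framework of Baraniuk et al.: cover each factor by an $\varepsilon$-net (using the structured covering number of $\mathbb{M}(2k)$, which is polynomial in $n$ for graph topologies like connectedness or bounded density, together with the usual $\binom{p}{2s}$ bound for the sparse piece), take a union bound, and conclude that any sub-Gaussian design with properly normalized rows satisfies the joint RIP with sample complexity of order $\log|\mathbb{M}(2k)|+s\log(p/s)$. A secondary technical point is pinning down what ``certain properties'' of $W$ mean precisely, which will at minimum include row-normalization and sub-Gaussian concentration, plus an explicit boundedness hypothesis on $\mathcal{C}_x,\mathcal{C}_y$ in the logistic and scan cases to prevent $\lambda_{\min}$ from degenerating.
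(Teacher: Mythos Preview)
Your overall framework---reduce RSC/RSS to two-sided bounds on the restricted Hessian quadratic form---is exactly what the paper does, but several of your specific reductions misread the functional forms in Table~\ref{table:scoreFunctions}, and this causes real gaps.

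First, every score function in the theorem carries the Tikhonov term $-\tfrac{1}{2}\|x\|_2^2-\tfrac{1}{2}\|y\|_2^2$. This is not an afterthought: it is the mechanism that delivers $\gamma^->0$. For the Fisher statistic the paper computes the centered quantity directly as $-\Delta_x^\intercal W\Delta_y+\tfrac{1}{2}\|\Delta_x\|_2^2+\tfrac{1}{2}\|\Delta_y\|_2^2$ and uses Young's inequality together with the deterministic hypotheses $WW^\intercal\preceq b^0 I_n$ and $W^\intercal W\preceq b^1 I_p$ (with $b^0,b^1<1$) to get $\gamma^-=\min\{1-b^0,1-b^1\}$ and $\gamma^+=2$. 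Your vague ``pass to a regularized surrogate'' is already built in; no orthant restriction or restricted-singular-value machinery is needed.

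Second, the elevated mean statistic $x^\intercal Wy/\sqrt{x^\intercal\mathbf{1}}$ is \emph{not} quadratic in $(x,y)$, so your ``same reduction applies verbatim'' fails. The paper's device is to add the constraint $\mathbf{1}^\intercal x=r$ for the (unknown but enumerable) true sparsity $r$, after which the score becomes $x^\intercal(W/\sqrt{r})y$ plus regularization and reduces to the Fisher case.

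Third, the logistic score here is \emph{linear} in $y$: it is $\sum_i\big(y_i\log g(x^\intercal w_i)+(1-y_i)\log(1-g(x^\intercal w_i))\big)$, not a GLM log-likelihood in the joint variable. Consequently the Hessian is not of the form $\bar{W}^\intercal D\bar{W}$; one has $\nabla_y^2 f=-I_p$ from the regularizer, $\nabla_{x,y}^2 f=W$, and only $\nabla_x^2 f$ carries the sigmoid-weighted Gram term. Your $\lambda_{\min}$-sandwich on $D$ would give the wrong $\nabla_y^2$ block and does not produce a valid lower bound.

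Finally, the ``certain properties'' of $W$ are purely deterministic spectral bounds ($WW^\intercal\preceq b^0 I$, $W^\intercal W\preceq b^1 I$, with $b^0,b^1<1$ where needed), not a probabilistic structured RIP. Your sub-Gaussian/covering-number program is a plausible alternative route to \emph{establish} such bounds for random designs, but it is both stronger and unnecessary here: once the spectral bounds are assumed, each case is a few lines of algebra with the triangle and Young inequalities.
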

\begin{proof}
See the Appendix~\ref{proof-theorem-3.1} for details about the specific properties of $W$ and parameters ($\gamma^+$ and $\gamma^-$) of the RSC/RSS property for each score function.  
\end{proof}
Theorem~\ref{theorem-score-functions} demonstrates that the theoretical guarantees of \texttt{SG-Pursuit} as analyzed in Section~\ref{section:theoretical-analysis} are applicable to a number of popular score functions for subspace cluster detection problems. We note that \texttt{SG-Pursuit} also performs well in practice on the score functions not satisfying the RSC/RSS property as demonstrated in Section~\ref{sect:experiments} using the coherence score function as shown in Table~\ref{table:scoreFunctions}. 

 \begin{figure*}[t]
\center
\includegraphics[width=\textwidth]{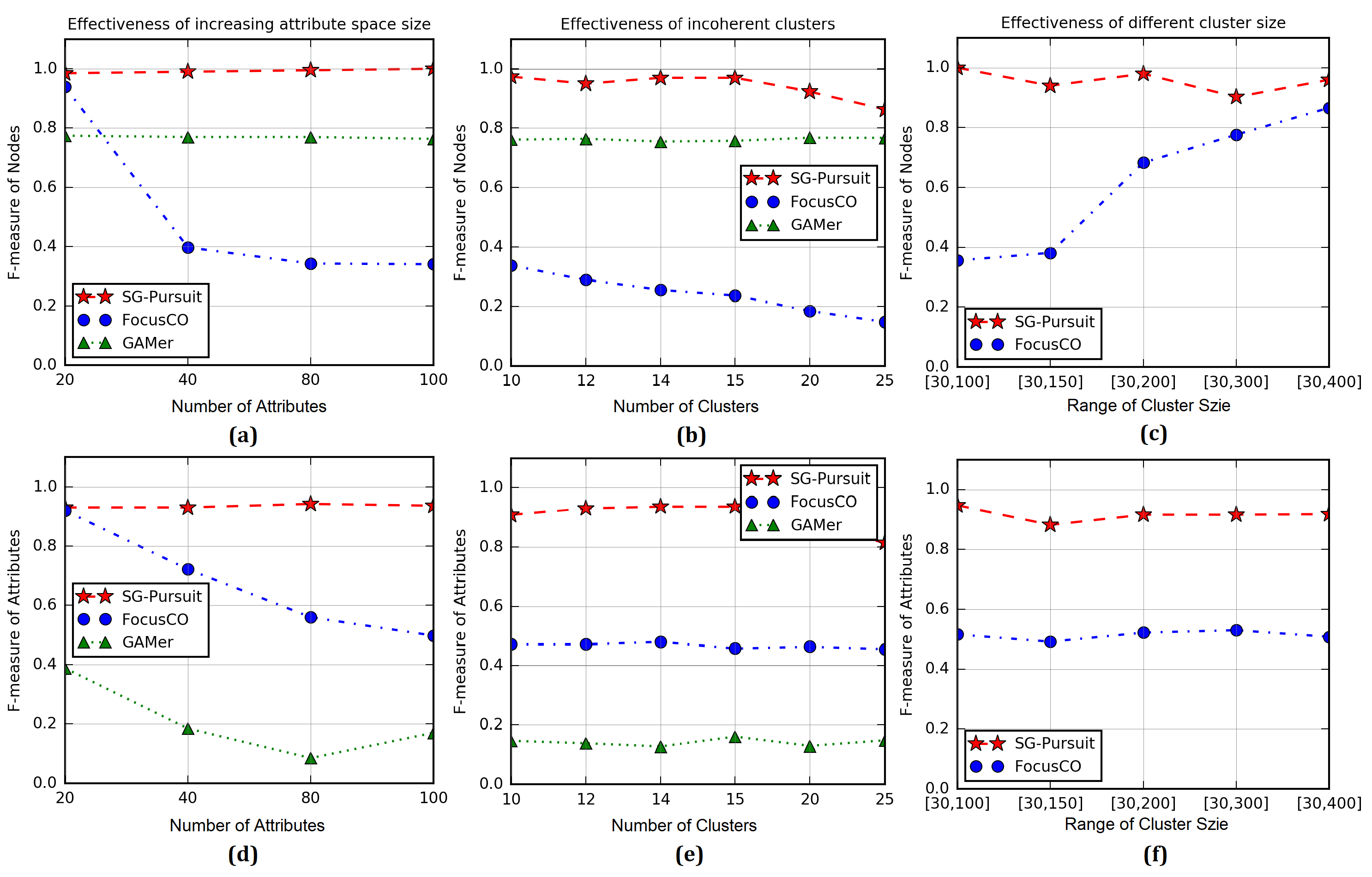}
\vspace{-8mm}
\captionof{figure}{Comparison on {\color{red}F-measures} of detected nodes (first line) and of detected attributes (second line): (a) and (d) for changing the total number of attributes (with 10 coherent attributes); (b) and (e) for increasing number of clusters (with one coherent cluster); and (c) and (f) for changing cluster size variance (graph has variable size clusters), and measurements of \texttt{Gamer} are not shown in these two sub-figures as \texttt{Gamer} is unscalable to detection of clusters of size above 30. } \label{fig:dense-coherent-subgraphs-fscore}
\vspace{-12pt}  
\end{figure*}

\begin{table}[!ht]
\vspace{-3mm}
\caption{Example score (interestingness) functions.}
\vspace{-2mm}
\centering
\resizebox{\columnwidth}{!}{%
\begin{threeparttable}
\begin{tabular}{ | p{2.8cm}|p{6.1cm}|}
\hline
 \textbf{Score Function}\tnote{1} & \textbf{Definition}  \\ 
\hline
 Coherence score& $ x^\intercal \Big(W \odot W\Big) y  - \frac{1}{0.01} x^\intercal \Big(W - 1 \frac{x^\intercal W}{1^\intercal x}\Big) \odot \Big(W - 1 \frac{x^\intercal W}{1^\intercal x}\Big) y  
   - \frac{1}{2} \|x\|_2^2 - \frac{1}{2} \|y\|_2^2$\\  \hline
  Elevated mean scan static &  $x^\intercal Wy / \sqrt{x^\intercal \textbf{1}} - \frac{1}{2} \|x\|_2^2 - \frac{1}{2} \|y\|_2^2 $\\ \hline
 Fisher's test statistic & $x^\intercal Wy - \frac{1}{2} \|x\|_2^2 - \frac{1}{2} \|y\|_2^2 $
 \\ \hline
 Negative square error & $- \|c - W^\intercal x - y\|_2^2 - \frac{1}{2} \|x\|_2^2 - \frac{1}{2} \|y\|_2^2$ \\ \hline
 Logistic function & $\sum_{i=1}^p \Big(y_i\log g(x^\intercal w_i ) + (1-y_i)\log(1 - g(x^\intercal w_i ))\Big) - \frac{1}{2} \|x\|_2^2 - \frac{1}{2} \|y\|_2^2$
  \\ \hline

 \end{tabular}
 \begin{tablenotes}
    \item[1] The L2 regularization component ``$- \frac{1}{2} \|x\|_2^2 - \frac{1}{2} \|y\|_2^2$'' is considered in each score function to enforce the stability of maximizing the score function. $W = [w_1, \cdots, w_n]$, $\odot$ is a Hadamard product operator, and $g(z) = 1/(1 + e^{-z})$. 
  \end{tablenotes}
 \end{threeparttable}}
\label{table:scoreFunctions}
\vspace{-3mm}
\end{table}

\begin{figure}[t]
\center
\includegraphics[width=8.5cm]{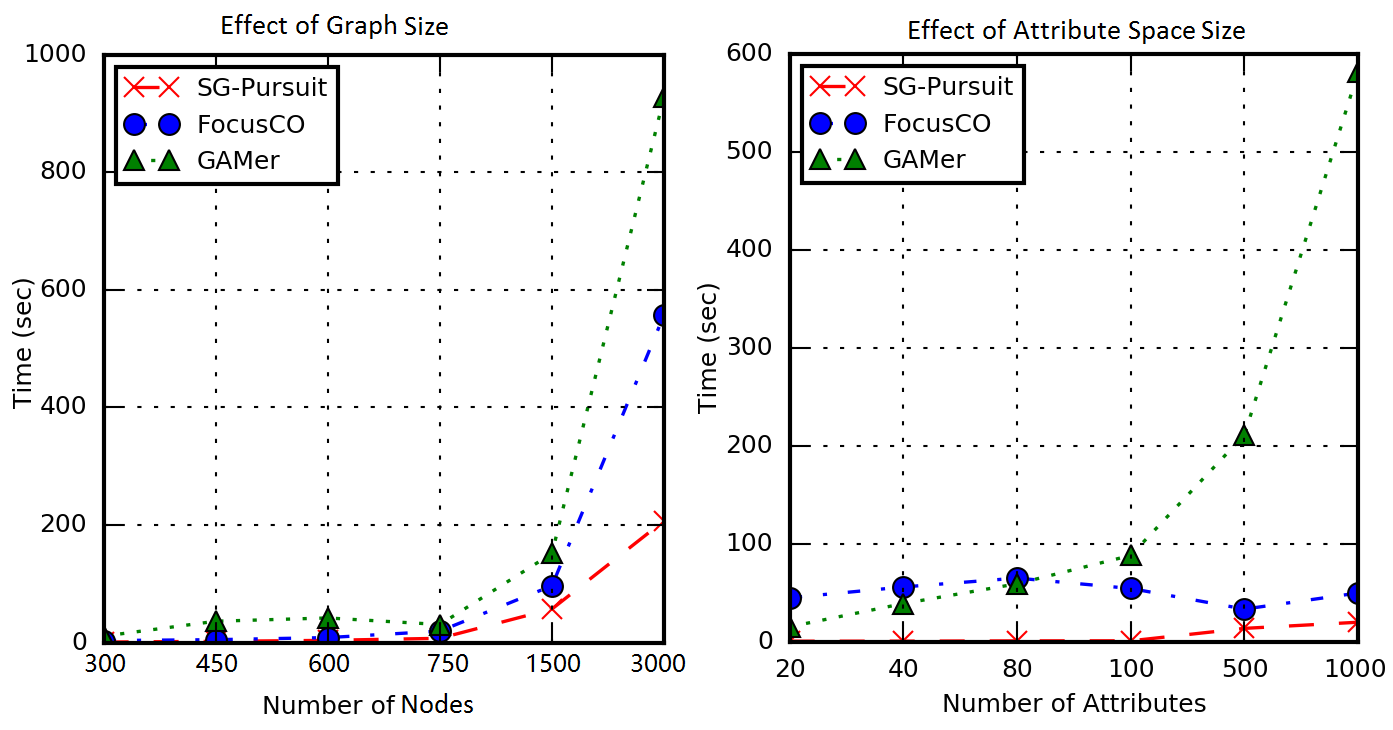}
\vspace{-8mm}
\captionof{figure}{Comparison on running time.} \label{fig:dense-coherent-subgraphs-fscore}
\vspace{-8mm}  
\end{figure}

\section{Experiments}
\label{sect:experiments}
This section thoroughly evaluates the performance of our proposed method on the quality of the detected subspace clusters and run-time on synthetic and real-world networks. The experimental code and data sets are
available from the Link~\cite{fullversion} for reproducibility. 

\subsection{Coherent dense subgraph detection}
\subsubsection{Experimental design} We compared \texttt{SG-Pursuit} with two representative methods, including \texttt{GAMer}~\cite{gunnemann2014gamer} and \texttt{FocusCO}~\cite{perozzi2014focused}.   

\textbf{1. Generation of synthetic graphs:} We used the same generator of synthetic coherent and dense subgraphs as used in the state-of-the-art \texttt{FocusCO} method~\cite{perozzi2014focused}, except that the standard deviation (std) of coherent attributes was set to $\sqrt{0.001}$, instead of $0.001$, which makes the detection problem more challenging. The settings of the other parameters used in \texttt{FocusCO} include: $p_{in} = 0.35$ (density of edges in each cluster) and $p_{out} = 0.1$ (density of edges between clusters). We will compare the performance of different methods based on different combinations of the following parameters: 1) the number of incoherent clusters, 2) the number of coherent attributes, 3) the total number of attributes, and 4) cluster size. We set these parameters to 9, 10, 100, 30, respectively, by default. Note that we set the size of all coherent and incoherent clusters to 30, as \texttt{GAMer} is not scalable to detection of clusters of size larger than 30. We generated one coherent dense cluster and multiple incoherent dense clusters in each synthetic graph. 

\textbf{2. Real-world data.} We used five public benchmark real-world attributed network datasets, including DBLP, Arxiv, Genes, IMDB, and DFB (German soccer premier league data), which are available from and described in details in~\cite{benchmarks}. The basic statistics of these five datasets are provided in Table~\ref{table:coherent-dense-subgraphs-datasets}, with the numbers of nodes ranging from 100 to 11,989; the numbers of edges ranging from 1,106 to 119,258; and the number of attributes ranging from 5 to 300. 

\textbf{3. Implementation and parameter tuning:} The implementations of \texttt{FocusCO} and \texttt{Gamer} are publicly released by authors \footnote{Available at \url{https://github.com/phanein/focused-clustering} and \url{http://dme.rwth-aachen.de/en/gamer}}. 
 \textbf{\texttt{FocusCO}} requires an exemplar set of nodes and has a trade-off parameter $\gamma$ that is used in learning of feature weights. We have tried the representative values of $\gamma$:  $\{0.0,0.0001,0.001,0.01,0.1,0.5,1.0$, $2.0, 3.0, 4.0,5.0,10.0,15.0,
 20.0,25.0,30.0,50.0,100.0\}$ on the graphs, and identified the best value $\gamma = 1$ (with the largest overall F-measure), which is also the default value used in \texttt{FocusCO}. In order to make \texttt{FocusCO} the best competitive to our method, we used a random set of $90\%$ nodes in each coherent dense subspace cluster as the input exemplar set of nodes. \texttt{FocusCO} estimates a weight for each attribute that characterizes the importance of this attribute, and return the top $s$ attributes with the largest weights as the set of coherent attributes, and set $s$ to the true number of coherent attributes. \textbf{\texttt{GAMer}} has four main parameters, including $s_{min}$ (the minimum number of coherent attributes), $\gamma_{min}$ (the minimum threshold on density), and $n_{min}$ (the minimum cluster size), and $w$ (the maximum width that control the level of coherence). We followed the recommended strategies by the authors and identified the best parameter values for \texttt{FocusCO} and \texttt{Gamer}. In particular, these parameters for \textbf{synthetic data sets} were set as follows:
\begin{itemize}
\item $n_{min}$ (minimum cluster size): $\{0.5 s, s\}$, where $s$ is the size of the true coherent cluster.
\item $s_{min}$ (the minimum number of coherent attributes):  $\{0.5 k, k\}$, where $k$ is the number of the true coherent attributes. 
\item $\gamma_{min}$ (the minimum threshold on density): 0.35 (the density of the true coherent cluster). 
\item $w$ (the maximum width that controls the level of coherence): 0.1, which is around 3 times $\sigma$, where $\sigma = \sqrt{0.001}$ is the standard deviation of coherent attributes. 
\end{itemize}
For the five real-world data sets, as the ground truth labels are unavailable, we followed the recommended strategies by the authors and identified the best parameter values for \textbf{\texttt{GAMer}}~\cite{gunnemann2010subspace, gunnemann2014gamer}. We tried different combinations of the four major parameters and returned the best results:  $n_{min}=\{2,3,4,5,10,15,20\}$, $s_{min}=\{2,3,4,5,10,15,20,25,30\}$, $\gamma=\{0.3,0.4,0.5,0.6,0.7,0.8,0.9\}$, and $w=\{0.01,0.1,1.0,5.0,10.0,50.0,100.0\}$.

We did not consider other related papers that focus on different objectives rather than only cluster density (e.g., the normalized subspace graph cut objective as considered in~\cite{gunnemann2013spectral}) and also their implementations are not publicly available. 

\textbf{4. Settings of our proposed method \texttt{SG-Pursuit}}. We used the following score function to detect the most coherent dense subspace cluster in each synthetic graph: $f(x, y) = x^T \Big(W \odot W\Big) y  - \frac{1}{0.01} x^T \Big(W - 1 \frac{x^T W}{1^T x}\Big) \odot \Big(W - 1 \frac{x^T W}{1^T x}\Big) y  
   - \frac{1}{2} \|x\|_2^2 - \frac{1}{2} \|y\|_2^2 + \lambda \frac{x^T A x}{1^T x}$, where $A$ is the adjacency matrix of the input graph, and $\lambda$ is a tradeoff parameter to balance to coherence score (See Table~\ref{table:scoreFunctions}) and the density score $\frac{x^T A x}{1^T x}$. The parameter $\lambda$ was set to 5. We applied projected gradient descent to solve the subproblem in Line 10 of \texttt{SG-Pursuit}. The parameters $k$ (upper bound of the cluster size) and $s$ (upper bound on coherent attributes) were set to the true cluster size and number of coherent attributes, respectively, in the synthetic datasets. For the real-world datasets, we tested the ranges $k\in \{5, 10, 15, 30\}$ and $s\in \{3,5,10,15,20\}$ and identified the settings with the best objective scores. 

\textbf{5. Evaluation metrics}. Each \textbf{synthetic graph} has a single true coherent dense subspace cluster (a combination of a subset of nodes and a subset of attributes) and the task was to detect this cluster. We reported the F-measures of the subsets of nodes and attributes for each competitive method. We note that \texttt{FocusCO} and \texttt{GAMer} may return multiple candidate clusters in an input graph, and in this case we return the cluster with the highest F-measure in order to make fair comparisons. We generated 50 synthetic graphs for each setting and reported the average F-measure and running time. For the five \textbf{real-world attributed network datasets}, where no ground truth is
given, we considered three major measures, including \textbf{average cluster density}, \textbf{average cluster size}, and \textbf{average coherence distance}. The average cluster density is defined as the average degree of nodes within the $K$ subspace clusters identified, where $K$ is predefined. The coherence distance of a specific subspace cluster is defined as the average Euclidean distance between the nodes in this cluster based on the subset of attributed selected. The average coherence distance is the average of the coherence distances of the $K$ subspace clusters. A combination of \underline{\smash{a high average cluster density}}, \underline{\smash{a high average cluster size}}, and \underline{\smash{a low average coherence distance}} indicates a high overall quality of the clusters detected.

\subsubsection{Quality Analysis} 1) \textbf{Synthetic data with ground truth labels}. 
The comparison on F-measures among the three competitive methods is shown in Figure~\ref{fig:dense-coherent-subgraphs-fscore}, by varying total number of irrelevant attributes, number of incoherent clusters, and cluster size variance. The results indicate that \texttt{SG-Pursuit} significantly outperformed \texttt{FocusCO} and \texttt{GAMer} with more than \underline{\smash{15 percent}} marginal improvements in overall on F-measures of the detected nodes and the detect coherent attributes. As shown in Figure 4(c), when the cluster size increases, the F-measure of \texttt{FocusCO} consistently increases. In particular, we observed that when the cluster size is above 150, \texttt{FocusCO} achieved F-measure close to 1.0. In addition, when the standard deviation of coherent attributes decreases (in the shown Figures, we fixed this to $\sqrt{0.001}$), \texttt{FocusCO} performed better for large cluster sizes. To summarize, \texttt{SG-Pursuit} was more robust to \texttt{FocusCO} on low levels of coherence and small cluster sizes.   2) \textbf{Real-world data}. As the real-world datasets do not have ground truth labels, we can not apply \texttt{FocusCO} since it requires a predefined subset of ground truth nodes. Hence, we focus on the comparison between \texttt{SG-Pursuit} and \texttt{GAMer} with different predefined numbers of clusters ($\text{Top-}K, K=5,10,15,20$) . As shown in Table~\ref{table:coherent-dense-subgraphs-datasets}, \texttt{SG-Pursuit} was able to identify subspace clusters with the three major measures coherently better than those of the clusters returned by \texttt{GAMer} in most of the settings. \texttt{GAMer} was able to identify clusters with densities larger than those detected by \texttt{SG-Pursuit}, but with much smaller cluster sizes and much large coherence distances. 

\subsubsection{Scalability analysis} 
The comparison on running times of competitive methods is shown in Figure~\ref{fig:dense-coherent-subgraphs-fscore} with respect to varying numbers of attributes and nodes. The results indicate that \texttt{SG-Pursuit} was faster than both \texttt{FocusCO} and \texttt{GAMer} over several orders of magnitude. The running time of \texttt{FocusCO} was independent on the number of attributes, but increases quadratically on the number of nodes (graph size). The running time \texttt{GAMer} increases quadratically on both numbers of attributes and nodes.

\begin{table*}[!ht]
\vspace{-3mm}
\caption{Analysis of five real-world datasets for coherent dense subspace cluster (subgraph) detection.}
\vspace{-2mm}
\centering
\begin{tabular}{|c|c|c|c|c|c|c|c|c|c|c| }\hline
\multirow{2}{*}{Dataset}& \multirow{2}{*}{Node}& \multirow{2}{*}{Edge}& \multirow{2}{*}{Attribute}& \multirow{2}{*}{Top-K} &  \multicolumn{2}{c|}{Avg. Cluster density}& \multicolumn{2}{c|}{ Avg. Cluster Size} &  \multicolumn{2}{c|}{Avg. Coherence Distance} \\ \cline{6-11}
& & & & &SG-Pursuit&GAMer&SG-Pursuit&GAMer&SG-Pursuit&GAMer\\ \hline
\multirow{4}{*}{\textbf{DFB}}&\multirow{4}{*}{100}&\multirow{4}{*}{1106} &\multirow{4}{*}{5}&5 & \textbf{9.69} & 5.4 & \textbf{10.8} & 6.4 & \textbf{0.13} & 6.38 \\ \cline{5-11}
& & & & 10 & \textbf{9.94} & 3.9 & \textbf{11.0} & 4.9 & \textbf{0.13} & 7.41\\ \cline{5-11}
& & & & 15 & \textbf{11.03} & 3.33 & \textbf{12.07} & 4.33 & \textbf{0.12} & 7.51\\ \cline{5-11}
& & & & 20 & \textbf{10.55} & 2.9 & \textbf{11.6} & 3.9 & \textbf{0.12} & 8.25\\ 
\hline
\multirow{4}{*}{\textbf{DBLP}} &\multirow{4}{*}{774}&\multirow{4}{*}{1757} &\multirow{4}{*}{20}&5 & \textbf{4.12} & 3.2 & \textbf{5.2} & 4.2 & \textbf{0.0} & \textbf{0.0}\\ \cline{5-11}
& & & & 10 & \textbf{3.84} & 3.2 & \textbf{5.7} & 4.2 & 0.06 & \textbf{0.0}\\ \cline{5-11}
& & & & 15 & \textbf{3.25} & 3.17 & \textbf{7.3} & 4.17 & 0.3 & \textbf{0.0}\\ \cline{5-11}
& & & & 20 & \textbf{3.34} & 3.17 & \textbf{7.68} & 4.17 & 0.4 & \textbf{0.0}\\
\hline
\multirow{4}{*}{\textbf{IMDB}}&\multirow{4}{*}{862}&\multirow{4}{*}{4388} &\multirow{4}{*}{21}&5 & 3.24 & \textbf{4.96} & 4.6 & \textbf{10} & \textbf{0.0} & \textbf{0.0}\\ \cline{5-11}
& & & & 10 & 3.17 & \textbf{4.52} & 5.67 & \textbf{10.0} & 0.08 & \textbf{0.0}\\ \cline{5-11}
& & & & 15 & \textbf{3.52} & 2.97 & \textbf{5.79} & 4.27 & 0.06 & \textbf{0.0}\\ \cline{5-11}
& & & & 20 & \textbf{3.36} & 2.78 & \textbf{5.37} & 4.2 & 0.06 & \textbf{0.0}\\ 
\hline
\multirow{4}{*}{\textbf{Genes}}&\multirow{4}{*}{2900}&\multirow{4}{*}{8264} &\multirow{4}{*}{115}&5 & 3.76 & \textbf{6.92} & \textbf{24.8} & 10.6 & \textbf{0.29} & 8.28\\ \cline{5-11}
& & & & 10 & 3.74 & \textbf{5.91} & \textbf{24.7} & 10.8 & \textbf{0.31} & 5.46\\ \cline{5-11}
& & & & 15 & 3.82 & \textbf{5.58} & \textbf{23.09} & 10.87 & \textbf{0.33} & 4.69\\ \cline{5-11}
& & & & 20 & 3.85 & \textbf{5.36} & \textbf{21.83} & 10.9 & \textbf{0.33} & 4.2\\

\hline
\multirow{4}{*}{\textbf{Arxiv}}&\multirow{4}{*}{11989}&\multirow{4}{*}{119258} &\multirow{4}{*}{300}&5 & \textbf{11.53} & 4.16 & \textbf{15.8} & 10.0 & \textbf{0.0} & \textbf{0.0}\\ \cline{5-11}
& & & & 10 & \textbf{9.65} & 4.24 & \textbf{12.4} & 10.0 & \textbf{0.0} & \textbf{0.0}\\ \cline{5-11}
& & & & 15 & \textbf{9.03} & 4.23 & \textbf{11.27} & 10.0 & \textbf{0.0} & \textbf{0.0}\\ \cline{5-11}
& & & & 20 & \textbf{8.72} & 4.21 & \textbf{10.7} & 10.0 & \textbf{0.0} & \textbf{0.0}\\
\hline
\end{tabular}%}
\vspace{-2mm}\label{table:coherent-dense-subgraphs-datasets}
\end{table*}

\subsection{Anomalous connected cluster detection}

\subsubsection{Experimental design} We considered two representative methods, including \texttt{AMEN}~\cite{perozzi2016scalable} and \texttt{SODA}~\cite{gupta2014local}.

\textbf{1. Data sets:} 1) \textbf{Chicago Crime Data}. A data set of crime data records in Chicago was collected form the official website~``\url{https://data.cityofchicago.org/}'' from 2010 to 2014 that has 1,515,241 crime records in total, each of which has the location information (latitude and longitude), crime category (e.g., BATTERY, BURGLARY, THEFT), and description (e.g., ``aggravated domestic battery: knife / cutting inst''). There are 35 different crime categories in total. We collected the census-tract-level graph in Chicago from the same website that has 46,357 nodes (census tracts) and 168,020 edges in total, and considered the frequency of each keyword in the descriptions of crime records as an attribute. There are 121 keywords in total that are non-stop-words and have frequencies above 10,000, which are considered as attributes. 
In order to generate a ground-truth anomalous connected cluster of nodes, we picked a particular crime type (BATTERY or BURGLARY), identified a connected subgraph of size 100 via random walk, and then removed the crime records of this particular category in all nodes outside this subgraph, which generated a rare category as an anomalous category.  This subgraph was considered as an anomalous cluster for crime records of categories that are different from this specific category, and the keywords that are specifically relevant to this category were considered as ground-truth anomalous attributes. We tried this process 50 times to generate 50 anomalous connected clusters, and manually identified 22 keywords relevant to BATTERY and 5 keywords relevant to BURGLARY as anomalous attributes. 
2) \textbf{Yelp Data}. A Yelp reviews data set was publicly released by Yelp for academic research purposes\footnote{Available at \url{http://www.yelp.com/dataset_challenge}}. All restaurants and reviews in the U.S. from 2014 to 2015 were considered, which includes 25,881 restaurants and 686,703 reviews. The frequencies of 1,151 keywords in the reviews that are non-stop-words and have frequencies above 5,000 are considered as attributes. We generated a geographic network of restaurants (nodes), in which each restaurant is connected to its 10 nearest restaurants, and there are 244,012 edges in total. We used the sample strategy as in the Chicago Crime Data to generate 50 ground-truth anomalous connected clusters of size 100 for the specific category ``Mexican''. 
\textbf{2. Implementation and parameter tuning:} The implement ions of \texttt{AMEN} and \texttt{SODA} are publicly released by the authors\footnote{Available at \url{https://github.com/phanein/amen/tree/master/amen} and \url{https://github.com/manavs19/subgraph-outlier-detection}}. Their  parameters were tuned by the recommended strategies by the authors. In particular, both methods require the definition of canidate neighborhoods for scanning. A neighborhood is defined a subset that includes a focus node and the nodes who are $k$-step nearest neighbors to the focus node. If $k=1$, a neighborhood is also called an ego network. We considered the possible values $k\in \{1, 2, 3, 4, 5\}$. Therefore, there were $5n$ candidate neighborhoods in total for each of the two methods with the sizes ranging around $10$ to $300$ nodes. For our proposed method \texttt{SG-Pursuit}, we considered the elevated mean statistic function as defined in Table~\ref{table:scoreFunctions}. The upper bound of cluster size $k$ was set to 100. The upper bound of number of attributes $s$ was set to 22 for BATTERY related anomalous clusters and $5$ for BURGLARY related anomalous clusters. 

\begin{table}[!htb]
\caption{Chicago Crime data (Fm refers to F-Measure).}
\vspace{-2mm}
\centering
\resizebox{\columnwidth}{!}{%
\begin{tabular}{|c||c|c|c|c|c|c|c|c| }\hline
Methods&type& Node Fm &  Attribute Fm& Running Time (s) \\ \hline 
\multirow{2}{*}{\textbf{SODA}} & BATTERY   & 0.476&  0.146& 7,997.893\\
&BURGLARY& 0.45&  0.020& 11,043.892\\\hline
\multirow{2}{*}{\textbf{AMEN}} & BATTERY &	0.363& 0.818& 3,835.589\\
&BURGLARY&	0.337& 0.800& 4,265.449\\\hline
\multirow{2}{*}{\textbf{SG-Pursuit}} &BATTERY& \textbf{0.683}& \textbf{0.955} & \textbf{73.998}\\
&BURGLARY&	\textbf{0.538}& \textbf{1.000}& \textbf{37.538}\\
\hline
\end{tabular}}
\label{table:crimedata}
\end{table}

\subsubsection{Quality and Scalability Analysis}
The detection results of the competitive methods on the Chicago Crime Data are shown in Table~\ref{table:crimedata}. The results indicate that \texttt{SG-Pursuit} outperformed \texttt{SODA} and \texttt{AMEN} on F-Measure of nodes with more than 20\% marginal improvements, and on F-measure of attributes with around 15\% marginal improvements. The running time of \texttt{SG-Pursuit} was less than those of \texttt{SODA} and \texttt{AMEN} on several orders of magnitude. The results of our method on Yelp Data contain three parts: 1) The quality of returned clusters: The F-measure of the returned clusters is 0.31 with the precision 0.314 and the recall 0.309;  2) The top 10 most frequent keyword pairs, i.e. (frequency, keyword), returned are (21, ``tacos''), (21, ``asada'')
(20, ``taco''), (19, ``salsa'')
, (19, ``level''), (15, ``vegas'')
(14, ``mexican''), (14, ``item'')
(14, ``beans''), and (13, ``worth''), where the frequency of a keyword refers to the number of times that this keyword occurs in the anomalous subspace clusters detected by \texttt{SG-Pursuit}. 6 out of 10 keywords are related with "Mexican", which demonstrates that our method can identify the related keywords on the specified category; 3) The running time of our algorithm was 6.98 minutes. 
We were not able to obtain results from \texttt{AMEN} and \texttt{SODA} after running several hours. These baseline methods cannot handle graphs that have more than 10,000 nodes and 1,000 attributes.

\section{Conclusions}
\label{sect:conclusions}
This paper presents \texttt{SG-Pursuit}, a novel generic algorithm to subspace cluster detection in multi-attributed networks that runs in nearly-linear time and provides rigorous guarantees, including a geometrical  convergence rate and a tight error bound. Extensive experiments demonstrate the effectiveness and efficiency of our algorithms. For the future work, we plan to generate our algorithm to subspace cluster detection in heterogeneous networks. \vspace{-2mm}

\appendix

\section{Proof of Theorem~2.2}
\label{proof-theorem-2-2}
\begin{proof}
Let $r_x^{i+1} = x^{i+1} - x^*$ and  $r_y^{i+1} = y^{i+1} - y^*$. Then the component $\|r_x^{i+1}\|_2 + \|r_y^{i+1}\|_2$ is upper bounded as 
\begin{eqnarray}
\|r_x^{i+1}\|_2 + \|r_y^{i+1}\|_2 = \|x^{i+1} - x^*\|_2 + \|y^{i+1} - y^*\|_2  =\nonumber \\
\|x^{i+1} - b_x + b_x - x^*\|_2 + \|y^{i+1} - b_y + b_y - y^*\|_2  \le\nonumber \\
\|b_x - x^{i+1}\|_2 + \|b_x - x^*\|_2 + \|b_y - y^{i+1}\|_2 + \| b_y - y^*\|_2  =\nonumber \\
\|b_x - [b_x]_{\Psi_x^{i+1}} \|_2 + \|b_y - [b_y]_{\Psi_y^{i+1}} \|_2 + \|b_x - x^*\|_2 + \| b_y - y^*\|_2 \le \nonumber  \\
 c_T\|b_x - x^* \|_2 + \|b_y - y^* \|_2 + \|b_x - x^*\|_2 + \| b_y - y^*\|_2  = \nonumber \\
(c_T + 1) \|b_x - x^* \|_2 + 2 \| b_y - y^*\|_2  
\le \nonumber \\
(c_T + 1) \Big(\|b_x - x^* \|_2 + \| b_y - y^*\|_2\Big), \nonumber 
\end{eqnarray}
where the first inequality follows from the use of the triangle inequality; the second equality follows from the fact that $x^{i+1} = [b_x]_{\Psi_x^{i+1}}$ and $y^{i+1} = [b_y]_{\Psi_y^{i+1}}$; the second inequality follows from  the definition of the tail projection oracle $\text{T}(\cdot)$ and the fact that $\Psi_x^{i+1} = \text{T}(b_x^i)$,  $\Psi_y^{i+1} = \arg\max_{R \subseteq \{1, \cdots, p\}}\{\|[b_y^i]_R\|_2^2: \|R\|_0 \le  s\}$, $\text{supp}(x^*) \in \mathbb{M}(k)$, and $\|y^*\|_0 \le s$; and the last inequality follows from the fact that $c_T \ge 1$. Recall that $[b_y]_{\Psi_y^{i+1}}$ refers to the projected vector in the subspace defined by the subset $\Psi_y^{i+1}$. Denote by $w$ the projected vector $[b_y]_{\Psi_y^{i+1}}$. $w$ is defined as: $w_i = [b_y]_i$, the $i$-th entry in the vector $b_y$, if $i \in \Psi_y^{i+1}$; otherwise, $w_i = 0$.

Recall that $\Omega_x= \Gamma_x\cup \text{supp}(x^i)$ and $\Omega_y= \Gamma_y\cup \text{supp}(y^i)$. The component $\|(x^* - b_x)_{\Omega_x}\|_2 + \|(y^* - b_y)_{\Omega_y}\|_2$ is upper bounded as
\begin{eqnarray}
\|(x^* - b_x)_{\Omega_x}\|_2 + \|(y^* - b_y)_{\Omega_y}\|_2 = \nonumber \\
\left\langle b_x - x^*, \frac{(b_x - x^*)_{\Omega_x}}{\|(x^* - b_x)_{\Omega_x}\|_2} \right\rangle + \left\langle b_y - y^*, \frac{(b_y - y^*)_{\Omega_y}}{ \|(y^* - b_y)_{\Omega_y}\|_2} \right\rangle =  \nonumber \\
\Big\langle b_x - x^* - \frac{\gamma^-}{(\gamma^+)^2} [\nabla_x f(b_x, b_y)]_{\Omega_x} + \frac{\gamma^-}{(\gamma^+)^2} [\nabla_x f(x^*, y^*)]_{\Omega_x}, \nonumber \\ \frac{(b_x - x^*)_{\Omega_x}}{\|(x^* - b_x)_{\Omega_x}\|_2} \Big\rangle +  \nonumber
\end{eqnarray}
\begin{eqnarray}
\Big\langle b_y - y^*  - \frac{\gamma^-}{(\gamma^+)^2} [\nabla_y f(b_x, b_y)]_{\Omega_y} + \frac{\gamma^-}{(\gamma^+)^2} [\nabla_y f(x^*, y^*)]_{\Omega_y}, \nonumber \\ \frac{(b_y - y^*)_{\Omega_y}}{ \|(y^* - b_y)_{\Omega_y}\|_2} \Big\rangle  - 
\nonumber\\
\left\langle\frac{\gamma^-}{(\gamma^+)^2} [\nabla_x f(x^*, y^*)]_{\Omega_x},\frac{(b_x - x^*)_{\Omega_x}}{\|(x^* - b_x)_{\Omega_x}\|_2}  \right\rangle - \nonumber \\
\left\langle\frac{\gamma^-}{(\gamma^+)^2} [\nabla_y f(x^*, y^*)]_{\Omega_y}, \frac{(b_y - y^*)_{\Omega_y}}{ \|(y^* - b_y)_{\Omega_y}\|_2} \right\rangle \le  \nonumber \\
\sqrt{2}\rho \Big(\|b_x - x^*\|_2 
+ \|b_y - y^*\|_2\Big) + \nonumber \\ \frac{\gamma^-}{(\gamma^+)^2} \Big(\left\|[\nabla_x f(x^*, y^*)]_{\Omega_x}\right\|_2 + \left\|[\nabla_y f(x^*, y^*)]_{\Omega_y}\right\|_2\Big) \le \nonumber \\
\sqrt{2}\rho \Big(\|b_x - x^*\|_2 + \|b_y - y^*\|_2\Big) + \frac{\gamma^-}{(\gamma^+)^2}\Big(\varepsilon_x + \varepsilon_y\Big),
\nonumber
\end{eqnarray}
where the second equality follows from the fact that $(b_x, b_y)$ is the optimal solution to the sub-problem in Line 10 of Algorithm~\ref{alg:SG-Pursuit} and hence $[\nabla_x f(b_x, b_y)]_{\Omega_x} = 0$ and $[\nabla_y f(b_x, b_y)]_{\Omega_y} = 0$; the first inequality follows from (1) the fact that $\langle w, v\rangle \le \|w\|_2 \|v\|_2$ for any vectors $w$ and $v$ and (2) the inequality~(\ref{RSC-RSS-ext4}) in Lemma~\ref{rss-rsc-properties} by letting $\xi = \frac{\gamma^-}{(\gamma^+)^2}$, given that $\text{supp}(b_x), \text{supp}(x^*) \in \mathbb{M}(2k)$, $\Omega_x \subseteq \mathbb{M}(4k)$, and $|\Omega_y| \le 4s$; and the last inequality follows from the definitions of $\varepsilon_x$ and $\varepsilon_y$ in Theorem~\ref{theorem:accuracy} and the fact that $\Omega_x \in \mathbb{M}(2k)$ and $\|\Omega_y\|_0 \le 3s$: 
\[\varepsilon_x = \max_{S \in \mathbb{M}(2k)} \|[\nabla f_x(x^*, y^*)]_S\|_2^2,\] and \[\varepsilon_y = \max_{R\subseteq\{1, \cdots, p\}, |R| \le 3s } \|[\nabla f_y(x^*, y^*)]_R\|_2^2.\]
It follows that 
\begin{eqnarray}
\|x^* - b_x\|_2 + \|y^* - b_y\|_2 \le 
\|(x^* - b_x)_{\Omega_x}\|_2 + \|(y^* - b_y)_{\Omega_y}\|_2 + \nonumber \\
\|(x^* - b_x)_{\Omega_x^c}\|_2 + \|(y^* - b_y)_{\Omega_y^c}\|_2 \le \nonumber \\
\sqrt{2}\rho \Big(\|b_x - x^*\|_2 + \|b_y - y^*\|_2\Big) + \frac{\gamma^-}{(\gamma^+)^2}(\varepsilon_x + \varepsilon_y) + \nonumber \\
\|(x^* - b_x)_{\Omega_x^c}\|_2 + \|(y^* - b_y)_{\Omega_y^c}\|_2,
\nonumber 
\end{eqnarray}
where the first inequality follows from the use of triangle inequality, and the second inequality follows from the inequality obtained above. After rearrangement, we obtain 
\begin{eqnarray}
\|x^* - b_x\|_2 + \|y^* - b_y\|_2  \nonumber \\
\le\frac{1 }{1 -  \sqrt{2}\rho}\Big( \|(x^* - b_x)_{\Omega_x^c}\|_2 + \|(y^* - b_y)_{\Omega_y^c}\|_2 
+ \frac{\gamma^-}{(\gamma^+)^2}(\varepsilon_x + \varepsilon_y)\Big) \nonumber \\
= \frac{1 }{1 -  \sqrt{2}\rho}\Big( \|x_{\Omega_x^c}^*\|_2 + \|y_{\Omega_y^c}^*\|_2 +  \frac{\gamma^-}{(\gamma^+)^2}(\varepsilon_x + \varepsilon_y)\Big)\nonumber \\
= \frac{1}{1 -  \sqrt{2}\rho}\Big( \|(x^*-x^i)_{\Omega_x^c}\|_2 + \|(y^*-y^i)_{\Omega_y^c}\|_2 
+ \frac{\gamma^-}{(\gamma^+)^2}(\varepsilon_x + \varepsilon_y)\Big)\nonumber \\
=\frac{1}{1 -  \sqrt{2}\rho}\Big( \|[r^i_x]_{\Omega_x^c}\|_2 + \|[r^i_y]_{\Omega_y^c}\|_2
 + \frac{\gamma^-}{(\gamma^+)^2}(\varepsilon_x + \varepsilon_y)\Big)\nonumber \\
\le  \frac{1}{1 -  \sqrt{2}\rho}\Big( \|[r^i_x]_{\Gamma_x^c}\|_2 + \|[r^i_y]_{\Gamma_y^c}\|_2 
 + \frac{\gamma^-}{(\gamma^+)^2}(\varepsilon_x + \varepsilon_y)\Big),
\nonumber 
\end{eqnarray}
where the first equality follows from the fact that $\text{supp}(b_x) \in \Omega_x$ and $\text{supp}(b_y) \in \Omega_y$ and hence $[b_x]_{\Omega_x^c} = 0$ and $[b_y]_{\Omega_y^c} = 0$; the second equality follows from the fact that $\text{supp}(x^i) \subseteq \Omega_x$ and $\text{supp}(y^i) \subseteq \Omega_y$, and hence $[x^i]_{\Omega_x^c} = 0$ and $[y^i]_{\Omega_y^c} = 0$; and the last inequality follows from the fact that $\Gamma_x \subseteq \Omega_x$ and $\Gamma_y \subseteq \Omega_y$, and hence $\Omega_x^c \subseteq \Gamma_x^c$ and $\Omega_y^c \subseteq \Gamma_y^c$. 

Combining the above inequalities, we obtain 
\begin{eqnarray}
\|r_x^{i+1}\|_2 + \|r_y^{i+1}\|_2 \le \frac{(c_T + 1) }{1 -  \sqrt{2}\rho}\Big( \|[r^i_x]_{\Gamma_x^c}\|_2 + \|[r^i_y]_{\Gamma_y^c}\|_2 +  \nonumber \\
\frac{\gamma^-}{(\gamma^+)^2}(\varepsilon_x + \varepsilon_y)\Big). \nonumber 
\end{eqnarray}
From Lemma~\ref{lemma-bound}, we have 
\begin{eqnarray}
\|[[{r_x^i}]_{\Gamma_x^c},  [{r_y^i}]_{\Gamma_y^c}]\|_2    
\le \sqrt{1 - \alpha_0^2}   \|[{r_x^i}, {r_y^i}]\|_2 + \Big(\frac{\alpha_0 \beta_0  }{1 - \alpha_0^2} + \frac{\beta_0}{\alpha_0}\Big) (\varepsilon_x + \varepsilon_y), \nonumber
\end{eqnarray}
where $\alpha_0 =c_H(1-\rho) -  \rho$, $\rho = \sqrt{1 - \Big(\frac{\gamma^-}{\gamma^+}\Big)^2}$, and $\beta_0 = (c_H + 1) \frac{\gamma^-}{(\gamma^+)^2}$.
Given that 
\[
\|[r^i_x]_{\Gamma_x^c}\|_2 + \|[r^i_y]_{\Gamma_y^c}\|_2 \le \sqrt{2} \|[[{r_x^i}]_{\Gamma_x^c},  [{r_y^i}]_{\Gamma_y^c}]\|_2,
\]
we have 
\begin{eqnarray}
\|[r^i_x]_{\Gamma_x^c}\|_2 + \|[r^i_y]_{\Gamma_y^c}\|_2    
\le \sqrt{2 - 2\alpha_0^2}   \Big(\|{r_x^i}\|_2 + \|{r_y^i}]\|_2\Big) + \nonumber \\\Big(\frac{\sqrt{2}\alpha_0 \beta_0  }{1 - \alpha_0^2} + \frac{\sqrt{2}\beta_0}{\alpha_0}\Big) (\varepsilon_x + \varepsilon_y). \nonumber
\end{eqnarray}
Combining the above inequalities, we obtain 
\begin{eqnarray}
\|r_x^{i+1}\|_2 + \|r_y^{i+1}\|_2 \le \alpha \Big(\|{r_x^i}\|_2 + \|{r_y^i}]\|_2\Big) +  \beta (\varepsilon_x + \varepsilon_y).\nonumber
\end{eqnarray}
where $\alpha = \frac{(c_T + 1) \sqrt{2 - 2\alpha_0^2} }{1 -  \sqrt{2}\rho}$ and $\beta = \frac{(c_T + 1) }{1 -  \sqrt{2}\rho}\Big(\frac{\gamma^-}{(\gamma^+)^2} +\Big(\frac{\sqrt{2}\alpha_0 \beta_0  }{1 - \alpha_0^2} + \frac{\sqrt{2}\beta_0}{\alpha_0}\Big)  \Big)$. 
\end{proof}

 \begin{lemma} Let $r^i_x = x^i - x^*$, $r^i_y = y^i - y^*$, $\Gamma_x = \text{H}(\nabla_x f(x^i, y^i))$, and $\Gamma_y = \arg\max_{R \subseteq \{1, \cdots, p\}}\{\|[\nabla_y f(x^i, y^i)]_R\|_2^2: |R| \le 2 s\}$. Then 
\begin{eqnarray}
\|[[{r_x^i}]_{\Gamma_x^c},  [{r_y^i}]_{\Gamma_y^c}]\|_2    
\le \sqrt{1 - \alpha_0^2}   \|[{r_x^i}, {r_y^i}]\|_2 + \Big(\frac{\alpha_0 \beta_0  }{1 - \alpha_0^2} + \frac{\beta_0}{\alpha_0}\Big) (\varepsilon_x + \varepsilon_y) \nonumber
\end{eqnarray}
where $\alpha_0 =c_\text{H}(1-\rho) -  \rho$, $\rho = \sqrt{1 - \Big(\frac{\gamma^-}{\gamma^+}\Big)^2}$, and $\beta_0 = (c_H + 1) \frac{\gamma^-}{(\gamma^+)^2}$.\label{lemma-bound}
 \end{lemma}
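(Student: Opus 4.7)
The strategy is to combine the head-projection approximation on the $x$-block with the exact top-$2s$ selection on the $y$-block, and couple this with the RSC/RSS-induced gradient contraction, so that the information ``most of the gradient energy sits on $\Gamma = \Gamma_x \cup \Gamma_y$'' forces ``most of the residual energy to sit on $\Gamma$'' as well. Concretely, I would aim first at an intermediate inequality of the form $\|[r^i]_\Gamma\|_2 \ge \alpha_0 \|r^i\|_2 - \beta_0(\varepsilon_x+\varepsilon_y)$, and only at the end convert to a bound on the complement using the Pythagorean identity $\|r^i\|_2^2 = \|[r^i]_\Gamma\|_2^2 + \|[r^i]_{\Gamma^c}\|_2^2$.

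Two tools would do most of the work. First, a joint gradient-capture inequality: since $\text{supp}(x^i),\text{supp}(x^*)\in\mathbb{M}(k)$ yields $\text{supp}(r^i_x)\in\mathbb{M}(2k)$, and since $\|y^i\|_0,\|y^*\|_0\le s$ yields $\|r^i_y\|_0\le 2s$, the head-projection guarantee applied to $\nabla_x f(x^i,y^i)$ and the exact top-$2s$ definition of $\Gamma_y$ together imply $\|[\nabla f(x^i,y^i)]_\Gamma\|_2 \ge c_H\|[\nabla f(x^i,y^i)]_{\text{supp}(r^i)}\|_2$ in joint $(x,y)$ notation. Second, an RSC/RSS shrinkage bound: setting $u=\nabla f(x^*,y^*)-\nabla f(x^i,y^i)$ and applying Definition~\ref{def:RSS-RSC} at the pairs $(z^i,z^*)$ and $(z^*,z^i)$ and summing gives $\langle u,r^i\rangle \ge \gamma^-\|r^i\|_2^2$, while smoothness yields $\|u\|_2\le\gamma^+\|r^i\|_2$; expanding $\|r^i-\xi u\|_2^2$ and choosing $\xi=\gamma^-/(\gamma^+)^2$ produces the contraction $\|r^i-\xi u\|_2\le\rho\|r^i\|_2$.

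The assembly then reads as follows. Projecting the contraction onto $\Gamma$ and applying the reverse triangle inequality gives $\|[r^i]_\Gamma\|_2 \ge \xi\|u_\Gamma\|_2-\rho\|r^i\|_2$. To lower-bound $\|u_\Gamma\|_2$ I would write $\nabla f(x^i,y^i)=\nabla f(x^*,y^*)-u$, plug into the gradient-capture inequality, and absorb the two perturbation pieces $\|[\nabla f(x^*,y^*)]_\Gamma\|_2$ and $\|[\nabla f(x^*,y^*)]_{\text{supp}(r^i)}\|_2$ into $\varepsilon_x+\varepsilon_y$ (both index sets are feasible in the sense declared in Theorem~\ref{theorem:accuracy}). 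Together with the restricted inner-product bound $\|u_{\text{supp}(r^i)}\|_2\ge\gamma^-\|r^i\|_2$, obtained by projecting $\langle u,r^i\rangle$ onto $\text{supp}(r^i)$ and applying Cauchy--Schwarz, this yields the intermediate inequality with constants collapsing exactly into $\alpha_0=c_H(1-\rho)-\rho$ and $\beta_0=(c_H+1)\gamma^-/(\gamma^+)^2$. Substituting into $\|[r^i]_{\Gamma^c}\|_2^2=\|r^i\|_2^2-\|[r^i]_\Gamma\|_2^2$, expanding the square, and handling the noise term via a short case split on whether $\alpha_0\|r^i\|_2$ dominates $\beta_0(\varepsilon_x+\varepsilon_y)$---which is precisely what produces the $\beta_0/\alpha_0$ summand in the final coefficient---delivers the stated bound.

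The main obstacle will be the constant-tracking in the assembly step: recovering the precise form $\alpha_0=c_H(1-\rho)-\rho$ rather than a weaker expression requires that the shrinkage step be applied in exactly the right direction, that $\xi$ cancel cleanly against $c_H\gamma^-$, and that the triangle inequality be invoked in the correct order when absorbing the $\nabla f(x^*,y^*)$ terms into $\varepsilon_x+\varepsilon_y$ (each such invocation costs one extra $+1$, which is the origin of the $(c_H+1)$ factor in $\beta_0$). A secondary subtlety is that the head projection and top-$2s$ operators must be compared against feasible families large enough to contain $\text{supp}(r^i)$, which is exactly why the RSC/RSS property in Definition~\ref{def:RSS-RSC} is stated relative to $\mathbb{M}(2k)$ and sparsity $2s$ rather than $\mathbb{M}(k)$ and $s$.
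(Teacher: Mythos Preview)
Your high-level plan---first obtain the intermediate inequality $\|[r^i]_\Gamma\|_2 \ge \alpha_0\|r^i\|_2 - \beta_0(\varepsilon_x+\varepsilon_y)$ and then convert via the Pythagorean identity with a case split on whether $\alpha_0\|r^i\|_2$ dominates the noise---is exactly the paper's. The gap is in your gradient-capture step. The head projection $\Gamma_x=\text{H}(\nabla_x f(x^i,y^i))$ is defined so that $\|[\nabla_x f]_{\Gamma_x}\|_2\ge c_H\max_{S'\in\mathbb{M}(k)}\|[\nabla_x f]_{S'}\|_2$; the comparison set must therefore lie in $\mathbb{M}(k)$, but $\text{supp}(r_x^i)\subseteq\text{supp}(x^i)\cup\text{supp}(x^*)$ is only in $\mathbb{M}(2k)$. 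So your claimed inequality $\|[\nabla f(x^i,y^i)]_\Gamma\|_2\ge c_H\|[\nabla f(x^i,y^i)]_{\text{supp}(r^i)}\|_2$ is not licensed by the oracle. The paper sidesteps this by comparing against $\Phi_x=\text{supp}(x^*)\in\mathbb{M}(k)$ and $\Phi_y=\text{supp}(y^*)$ (of size $\le s$), which are admissible for the head and top-$2s$ guarantees respectively.

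That fix breaks your Cauchy--Schwarz step, because $\langle u,r^i\rangle=\langle u_{\text{supp}(r^i)},r^i\rangle$ lower-bounds $\|u_{\text{supp}(r^i)}\|_2$, not $\|u_\Phi\|_2$. The paper therefore does not project the contraction onto $\Gamma$ as you do; instead it sandwiches $\|[\nabla f(x^i,y^i)]_\Gamma\|_2$ between an upper and a lower bound. The lower bound uses the two-sided estimate (Lemma~\ref{rss-rsc-properties}, Part~4) $\|[\nabla f(x^i,y^i)-\nabla f(x^*,y^*)]_S\|_2\ge\frac{1-\rho}{\xi}\|r^i\|_2$ applied with $S=\Phi$; the upper bound adds and subtracts $r^i$ and invokes the contraction on $\Gamma\cup\Theta$ (Part~3), producing $\frac{\rho}{\xi}\|r^i\|_2+\frac{1}{\xi}\|[r^i]_\Gamma\|_2+\varepsilon$. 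Subtracting the two and multiplying by $\xi$ yields $\alpha_0=c_H(1-\rho)-\rho$ and $\beta_0=(c_H+1)\xi$. Note also that your route, even if the capture step were valid, would give $\xi c_H\gamma^- = c_H(\gamma^-/\gamma^+)^2 = c_H(1-\rho^2)$, hence $\alpha_0=c_H(1-\rho^2)-\rho$, not the stated $c_H(1-\rho)-\rho$; so the claim that ``the constants collapse exactly'' is off by a factor $(1+\rho)$ on the $c_H$ term.
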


\begin{proof}
Denote $\Phi_{x} = \text{supp}(x^*) \in \mathbb{M}(k)$, $\Phi_{y} = \text{supp}(y^*) \in \{y\ |\ \|y\|_0 \le s \}$, $r^i_x = x^i - x^*$, $\Theta_x = \text{supp}(r^i_x) \in \mathbb{M}(2k)$, $r^i_y = y^i - y^*$, and $\Theta_y = \text{supp}(r^i_y) \in \{y\ |\ \|y\|_0 \le 2s \}$. 
Denote  \[\Gamma_x = \text{H}(\nabla_x f(x^i, y^i)) \in \mathbb{M}(k),\] \[\Gamma_y = \arg\max_{R \subseteq \{1, \cdots, p\}}\{\left\|[\nabla_y f(x^i, y^i)]_R\right\|_2^2: |R| \le 2 s\},\] and 
\begin{eqnarray}\left\|[\nabla_x f(x^i, y^i)]_{\Gamma_x},[\nabla_y f(x^i, y^i)]_{\Gamma_y}\right\|_2= \nonumber\\ \left(\Big\|[\nabla_x f(x^i, y^i)]_{\Gamma_x}\right\|_2^2 +  \left\|[\nabla_y f(x^i, y^i)]_{\Gamma_y}\right\|_2^2\Big)^{1/2}.\nonumber 
\end{eqnarray} From the definitions of $\Gamma_x$ and $\Gamma_y$, we have
%the head projection oracle $\text{H}$ (See Equation~(\ref{head-projection})), we have 
\[\left\|[\nabla_x f(x^i, y^i)]_{\Gamma_x}\right\|_2 \ge c_H \left\|[\nabla_x f(x^i, y^i)]_{\Phi_x}\right\|_2\] and \[\left\|[\nabla_y f(x^i, y^i)]_{\Gamma_y}\right\|_2 \ge  \left\|[\nabla_y f(x^i, y^i)]_{\Phi_y}\right\|_2\ge c_H\left\|[\nabla_y f(x^i, y^i)]_{\Phi_y}\right\|_2,\]
where $c_H \in [0, 1]$.  It follows that 
\begin{eqnarray}
\left\|[[\nabla_x f(x^i, y^i)]_{\Gamma_x},[\nabla_y f(x^i, y^i)]_{\Gamma_y}\right\|_2 \ge \nonumber \\ c_H  \left\|[[\nabla_x f(x^i, y^i)]_{\Phi_x},[\nabla_y f(x^i, y^i)]_{\Phi_y}\right\|_2. \label{lemma-comp-1}
\end{eqnarray}
The component $\left\|[[\nabla_x f(x^i, y^i)]_{\Gamma_x},[\nabla_y f(x^i, y^i)]_{\Gamma_y}\right\|_2$ can then be lower bounded as
\begin{eqnarray}
\left\|[[\nabla_x f(x^i, y^i)]_{\Gamma_x},[\nabla_y f(x^i, y^i)]_{\Gamma_y}\right\|_2
\ge \nonumber \\ c_H \left\|[[\nabla_x f(x^i, y^i)]_{\Phi_x},[\nabla_y f(x^i, y^i)]_{\Phi_y}\right\|_2 = \nonumber \\
 c_H\Big(\left\|[\nabla_x f(x^i, y^i)]_{\Phi_x} - [\nabla_x f(x^*, y^*)]_{\Phi_x} + [\nabla_x f(x^*, y^*)]_{\Phi_x}\right\|_2^2 \nonumber \\
 +  \left\|[\nabla_y f(x^i, y^i)]_{\Phi_y} - [\nabla_y f(x^*, y^*)]_{\Phi_y} + [\nabla_y f(x^*, y^*)]_{\Phi_y}\right\|_2^2\Big)^{1/2} \ge  \nonumber \\
 c_H\Big(\left\|[\nabla_x f(x^i, y^i)]_{\Phi_x} - [\nabla_x f(x^*, y^*)]_{\Phi_x}\right\|_2^2  
 + \nonumber\\ \left\|[\nabla_y f(x^i, y^i)]_{\Phi_y} - [\nabla_y f(x^*, y^*)]_{\Phi_y}\right\|_2^2 \Big)^{1/2}  - \nonumber\\
 c_H\left\|[\nabla_x f(x^*, y^*)]_{\Phi_x},[\nabla_y f(x^*, y^*)]_{\Phi_y}\right\|_2 \ge \nonumber \\
  c_H\Big(\left\|[\nabla_x f(x^i, y^i)]_{\Phi_x} - [\nabla_x f(x^*, y^*)]_{\Phi_x}\right\|_2^2  
 + \nonumber \\ \left\|[\nabla_y f(x^i, y^i)]_{\Phi_y} - [\nabla_y f(x^*, y^*)]_{\Phi_y}\right\|_2^2 \Big)^{1/2}  - \nonumber\\
c_H\left\| [\nabla_x f(x^*, y^*)]_{\Phi_x}\right\|_2 - c_H\left\|[\nabla_y f(x^*, y^*)]_{\Phi_y}\right\|_2 \ge\nonumber \\
 c_H \frac{1-\rho}{\xi} \Big(\left\|x^* - x^i\right\|_2^2 + \left\|y^* - y^i\right\|_2^2 \Big)^{1/2} - \nonumber\\ c_H\left\| [\nabla_x f(x^*, y^*)]_{\Phi_x}\right\|_2 - c_H\left\|[\nabla_y f(x^*, y^*)]_{\Phi_y}\right\|_2 \ge \nonumber \\
 c_H \frac{1-\rho}{\xi} \Big(\left\|x^* - x^i\right\|_2^2 + \left\|y^* - y^i\right\|_2^2 \Big)^{1/2} - c_H\cdot \varepsilon_x - c_H\cdot\varepsilon_y = \nonumber\\
 c_H \frac{1-\rho}{\xi} \left\|[r^i_x, r^i_y]\right\|_2 - c_H\cdot\varepsilon_x - c_H\cdot\varepsilon_y,
\nonumber 
\end{eqnarray}
where the first inequality follows from the inequality~(\ref{lemma-comp-1}); the second and the third inequalities follow from the use of the triangle inequalities: $\|a\|_2 + \|b\|_2 \le \|a + b\|_2\ge \|a\|_2 - \|b\|_2$ for any vectors $a$ and $b$); the fourth inequality follows from the inequalities~(\ref{RSC-RSS-ext71}) in Lemma~\ref{rss-rsc-properties}, given that $\Phi_x\in \mathbb{M}(k)$ and $|\Phi_y| \le s$; and the fifth inequality follows from the definitions of $\varepsilon_x$ and $\varepsilon_y$ in Theorem~\ref{theorem:accuracy}. 

The component $\left\|[[\nabla_x f(x^i, y^i)]_{\Gamma_x},[\nabla_y f(x^i, y^i)]_{\Gamma_y}\right\|_2$ can be upper bounded as
\begin{eqnarray}
\left\|[\nabla_x f(x^i, y^i)]_{\Gamma_x},[\nabla_y f(x^i, y^i)]_{\Gamma_y}\right\|_2 = \nonumber \\
 \frac{1}{\xi}\Big(\left\|[\xi \nabla_x f(x^i, y^i)]_{\Gamma_x} - \xi [\nabla_x f(x^*, y^*)]_{\Gamma_x} + \xi [\nabla_x f(x^*, y^*)]_{\Gamma_x}\right\|_2^2  + \nonumber \\
\left\|\xi[\nabla_y f(x^i, y^i)]_{\Gamma_y} - \xi[\nabla_y f(x^*, y^*)]_{\Gamma_y} +\xi [\nabla_y f(x^*, y^*)]_{\Gamma_y}\right\|_2^2\Big)^{1/2} \le \nonumber \\
 \frac{1}{\xi}\Big(\left\|\xi[\nabla_x f(x^i, y^i)]_{\Gamma_x} - \xi[\nabla_x f(x^*, y^*)]_{\Gamma_x} - [{r_x^i}]_{\Gamma_x} + [{r_x^i}]_{\Gamma_x}\right\|_2^2  +  \nonumber \\
\left\|\xi[\nabla_y f(x^i, y^i)]_{\Gamma_y} - \xi[\nabla_y f(x^*, y^*)]_{\Gamma_y} - [{r_y^i}]_{\Gamma_y} + [{r_y^i}]_{\Gamma_y}\right\|_2^2\Big)^{1/2} + \nonumber \\
\left\|[\nabla_x f(x^*, y^*)]_{\Gamma_x}\right\|_2 + \left\|[\nabla_y f(x^*, y^*)]_{\Gamma_y}\right\|_2 \le \nonumber\\
 \frac{1}{\xi}\Big(\left\|\xi[\nabla_x f(x^i, y^i)]_{\Gamma_x} - \xi[\nabla_x f(x^*, y^*)]_{\Gamma_x} - [{r_x^i}]_{\Gamma_x}\right\|_2^2    + \nonumber \\
\left\|\xi[\nabla_y f(x^i, y^i)]_{\Gamma_y} - \xi[\nabla_y f(x^*, y^*)]_{\Gamma_y} - [{r_y^i}]_{\Gamma_y}\right\|_2^2\Big)^{1/2}+  \nonumber \\ 
\frac{1}{\xi}\left\|[[{r_x^i}]_{\Gamma_x}, [{r_y^i}]_{\Gamma_y}]\right\|_2 + \left\|[\nabla_x f(x^*, y^*)]_{\Gamma_x}\right\|_2 + \left\|[\nabla_y f(x^*, y^*)]_{\Gamma_y}\right\|_2\le  \nonumber 
\end{eqnarray}
\begin{eqnarray}
\frac{1}{\xi}\Big(\left\|\xi[\nabla_x f(x^i, y^i)]_{\Gamma_x\cup \Theta_x} - \xi[\nabla_x f(x^*, y^*)]_{\Gamma_x\cup \Theta_x} - [{r_x^i}]_{\Gamma_x\cup \Theta_x}\right\|_2^2 +  \nonumber \\
\left\|\xi[\nabla_y f(x^i, y^i)]_{\Gamma_y\cup \Theta_y} - \xi[\nabla_y f(x^*, y^*)]_{\Gamma_y\cup \Theta_y} - [{r_y^i}]_{\Gamma_y\cup \Theta_y}\right\|_2^2\Big)^{1/2} + \nonumber \\
\frac{1}{\xi} \left\|[[{r_x^i}]_{\Gamma_x}, [{r_y^i}]_{\Gamma_y}]\right\|_2   
  + \left\|[\nabla_x f(x^*, y^*)]_{\Gamma_x}\right\|_2+ \left\|[\nabla_y f(x^*, y^*)]_{\Gamma_y}\right\|_2 = \nonumber\\
\frac{1}{\xi}\Big(\left\|\xi[\nabla_x f(x^i, y^i)]_{\Gamma_x\cup \Theta_x} - \xi[\nabla_x f(x^*, y^*)]_{\Gamma_x\cup \Theta_x} - {r_x^i}\right\|_2^2  + \nonumber \\
\left\|\xi[\nabla_y f(x^i, y^i)]_{\Gamma_y\cup \Theta_y} - \xi[\nabla_y f(x^*, y^*)]_{\Gamma_y\cup \Theta_y} - {r_y^i}\right\|_2^2\Big)^{1/2} + \nonumber \\
\frac{1}{\xi}\left\|[[{r_x^i}]_{\Gamma_x}, [{r_y^i}]_{\Gamma_y}]\right\|_2 + \left\|[\nabla_x f(x^*, y^*)]_{\Gamma_x}\right\|_2   + \left\|[\nabla_y f(x^*, y^*)]_{\Gamma_y}\right\|_2  \le \nonumber \\
  \frac{\rho}{\xi}  \left\|[r_x^i, r_y^i]\right\|_2   +   \nonumber \\
\frac{1}{\xi}\left\|[[{r_x^i}]_{\Gamma_x},[{r_y^i}]_{\Gamma_y}]\right\|_2 + \left\|[\nabla_x f(x^*, y^*)]_{\Gamma_x}\right\|_2  + \left\|[\nabla_y f(x^*, y^*)]_{\Gamma_y}\right\|_2 \le  \nonumber\\
 \frac{\rho}{\xi}  \left\|[r_x^i, r_y^i]\right\|_2 
  +   \frac{1}{\xi}\left\|[[{r_x^i}]_{\Gamma_x},[{r_y^i}]_{\Gamma_y}]\right\|_2 + \varepsilon_x + \varepsilon_y,
\nonumber 
\end{eqnarray}
where the first and second inequalities follow from the use of the triangle inequality; the second equality follows from the fact that $\text{supp}(r_x^i)\subseteq \Gamma_x \cup \Theta_x$ and $\text{supp}(r_y^i)\subseteq \Gamma_y \cup \Theta_y$; the fourth inequality follows from the bound~(\ref{RSC-RSS-ext3}) in Lemma~\ref{rss-rsc-properties}, given that $\text{supp}(r_x^i)\subseteq \Gamma_x \cup \Theta_x \subseteq \mathbb{M}(3k)$ and $|\text{supp}(r_y^i)|\le |\Gamma_y \cup \Theta_y| \le 4s$; and the last inequality follows from the definitions of $\varepsilon_x$ and $\varepsilon_y$, given that $\Gamma_x \in \mathbb{M}(k)$ and $\|\Gamma_y\|_0 \le 2s$.

Combining the above two upper bounds and grouping terms, we have 
\begin{eqnarray}
\left\|[[{r_x^i}]_{\Gamma_x}, [{r_y^i}]_{\Gamma_y}]\right\|_2 \ge \alpha_0 \left\|[r_x^i, r_y^i]\right\|_2 - \beta_0 (\varepsilon_x + \varepsilon_y),\label{lemma-comp-2}
\end{eqnarray}
where $\alpha_0 =c_H (1-\rho) -  \rho = c_H - \rho \Big(c_H+1\Big)$, $\rho = \sqrt{\Big(1 + (\xi\gamma^+)^2 - 2 \xi\gamma^-\Big)}$, and $\beta_0 = (c_H+1) \xi$. Let $\xi = \gamma^-/(\gamma^+)^2$, then $\rho  = \sqrt{1 - \Big(\gamma^-/\gamma^+\Big)^2}$. We assume that $\delta$ is small enough such that $c_H \ge \frac{\rho}{1 - \rho}$ and  $\alpha_0 > 0$. We consider two cases: 
\vspace{2mm}

\textbf{Case 1}: The value of $\left\|[r_x^i, r_y^i]\right\|_2$ satisfies the condition: 
\[\alpha_0 \left\|[r_x^i, r_y^i]\right\|_2 \le \beta_0 (\varepsilon_x + \varepsilon_y).\]
Then  we have
\[\left\|[[{r_x^i}]_{\Gamma_x}, [{r_y^i}]_{\Gamma_y}]\right\|_2 \le \frac{\beta_0 (\varepsilon_x + \varepsilon_y)}{\alpha_0}\]

\textbf{Case 2}:  The value of $\left\|[r_x^i, r_y^i]\right\|_2$ satisfies the condition: 
\[\alpha_0 \left\|[r_x^i, r_y^i]\right\|_2 \ge \beta_0 (\varepsilon_x + \varepsilon_y).\]
 Rewriting the inequality, we get 
\[ \left\|[[{r_x^i}]_{\Gamma_x}, [{r_y^i}]_{\Gamma_y}]\right\|_2 \ge \left\|[r_x^i, r_y^i]\right\|_2\left(\alpha_0 - \frac{\beta_0 (\varepsilon_x + \varepsilon_y)}{\left\|[r_x^i, r_y^i]\right\|_2}\right)
\]
and 
\[
\left\|[[{r_x^i}]_{\Gamma_x}, [{r_y^i}]_{\Gamma_y}]\right\|_2^2 \ge \|[r_x^i, r_y^i]\|_2^2\left(\alpha_0 - \frac{\beta_0 (\varepsilon_x + \varepsilon_y)}{\|[r_x^i, r_y^i]\|_2}\right)^2. 
\]
Moreover, we also have 
\[
\left\|[{r_x^i}, {r_y^i}]\right\|_2^2 = \left\|[[{r_x^i}]_{\Gamma_x}, [{r_y^i}]_{\Gamma_y}]\right\|_2^2 + \left\|[[{r_x^i}]_{\Gamma_x^c},  [{r_y^i}]_{\Gamma_y^c}]\right\|_2^2, 
\]
and 
\begin{eqnarray}
\left\|[[{r_x^i}]_{\Gamma_x^c},  [{r_y^i}]_{\Gamma_y^c}]\right\|_2^2 = \left\|[{r_x^i}, {r_y^i}]\right\|_2^2 - \left\|[[{r_x^i}]_{\Gamma_x}, [{r_y^i}]_{\Gamma_y}]\right\|_2^2. \nonumber 
\end{eqnarray}
Therefore, we obtain 
\begin{eqnarray}
\left\|[[{r_x^i}]_{\Gamma_x^c},  [{r_y^i}]_{\Gamma_y^c}]\right\|_2^2 \le  \left\|[{r_x^i}, {r_y^i}]\right\|_2^2 \left( 1 - \left(\alpha_0 - \frac{\beta_0 (\varepsilon_x + \varepsilon_y)}{\left\|[{r_x^i}, {r_y^i}]\right\|_2^2}\right)^2 \right).  \nonumber 
\end{eqnarray}

We can simplify the right hand side using the following geometric argument, adapted from~\cite{lee2010admira}. Denote $\omega_0 = \alpha_0 - \frac{\beta_0 (\varepsilon_x + \varepsilon_y)}{\|[{r_x^i}, {r_y^i}]\|_2^2}$. Then, $0 < \omega_0 < 1$ because $\alpha_0 \|[r_x^i, r_y^i]\|_2 \ge \beta_0 (\varepsilon_x + \varepsilon_y)$ and $\alpha_0 < 1$. For a free parameter $0 \le \omega \le 1$, a straightforward calculation yields that 
\begin{eqnarray}
\sqrt{1 - \omega_0^2} \le \frac{1}{\sqrt{1 - \omega^2}} - \frac{\omega}{\sqrt{1 - \omega^2}} \omega_0. \nonumber 
\end{eqnarray}
Therefore, substituting into the bound for $\|[[{r_x^i}]_{\Gamma_x^c},  [{r_y^i}]_{\Gamma_y^c}]\|_2$, we get
\begin{eqnarray}
\left\|[[{r_x^i}]_{\Gamma_x^c},  [{r_y^i}]_{\Gamma_y^c}]\right\|_2 \le \nonumber \\
 \left\|[{r_x^i}, {r_y^i}]\right\|_2 \left( \frac{1}{\sqrt{1 - \omega^2}} - \frac{\omega}{\sqrt{1 - \omega^2}} 
 \left(\alpha_0- \frac{\beta_0 (\varepsilon_x + \varepsilon_y)}{\left\|[{r_x^i}, {r_y^i}]\right\|_2^2} \right) \right) \nonumber \\
 = \frac{1 - \omega \alpha_0}{\sqrt{1 - \omega^2}}  \|[{r_x^i}, {r_y^i}]\|_2 + \frac{\omega \beta_0 (\varepsilon_x + \varepsilon_y)}{\sqrt{1 - \omega^2}}
 \nonumber 
\end{eqnarray}

The coefficient preceding $\|[{r_x^i}, {r_y^i}]\|_2$ determines the overall convergence rate, and the minimum value of the coefficient is attained by setting $\omega = \alpha_0$. Substituting, we obtain
\begin{eqnarray}
\|[[{r_x^i}]_{\Gamma_x^c},  [{r_y^i}]_{\Gamma_y^c}]\|_2  
\le \sqrt{1 - \alpha_0^2}  \|[{r_x^i}, {r_y^i}]\|_2 + \frac{\alpha_0  \beta_0 (\varepsilon_x + \varepsilon_y) }{\sqrt{1 - \alpha_0^2}} \label{lemma-comp-3}
\end{eqnarray}

Combining the mutually exclusively cases~(\ref{lemma-comp-2}) and~(\ref{lemma-comp-3}), we obtain 
\begin{eqnarray}
\|[[{r_x^i}]_{\Gamma_x^c},  [{r_y^i}]_{\Gamma_y^c}]\|_2    
\le \sqrt{1 - \alpha_0^2}   \|[{r_x^i}, {r_y^i}]\|_2 + \Big(\frac{\alpha_0 \beta_0  }{1 - \alpha_0^2} + \frac{\beta_0}{\alpha_0}\Big) (\varepsilon_x + \varepsilon_y).  \nonumber
\end{eqnarray}

\end{proof}

\begin{lemma}[Properties of RSC/RSS]
If $f$ satisfies the $(\mathbb{M}(k), s,$ $\gamma^-, \gamma^+)$-$RSS/RSC$, then for every $x, x^\prime \in \mathbb{R}^n$ and $S_x \in \mathbb{M}(4k)$ with $\text{supp}(x)$, $\text{supp}(x^\prime)\in \mathbb{M}(2k)$, and every $y, y^\prime \in \mathbb{R}^p$ and $S_y \subseteq \{1, \cdots, p\}$ with $|\text{supp}(y)|$, $|\text{supp}(y^\prime)|\le 2s$ and $|S_y| \le 4s$, the following inequalities hold:
\begin{itemize}
\item Part 1:
\begin{eqnarray}
\gamma^-(\|x - x^\prime\|_2^2 + \|y - y^\prime\|_2^2) \nonumber \\
\le -\left\langle [\nabla_x f(x, y)]_{S_x} - [\nabla_x f(x^\prime, y^\prime)]_{S_x}, x - x^\prime\right\rangle \nonumber \\ -\left\langle [\nabla_y f(x, y)]_{S_y} - [\nabla_y f(x^\prime, y^\prime)]_{S_y}, y - y^\prime\right\rangle \nonumber \\
\le \gamma^+(\|x - x^\prime\|_2^2 + \|y - y^\prime\|_2^2), \label{RSC-RSS-ext1}
\end{eqnarray}
\item Part 2: 
\begin{eqnarray}
\Big(\|\nabla_x f(x, y) - \nabla_x f(x^\prime, y^\prime)\|_2^2 + \|\nabla_y f(x, y) - \nabla_y f(x^\prime, y^\prime)\|_2^2\Big) \nonumber \\
\le (\gamma^+)^2\Big(\|x - x^\prime\|_2^2 + \|y - y^\prime\|_2^2\Big), \label{RSC-RSS-ext2}
\end{eqnarray}
\item Part 3: For any $\xi \le 2\frac{\gamma^-}{{\gamma^+}^2}$, we have 
\begin{eqnarray}
\|\xi [\nabla_x f(x, y)]_{S_x} - \xi[\nabla_x f(x^\prime, y^\prime)]_{S_x} - (x - x^\prime)\|^2_2 \nonumber\\
+ \|\xi[\nabla_y f(x, y)]_{S_y} - \xi[\nabla_y f(x^\prime, y^\prime)]_{S_y} - (y - y^\prime)\|^2_2\nonumber \\
\le  \rho^2 \Big(\|x - x^\prime\|_2^2 + \|y - y^\prime\|_2^2\Big), \label{RSC-RSS-ext3}
\end{eqnarray}
and 
\begin{eqnarray}
\|\xi[\nabla_x f(x, y)]_{S_x} - \xi[\nabla_x f(x^\prime, y^\prime)]_{S_x} - (x - x^\prime)\|_2 \nonumber\\
+ \|\xi[\nabla_y f(x, y)]_{S_y} - \xi[\nabla_y f(x^\prime, y^\prime)]_{S_y} - (y - y^\prime)\|_2 \nonumber \\
\le \sqrt{2}\rho \Big(\|x - x^\prime\|_2 + \|y - y^\prime\|_2\Big), \label{RSC-RSS-ext4}
\end{eqnarray}
where $\rho = \sqrt{1 + (\xi\gamma^+)^2 - 2 \xi\gamma^-}$. The condition $\xi \le 2\frac{\gamma^-}{{\gamma^+}^2}$ ensures that $\rho \le 1$. In particular, if $\xi = \frac{\gamma^-}{{\gamma^+}^2}$, then $\rho =\sqrt{1 - (\gamma^-/\gamma^+)^2}$.

\item Part 4: 
\begin{eqnarray}
 \frac{1-\rho}{\xi} \Big(\|x - x^\prime\|_2^2 + \|y - y^\prime\|_2^2\Big)^{1/2} \le\nonumber\\ 
  \Big(\|[\nabla_x f(x, y)]_{S_x} - [\nabla_x f(x^\prime, y^\prime)]_{S_x}\|_2^2 + \nonumber \\
 \|[\nabla_y f(x, y)]_{S_y} - [\nabla_y f(x^\prime, y^\prime)]_{S_y}\|_2^2\Big)^{1/2}\nonumber \\  \le \frac{1+\rho}{\xi} \Big(\|x - x^\prime\|_2^2 + \|y - y^\prime\|_2^2\Big)^{1/2}.\label{RSC-RSS-ext71}
\end{eqnarray}
\end{itemize}\label{rss-rsc-properties}
\end{lemma}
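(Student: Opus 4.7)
The plan is to derive all four parts from Definition~\ref{def:RSS-RSC} by applying the RSS/RSC inequality with swapped arguments, combined with Cauchy--Schwarz, the triangle inequality, and elementary quadratic expansion. Writing $\Delta x = x - x^\prime$ and $\Delta y = y - y^\prime$, a preliminary observation used throughout is that when $\text{supp}(\Delta x)\subseteq S_x$ and $\text{supp}(\Delta y)\subseteq S_y$, restricting a gradient to $S_x$ or $S_y$ leaves its inner product with $\Delta x$ or $\Delta y$ unchanged; under the stated hypotheses on $S_x, S_y$ this inclusion can be arranged.

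For Part~1, I would apply Definition~\ref{def:RSS-RSC} once as stated and once with $(x,y)$ and $(x^\prime,y^\prime)$ exchanged, then add the two RSC lower bounds (and separately the two RSS upper bounds). The $f(\cdot,\cdot)$ values cancel pairwise, producing the sandwich $\gamma^-(\|\Delta x\|_2^2+\|\Delta y\|_2^2) \le -\langle \nabla_x f(x,y)-\nabla_x f(x^\prime,y^\prime),\Delta x\rangle - \langle \nabla_y f(x,y)-\nabla_y f(x^\prime,y^\prime),\Delta y\rangle \le \gamma^+(\|\Delta x\|_2^2+\|\Delta y\|_2^2)$. Replacing the unrestricted gradients by their $S_x$- and $S_y$-restrictions is then legitimate by the preliminary observation.

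For Part~2, I plan to use Part~1 in a co-coercivity style: letting $u_x,u_y$ denote the gradient differences restricted to $S_x,S_y$ chosen as the joint supports of the relevant quantities, Cauchy--Schwarz on the upper bound of Part~1 gives $\|u_x\|_2^2+\|u_y\|_2^2 \le \gamma^+\bigl(-\langle u_x,\Delta x\rangle-\langle u_y,\Delta y\rangle\bigr) \le (\gamma^+)^2(\|\Delta x\|_2^2+\|\Delta y\|_2^2)$. Because the RSS condition constrains gradient differences on supports of cardinality up to $4k$ and $4s$, choosing $S_x,S_y$ to include the top-magnitude coordinates of the unrestricted gradient differences recovers the full norm claimed in~(\ref{RSC-RSS-ext2}).

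For Parts~3 and~4, I expand the squared norm directly. Writing $u_x,u_y$ for the restricted gradient differences, the identity $\|\xi u_x - \Delta x\|_2^2 + \|\xi u_y - \Delta y\|_2^2 = \xi^2(\|u_x\|_2^2+\|u_y\|_2^2) - 2\xi(\langle u_x,\Delta x\rangle + \langle u_y,\Delta y\rangle) + (\|\Delta x\|_2^2+\|\Delta y\|_2^2)$ lets me plug Part~2 into the first summand and the lower bound of Part~1 into the second, producing the coefficient $1+(\xi\gamma^+)^2 - 2\xi\gamma^- = \rho^2$ and hence~(\ref{RSC-RSS-ext3}); the condition $\xi \le 2\gamma^-/(\gamma^+)^2$ is exactly what makes $\rho \le 1$. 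Inequality~(\ref{RSC-RSS-ext4}) follows from~(\ref{RSC-RSS-ext3}) via $a+b \le \sqrt{2}\sqrt{a^2+b^2}$ applied to the two squared norms. Part~4 is immediate from the triangle inequalities $\|\xi u\|_2 \ge \|\Delta\|_2 - \|\xi u - \Delta\|_2 \ge (1-\rho)\|\Delta\|_2$ and $\|\xi u\|_2 \le \|\Delta\|_2 + \|\xi u - \Delta\|_2 \le (1+\rho)\|\Delta\|_2$ applied in the joint $(x,y)$ variable, after which dividing by $\xi$ delivers~(\ref{RSC-RSS-ext71}). The main technical obstacle will be bookkeeping the support inclusions when the topological constraint is non-trivial: one must verify that $S_x$ can be enlarged to contain $\text{supp}(x)\cup\text{supp}(x^\prime)$ while remaining in $\mathbb{M}(4k)$, which for the connectivity/density constraints considered in the paper is handled by passing to a structural hull, but should be mentioned explicitly.
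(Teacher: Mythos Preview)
Your overall architecture matches the paper's proof closely: Part~1 by adding the two swapped instances of Definition~\ref{def:RSS-RSC}; Part~3 by expanding the square and plugging in Parts~1 and~2; Part~4 by the reverse/forward triangle inequality applied to Part~3. These are exactly what the paper does.

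The gap is in Part~2. You write that ``Cauchy--Schwarz on the upper bound of Part~1 gives $\|u_x\|_2^2+\|u_y\|_2^2 \le \gamma^+\bigl(-\langle u_x,\Delta x\rangle-\langle u_y,\Delta y\rangle\bigr)$'', but this first inequality is the \emph{co-coercivity} (Baillon--Haddad) inequality, and it does \emph{not} follow from Part~1 together with Cauchy--Schwarz. Part~1 only gives the monotonicity sandwich $\gamma^-\|\Delta\|^2\le -\langle u,\Delta\rangle\le \gamma^+\|\Delta\|^2$, which bounds the inner product by $\|\Delta\|^2$, not $\|u\|^2$ by the inner product. The paper obtains co-coercivity by invoking Theorem~2.1.5 in Nesterov's book (equivalently, the Baillon--Haddad theorem for $L$-smooth convex functions), which uses the full RSS upper bound from Definition~\ref{def:RSS-RSC}, not merely its summed consequence in Part~1. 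Once co-coercivity is in hand, the paper applies Cauchy--Schwarz in the concatenated $(x,y)$ variable to reach $(\gamma^+)^2(\|\Delta x\|_2^2+\|\Delta y\|_2^2)$---so Cauchy--Schwarz enters only \emph{after} the co-coercivity step, not as its source.

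Your fallback idea of picking $S_x,S_y$ to contain the top-magnitude coordinates of the unrestricted gradient differences does not rescue this either: nothing guarantees the full gradient difference is supported on a set in $\mathbb{M}(4k)$, so you cannot recover the unrestricted norm in~\eqref{RSC-RSS-ext2} that way. You need the co-coercivity argument directly.
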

\begin{proof}
The proofs of the inequalities in the four parts are stated as follows:
\begin{itemize}
\item \textbf{Part 1:} Recall that $\text{supp}(x-x^\prime)\subseteq S_x$ and $\text{supp}(y-y^\prime)\subseteq S_y$. By adding two copies of the inequalities~(\ref{RSS-RSC}) with $(x, y)$ and $(x^\prime, y^\prime)$ as described in Definition~\ref{def:RSS-RSC}, we have 
\begin{eqnarray}
\gamma^-(\|x - x^\prime\|_2^2 + \|y - y^\prime\|_2^2) \le \nonumber\\
-\langle [\nabla_x f(x, y)]_{S_x} - [\nabla_x f(x^\prime, y^\prime)]_{S_x}, x - x^\prime\rangle -\nonumber \\
 \langle [\nabla_y f(x, y)]_{S_y} - [\nabla_y f(x^\prime, y^\prime)]_{S_y}, y - y^\prime\rangle = \nonumber\\
 -\langle [\nabla_x f(x, y)] - [\nabla_x f(x^\prime, y^\prime)], x - x^\prime\rangle -\nonumber \\
 \langle [\nabla_y f(x, y)] - [\nabla_y f(x^\prime, y^\prime)], y - y^\prime\rangle
 \le \nonumber \\  
  \gamma^+(\|x - x^\prime\|_2^2 + \|y - y^\prime\|_2^2), \nonumber 
\end{eqnarray}
where 
\begin{eqnarray}
\langle \nabla_x f(x, y) - \nabla_x f(x^\prime, y^\prime), x - x^\prime\rangle = \nonumber \\ \langle \nabla_x f(x, y) - \nabla_x f(x^\prime, y^\prime), [x - x^\prime]_{S_x}\rangle = \nonumber \\
 \langle [\nabla_x f(x, y)]_{S_x} - [\nabla_x f(x^\prime, y^\prime)]_{S_x}, x - x^\prime\rangle \nonumber 
\end{eqnarray}
and 
\begin{eqnarray}
\langle \nabla_y f(x, y) - \nabla_y f(x^\prime, y^\prime), y - y^\prime\rangle = \nonumber \\
\langle \nabla_y f(x, y) - \nabla_y f(x^\prime, y^\prime), [y - y^\prime]_{S_y}\rangle = \nonumber \\
 \langle [\nabla_y f(x, y)]_{S_y} - [\nabla_y f(x^\prime, y^\prime)]_{S_y}, y - y^\prime\rangle. \nonumber 
\end{eqnarray}
\item \textbf{Part 2:} By Theorem 2.1.5 in \cite{nesterov2013introductory}, we have 
\begin{eqnarray}
-\langle \nabla_x f(x, y) - \nabla_x f(x^\prime, y^\prime), x - x^\prime\rangle -\nonumber \\ 
\langle \nabla_y f(x, y) - \nabla_y f(x^\prime, y^\prime), y - y^\prime\rangle \nonumber\\ 
\ge \frac{1}{\gamma^+} \Big(\|\nabla_x f(x, y) - \nabla_x f(x^\prime,y)\|_2^2 + \nonumber\\ 
\|\nabla_y f(x, y) - \nabla_y f(x^\prime, y^\prime)\|_2^2\Big). \nonumber 
%\label{RSC-RSS-ext8}
\end{eqnarray}
We then have
\begin{eqnarray}
\Big(\|\nabla_x f(x, y) - \nabla_x f(x^\prime, y^\prime)\|_2^2 + \|\nabla_y f(x, y) - \nabla_y f(x^\prime, y^\prime)\|_2^2\Big)^{1/2}\cdot \nonumber \\
(\|x - x^\prime\|_2^2 + \|y - y^\prime\|_2^2)^{1/2}  \nonumber\\
\ge \Big(\|\nabla_x f(x, y) - \nabla_x f(x^\prime, y^\prime)\|_2^2 \cdot \|x - x^\prime\|_2^2 + \nonumber \\
\|\nabla_y f(x, y) - \nabla_y f(x^\prime, y^\prime)\|_2^2\cdot \|y - y^\prime\|_2^2\Big)^{1/2} \nonumber \\ 
\ge -\Big\langle \nabla_x f(x, y) - \nabla_x f(x^\prime, y^\prime), x - x^\prime\Big\rangle -
\Big \langle \nabla_y f(x, y) - \nabla_y f(x^\prime, y^\prime), \nonumber \\  y - y^\prime \Big \rangle\nonumber \\
 \ge \frac{1}{\gamma^+} \Big(\|\nabla_x f(x, y) - \nabla_x f(x^\prime, y^\prime)\|_2^2 +
 \|\nabla_y f(x, y) - \nabla_y f(x^\prime, y^\prime)\|_2^2\Big). \nonumber 
\end{eqnarray}
The above inequalities indicate that 
\begin{eqnarray}
\left(\|x - x^\prime\|_2^2 + \|y - y^\prime\|_2^2\right)^{1/2} 
\ge \frac{1}{\gamma^+} \Big(\|\nabla_x f(x, y) - \nabla_x f(x^\prime, y^\prime)\|_2^2 + \nonumber \\
\|\nabla_y f(x, y) - \nabla_y f(x^\prime, y^\prime)\|_2^2\Big)^{1/2}. \nonumber \end{eqnarray}
We then obtain
\begin{eqnarray}
  \Big(\|\nabla_x f(x, y) - \nabla_x f(x^\prime, y^\prime)\|_2^2 + \|\nabla_y f(x, y) - \nabla_y f(x^\prime, y^\prime)\|_2^2\Big)  \le \nonumber \\
  (\gamma^+)^2\Big(\|x - x^\prime\|_2^2 + \|y - y^\prime\|_2^2\Big). \nonumber 
\end{eqnarray}

\item \textbf{Part 3:} Combining the two bounds~(\ref{RSC-RSS-ext1}) and~(\ref{RSC-RSS-ext2}) and grouping terms, we get
\begin{eqnarray}
\|(\xi [\nabla_x f(x, y)]_{S_x} - \xi[\nabla_x f(x^\prime, y^\prime)]_{S_x}) - (x - x^\prime)\|^2_2 +\nonumber\\
 \|(\xi[\nabla_y f(x, y)]_{S_y} - \xi[\nabla_y f(x^\prime, y^\prime)]_{S_y}) - (y - y^\prime)\|^2_2 =\nonumber \\
 \|\xi[\nabla_x f(x, y)]_{S_x} - \xi[\nabla_x f(x^\prime, y^\prime)]_{S_x}\|_2^2 + \|x - x^\prime\|_2^2 - \nonumber \\
2\langle \xi[\nabla_x f(x, y)]_{S_x} - \xi[\nabla_x f(x^\prime, y^\prime)]_{S_x}, x - x^\prime\rangle + \nonumber \\
\|\xi[\nabla_y f(x, y)]_{S_y} - \xi[\nabla_y f(x^\prime, y^\prime)]_{S_y}\|_2^2 + \|y - y^\prime\|_2^2 \nonumber \\
- 2\langle \xi[\nabla_y f(x, y)]_{S_y} - \xi[\nabla_y f(x^\prime, y^\prime)]_{S_y}, y - y^\prime\rangle\le \nonumber \\
 \Big(1 + \xi^2{\gamma^+}^2\Big)\Big(\|x - x^\prime\|_2^2 + \|y - y^\prime\|_2^2\Big) - \nonumber \\
2\langle \xi[\nabla_x f(x, y)]_{S_x} - \xi[\nabla_x f(x^\prime, y^\prime)]_{S_x}, x - x^\prime\rangle - \nonumber \\
2\langle \xi[\nabla_y f(x, y)]_{S_y} - \xi[\nabla_y f(x^\prime, y^\prime)]_{S_y}, y - y^\prime\rangle\le  \nonumber \\
\Big(1 + \xi^2{\gamma^+}^2\Big)\Big(\|x - x^\prime\|_2^2 + \|y - y^\prime\|_2^2\Big) - \nonumber\\
2 \xi\gamma^-\Big(\|x - x^\prime\|_2^2 + \|y - y^\prime\|_2^2\Big) =  \nonumber \\
\Big(1 + \xi^2{\gamma^+}^2 - 2 \xi\gamma^-\Big)\Big(\|x - x^\prime\|_2^2 + \|y - y^\prime\|_2^2\Big),\label{part3-1}
%\nonumber 
\end{eqnarray}
where the first inequality follows from the bound~(\ref{RSC-RSS-ext2}), and the  last inequality follows from the bound~(\ref{RSC-RSS-ext1}). 
By combining the inequality~(\ref{part3-1}) and the following inequality
\begin{eqnarray}
\Big(\|\xi[\nabla_x f(x, y)]_{S_x} - \xi[\nabla_x f(x^\prime, y^\prime)]_{S_x} - (x - x^\prime)\|_2 \nonumber\\
+ \|\xi[\nabla_y f(x, y)]_{S_y} - \xi[\nabla_y f(x^\prime, y^\prime)]_{S_y} - (y - y^\prime)\|_2\Big)^2 \le \nonumber \\
2\|(\xi [\nabla_x f(x, y)]_{S_x} - \xi[\nabla_x f(x^\prime, y^\prime)]_{S_x}) - (x - x^\prime)\|^2_2 +\nonumber\\
 2 \|(\xi[\nabla_y f(x, y)]_{S_y} - \xi[\nabla_y f(x^\prime, y^\prime)]_{S_y}) - (y - y^\prime)\|^2_2, \label{part3-2}
\end{eqnarray}
we have  
\begin{eqnarray}
\|\xi[\nabla_x f(x, y)]_{S_x} - \xi[\nabla_x f(x^\prime, y^\prime)]_{S_x} - (x - x^\prime)\|_2 \nonumber\\
+ \|\xi[\nabla_y f(x, y)]_{S_y} - \xi[\nabla_y f(x^\prime, y^\prime)]_{S_y} - (y - y^\prime)\|_2 \nonumber \\
\le \sqrt{2\Big(1  - 2 \xi\gamma^- + (\xi\gamma^+)^2\Big)}\Big(\|x - x^\prime\|_2 + \|y - y^\prime\|_2\Big).\nonumber  \end{eqnarray}

\item \textbf{Part 4:}
Let $\xi = \frac{\gamma^-}{(\gamma^+)^2}$ and $\rho = \Big(1 + (\xi\gamma^+)^2 - 2 \xi\gamma^-\Big)^{1/2}$. We have 
\begin{eqnarray}
\xi^2 \|( [\nabla_x f(x, y)]_{S_x} - [\nabla_x f(x^\prime, y^\prime)]_{S_x})\|^2_2 - \|(x - x^\prime)\|^2_2 \nonumber\\
+ \xi^2\|([\nabla_y f(x, y)]_{S_y} - [\nabla_y f(x^\prime, y^\prime)]_{S_y}) \|^2_2- \|(y - y^\prime)\|^2_2 \le \nonumber \\
\|(\xi [\nabla_x f(x, y)]_{S_x} - \xi[\nabla_x f(x^\prime, y^\prime)]_{S_x}) - (x - x^\prime)\|^2_2 \nonumber\\
+ \|(\xi[\nabla_y f(x, y)]_{S_y} - \xi[\nabla_y f(x^\prime, y^\prime)]_{S_y}) - (y - y^\prime)\|^2_2 \label{part3-2}
\end{eqnarray}
Combining the above inequalities~(\ref{part3-1}) and (\ref{part3-2}), we have 
\begin{eqnarray}
  \Big(\|[\nabla_x f(x, y)]_{S_x} - [\nabla_x f(x^\prime, y^\prime)]_{S_x}\|_2^2 + \nonumber \\
 \|[\nabla_y f(x, y)]_{S_y} - [\nabla_y f(x^\prime, y^\prime)]_{S_y}\|_2^2\Big)^{1/2}\nonumber \\  \le \frac{1+\rho}{\xi} \Big(\|x - x^\prime\|_2^2 + \|y - y^\prime\|_2^2\Big)^{1/2}.
\end{eqnarray}
By combining the inequality~(\ref{part3-1}) and the following inequality: 
\begin{eqnarray}
-\xi^2 \|( [\nabla_x f(x, y)]_{S_x} - [\nabla_x f(x^\prime, y^\prime)]_{S_x})\|^2_2 + \|(x - x^\prime)\|^2_2 \nonumber\\
- \xi^2\|([\nabla_y f(x, y)]_{S_y} + [\nabla_y f(x^\prime, y^\prime)]_{S_y}) \|^2_2 + \|(y - y^\prime)\|^2_2 \le \nonumber \\
\|(\xi [\nabla_x f(x, y)]_{S_x} - \xi[\nabla_x f(x^\prime, y^\prime)]_{S_x}) - (x - x^\prime)\|^2_2 \nonumber\\
+ \|(\xi[\nabla_y f(x, y)]_{S_y} - \xi[\nabla_y f(x^\prime, y^\prime)]_{S_y}) - (y - y^\prime)\|^2_2, \nonumber
\end{eqnarray}
we conclude that 
\begin{eqnarray}
 \frac{1-\rho}{\xi} \Big(\|x - x^\prime\|_2^2 + \|y - y^\prime\|_2^2\Big)^{1/2} \le\nonumber\\ 
  \Big(\|[\nabla_x f(x, y)]_{S_x} - [\nabla_x f(x^\prime, y^\prime)]_{S_x}\|_2^2 + \nonumber \\
 \|[\nabla_y f(x, y)]_{S_y} - [\nabla_y f(x^\prime, y^\prime)]_{S_y}\|_2^2\Big)^{1/2}\nonumber
\end{eqnarray}
\end{itemize}
\end{proof}

%\pagebreak 
\section{Proof of Theorem 3.1}
\label{proof-theorem-3.1}
\subsection{Negative squared error function}

Recall that the negative squared error function has the form:
\[f(x, y) = - \|c - W^\top x - y\|_2^2 - \frac{1}{2} \|x\|_2^2 - \frac{1}{2} \|y\|_2^2,\]
where $x\in \mathbb{R}^n$ and $y \in \mathbb{R}^p$. The following Lemma discusses the RSC/RSS property of the negative squared error function
\begin{lemma}[Negative squared error function]
Let $I_n$ and $I_p$ be the identity matrices of sizes $n\times n$ and $p\times p$, respectively. If the attribute matrix $W \in \mathbb{R}^{n \times p}$ satisfies the condition: $WW^\intercal \preceq b^0 I_n \preceq I_n$ and $W^\intercal W \preceq b^1 I_p \preceq I_p$, for every $x \in [0, 1]^n$ and $y \in [0, 1]^p$, such that $\text{supp}(x) \in \mathbb{M}({k})$ and $\|y\|_0 \le s$, then 
the negative squared error function satisfies the $(\mathbb{M}(k), s, \gamma^-, \gamma^+)$-$RSS/RSC$, where $\gamma^- = 1$  and $\gamma^+ = \max\Big(2b + 2\sqrt{b} + 1, 3 + 2\sqrt{b} \Big)$. 
\end{lemma}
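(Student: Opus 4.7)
The plan is to exploit that $f$ is a (concave) quadratic function, so that Taylor expansion about any base point is exact. Writing $z=(x^\top,y^\top)^\top$, $z'=(x'^\top,y'^\top)^\top$, and $\Delta=z-z'$, the middle expression of the RSC/RSS inequality~(\ref{RSS-RSC}) reduces to the single quadratic form $\tfrac{1}{2}\Delta^\top M\Delta$, where $M := \nabla^2(-f)$. The entire task is therefore to sandwich $\Delta^\top M\Delta$ between constant multiples of $\|\Delta_x\|_2^2+\|\Delta_y\|_2^2$, under the support restrictions on $\Delta_x,\Delta_y$ inherited from Definition~\ref{def:RSS-RSC}.

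First, I would compute $M$ by direct differentiation to obtain
\[
M=\begin{pmatrix} 2WW^\top+I_n & 2W \\ 2W^\top & 3I_p \end{pmatrix},\qquad
\Delta^\top M\Delta=\|\Delta_x\|_2^2+3\|\Delta_y\|_2^2+2\|W^\top\Delta_x\|_2^2+4\,\Delta_x^\top W\Delta_y.
\]
The lower bound $\gamma^-=1$ then drops out of a one-line sum-of-squares identity: the last three terms regroup as
\[
2\|W^\top\Delta_x\|_2^2+4\Delta_x^\top W\Delta_y+2\|\Delta_y\|_2^2=2\|W^\top\Delta_x+\Delta_y\|_2^2\ge 0,
\]
leaving the residual $\|\Delta_x\|_2^2+\|\Delta_y\|_2^2$ as required. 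This step uses nothing about $W$ beyond linearity and is the reason the constant $\gamma^-=1$ is clean.

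For the upper bound I would invoke the restricted operator-norm hypotheses $WW^\top\preceq b\,I_n$ and $W^\top W\preceq b\,I_p$ (with $b=\max\{b^0,b^1\}$) on the relevant sparse supports, which yield $\|W^\top\Delta_x\|_2^2\le b\|\Delta_x\|_2^2$ and $\|W\Delta_y\|_2\le\sqrt{b}\|\Delta_y\|_2$. Cauchy--Schwarz followed by AM--GM then gives $4|\Delta_x^\top W\Delta_y|\le 4\sqrt{b}\,\|\Delta_x\|_2\|\Delta_y\|_2\le 2\sqrt{b}(\|\Delta_x\|_2^2+\|\Delta_y\|_2^2)$, so collecting terms produces
\[
\Delta^\top M\Delta\le (2b+2\sqrt{b}+1)\|\Delta_x\|_2^2+(3+2\sqrt{b})\|\Delta_y\|_2^2\le \gamma^+\bigl(\|\Delta_x\|_2^2+\|\Delta_y\|_2^2\bigr),
\]
with $\gamma^+=\max(2b+2\sqrt{b}+1,\,3+2\sqrt{b})$, matching the claim.

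The only real obstacle is bookkeeping the sparsity levels. The hypothesis on $W$ is phrased for vectors with $\text{supp}(x)\in\mathbb{M}(k)$ and $\|y\|_0\le s$, whereas Definition~\ref{def:RSS-RSC} uses \emph{differences} of such vectors, whose supports lie in $\mathbb{M}(2k)$-sparse / $2s$-sparse sets. I would therefore interpret the restricted operator-norm assumption as a restricted-isometry-type condition extending to the doubled sparsity; this only affects the choice of the constant $b$ and does not alter the algebraic form of the bounds above. Once this bookkeeping is pinned down, the proof is genuinely just the Hessian computation plus the sum-of-squares identity.
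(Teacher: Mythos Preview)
Your proposal is correct and essentially identical to the paper's own proof: the paper computes the second-order Taylor remainder directly as $\|W^\top\Delta_x+\Delta_y\|_2^2+\tfrac{1}{2}\|\Delta_x\|_2^2+\tfrac{1}{2}\|\Delta_y\|_2^2$, which is exactly your $\tfrac{1}{2}\Delta^\top M\Delta$ regrouped, and then uses the same sum-of-squares for $\gamma^-=1$ and the same triangle/Cauchy--Schwarz plus AM--GM for $\gamma^+$. Your caveat about the doubled sparsity in Definition~\ref{def:RSS-RSC} is a fair observation that the paper glosses over as well.
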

\begin{proof}
Let $x^\prime =  x + \Delta_x$ and $y^\prime = y + \Delta_y$, such that $\text{supp}(x)$, $\text{supp}(x^\prime) \in \mathbb{M}(k)$ and $\|y\|_0, \|y^\prime\|_0 \le s$.  Denote $g(x^\prime, y^\prime, x, y) =  f(x, y) - f(x^\prime, y^\prime)  -   \nabla_x f(x, y)^\intercal(x - x^\prime) - \nabla_y f(x, y)^\intercal(y - y^\prime)$. The component $g(x^\prime, y^\prime, x, y)$ can be upper bounded as 
\begin{eqnarray}
g(x^\prime, y^\prime, x, y) 
= \|W\Delta_x + \Delta_y\|_2^2 + \frac{1}{2}\Delta_x^\intercal \Delta_x + \frac{1}{2}\Delta_y^\intercal \Delta_y \nonumber\\
\le \Big(\|W\Delta_x\|_2 + \|\Delta_y\|_2\Big)^2 + \frac{1}{2}\Delta_x^\intercal \Delta_x + \frac{1}{2}\Delta_y^\intercal \Delta_y \nonumber \\
= \|W\Delta_x\|_2^2 + \|\Delta_y\|_2^2 + 2 \|W\Delta_x\|_2 \|\Delta_y\|_2 + \frac{1}{2}\Delta_x^\intercal \Delta_x + \frac{1}{2}\Delta_y^\intercal \Delta_y\nonumber\\
\le b\|\Delta_x\|_2^2 + \|\Delta_y\|_2^2 + 2 \sqrt{b}\|\Delta_x\|_2 \|\Delta_y\|_2 + \frac{1}{2}\Delta_x^\intercal \Delta_x + \frac{1}{2}\Delta_y^\intercal \Delta_y \nonumber\\
\le b\|\Delta_x\|_2^2 + \|\Delta_y\|_2^2 + \sqrt{b}\|\Delta_x\|_2^2 + \sqrt{b}\|\Delta_y\|_2^2 + \frac{1}{2}\Delta_x^\intercal \Delta_x + \frac{1}{2}\Delta_y^\intercal \Delta_y \nonumber\\
\le (b + \sqrt{b})\|\Delta_x\|_2^2 + (1 + \sqrt{b})\|\Delta_y\|_2^2 + \frac{1}{2}\Delta_x^\intercal \Delta_x + \frac{1}{2}\Delta_y^\intercal \Delta_y \nonumber \\
\le \max\Big(b + \sqrt{b} + 0.5, 1.5 + \sqrt{b} \Big) \Big(\|\Delta_x\|_2^2 + \|\Delta_y\|_2^2\Big) \nonumber\\
= \frac{\gamma^+}{2} \Big(\|\Delta_x\|_2^2 + \|\Delta_y\|_2^2\Big) \nonumber 
\end{eqnarray}
where $\gamma^+ = \max\Big(b + \sqrt{b} + 0.5, 1.5 + \sqrt{b} \Big).$ The component $g(x',y',x,y)$ can also be lower bounded as
\begin{eqnarray}
g(x',y',x,y) = 
  \|W\Delta_x + \Delta_y\|_2^2 + \frac{1}{2}\Delta_x^\intercal \Delta_x + \frac{1}{2}\Delta_y^\intercal \Delta_y \ge \nonumber\\
    0.5\Big(\|\Delta_x\|_2^2 + \|\Delta_y\|_2^2\Big) = \frac{\gamma^-}{2} \Big(\|\Delta_x\|_2^2 + \|\Delta_y\|_2^2\Big)\nonumber,
\end{eqnarray}
where $\gamma^- = 1$. 
\end{proof}

\subsection{Fisher's test statistic function}
Recall that $w_i$ refers to the vector of observations of the $p$ attributes at node $i$, the attribute matrix $W$ is defined as $W = [w_1, \cdots, w_n]^\intercal$, and the Fisher's test statistic is defined as 
\[f(x, y) = x^\intercal Wy - \frac{1}{2} \|x\|_2^2 - \frac{1}{2} \|y\|_2^2,\]
where we consider the soft values of $x$ and $y$: $x \in [0, 1]^n$ and $y \in [0, 1]^p$. We consider the relaxed input domains $[0, 1]^n$ and $ [0, 1]^p$ for $x$ and $y$, instead of their original domains $\{0, 1\}^n$ and $\{0, 1\}^p$, respectively, such that our proposed algorithm \texttt{SG-Pursuit} can be applied to optimize this score function. The following Lemma discusses the RSC/RSS property of the Fisher's test statistic function: 

\begin{lemma}[Fisher's test statistic]
Let $I_n$ and $I_p$ be the identity matrices of sizes $n\times n$ and $p\times p$, respectively. If the attribute matrix $W \in \mathbb{R}^{n \times p}$ satisfies the condition: $WW^\intercal \preceq b^0 I_n \preceq I_n$ and $W^\intercal W \preceq b^1 I_p \preceq I_p$, for every $x \in [0, 1]^n$ and $y \in [0, 1]^p$, such that $\text{supp}(x) \in \mathbb{M}({k})$ and $\|y\|_0 \le s$, then 
the Fisher's test statistic function satisfies the $(\mathbb{M}(k), s, \gamma^-, \gamma^+)$-$RSS/RSC$, where $\gamma^- = \min \Big\{1 - b^0, 1 - b^1\Big\}$  and $\gamma^+ = 2$. \label{fisher-stat}
\end{lemma}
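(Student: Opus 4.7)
\medskip

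\noindent\textbf{Proof proposal.} The plan is to exploit the fact that $f(x,y) = x^\intercal W y - \tfrac{1}{2}\|x\|_2^2 - \tfrac{1}{2}\|y\|_2^2$ is a quadratic, so the Bregman-type term in Definition~\ref{def:RSS-RSC} has a closed form. First I would set $\Delta_x = x' - x$ and $\Delta_y = y' - y$ and compute the gradients $\nabla_x f(x,y) = Wy - x$ and $\nabla_y f(x,y) = W^\intercal x - y$. Plugging into $g(x',y',x,y) := f(x,y) - f(x',y') - \nabla_x f(x,y)^\intercal(x-x') - \nabla_y f(x,y)^\intercal(y-y')$, the first-order and constant pieces cancel, leaving
\[
g(x',y',x,y) \;=\; \tfrac{1}{2}\|\Delta_x\|_2^2 + \tfrac{1}{2}\|\Delta_y\|_2^2 - \Delta_x^\intercal W \Delta_y .
\]
Thus the RSC/RSS inequalities reduce to two-sided bounds on the cross term $\Delta_x^\intercal W \Delta_y$ under the sparsity/topological restrictions. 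Note that because $\mathrm{supp}(x),\mathrm{supp}(x')\subseteq \mathbb{M}(2k)$ and $\|y\|_0,\|y'\|_0\le 2s$, the vectors $\Delta_x$ and $\Delta_y$ are supported on small index sets, so only the corresponding submatrices of $W$ enter, and the hypotheses $W W^\intercal \preceq b^0 I_n$ and $W^\intercal W \preceq b^1 I_p$ give $\|W\Delta_y\|_2^2 \le b^1 \|\Delta_y\|_2^2$ and $\|W^\intercal \Delta_x\|_2^2 \le b^0 \|\Delta_x\|_2^2$.

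For the upper (RSS) bound, I would apply Cauchy--Schwarz and a spectral bound followed by the AM--GM inequality:
\[
|\Delta_x^\intercal W \Delta_y| \;\le\; \|\Delta_x\|_2 \, \|W\Delta_y\|_2 \;\le\; \sqrt{b^1}\,\|\Delta_x\|_2\|\Delta_y\|_2 \;\le\; \tfrac{1}{2}\bigl(\|\Delta_x\|_2^2 + \|\Delta_y\|_2^2\bigr) ,
\]
since $b^1 \le 1$. Combining this with the expression for $g$ yields $g \le \|\Delta_x\|_2^2 + \|\Delta_y\|_2^2$, which is exactly the RSS condition with $\gamma^+ = 2$.

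For the lower (RSC) bound, I would use Young's inequality $2|a^\intercal b|\le \|a\|_2^2 + \|b\|_2^2$ together with both spectral hypotheses, in the two symmetric ways
\[
2|\Delta_x^\intercal W \Delta_y| \le \|\Delta_x\|_2^2 + \|W\Delta_y\|_2^2 \le \|\Delta_x\|_2^2 + b^1\|\Delta_y\|_2^2,
\]
\[
2|\Delta_x^\intercal W \Delta_y| \le \|W^\intercal \Delta_x\|_2^2 + \|\Delta_y\|_2^2 \le b^0\|\Delta_x\|_2^2 + \|\Delta_y\|_2^2,
\]
and then combine these two bounds (e.g.\ by averaging them) so that both blocks inherit a factor of the form $1-b^0$ or $1-b^1$. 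Substituting back into $g$ and taking the minimum produces a bound of the shape $g \ge \tfrac{\gamma^-}{2}(\|\Delta_x\|_2^2 + \|\Delta_y\|_2^2)$ with $\gamma^- = \min\{1-b^0,\,1-b^1\}$ as claimed.

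The main obstacle will be the lower bound: the two Young's-inequality estimates above each control only one of $\|\Delta_x\|_2^2$ and $\|\Delta_y\|_2^2$ at a time, so combining them into a clean constant $\min\{1-b^0,1-b^1\}$ requires choosing the right weighted combination (or, equivalently, analyzing the minimum eigenvalue of the block matrix $\bigl(\begin{smallmatrix} I_n & -W\\ -W^\intercal & I_p\end{smallmatrix}\bigr)$ restricted to the active index sets). An alternative I would fall back on is the direct spectral route: the nonzero singular values of $W$ satisfy $\sigma_i^2 \le \min(b^0,b^1)$, so the block matrix has eigenvalues in $[1 - \sigma_{\max}(W),\, 1 + \sigma_{\max}(W)]$, and this immediately gives matching upper and lower bounds that imply the stated RSS/RSC inequalities (this is the same argument used to reduce RSS/RSC to the standard RIP in Section~\ref{sect:method-problem-formulation}).
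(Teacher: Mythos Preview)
Your proposal is essentially the same as the paper's proof. You arrive at the identical reduced quantity $g(x',y',x,y)=\tfrac{1}{2}\|\Delta_x\|_2^2+\tfrac{1}{2}\|\Delta_y\|_2^2-\Delta_x^\intercal W\Delta_y$; your upper-bound argument (Cauchy--Schwarz plus $\sqrt{b^1}\le 1$ and AM--GM) is a slightly slicker version of the paper's, which instead completes the square as $\tfrac{1}{2}\|\Delta_x-W\Delta_y\|_2^2-\tfrac{1}{2}\Delta_y^\intercal W^\intercal W\Delta_y+\tfrac{1}{2}\|\Delta_y\|_2^2$ and then applies the triangle inequality, but both land on $\gamma^+=2$.

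For the lower bound the paper does exactly what you outline: it applies Young's inequality once with $W\Delta_y$ to obtain $g\ge\tfrac12(1-b^1)\|\Delta_y\|_2^2$, and once with $W^\intercal\Delta_x$ to obtain $g\ge\tfrac12(1-b^0)\|\Delta_x\|_2^2$, and then simply ``combines'' these two one-sided estimates to assert $\gamma^-=\min\{1-b^0,1-b^1\}$. So the obstacle you flag---that each Young estimate controls only one block---is handled in the paper in precisely the loose way you anticipate, not via any additional trick. Your block-matrix/spectral alternative (eigenvalues $1\pm\sigma_i(W)$ of $\bigl(\begin{smallmatrix} I&-W\\-W^\intercal&I\end{smallmatrix}\bigr)$) is not used in the paper; it gives a cleaner derivation but yields $\gamma^-=1-\sigma_{\max}(W)\ge 1-\sqrt{\min(b^0,b^1)}$, which is a different constant from the one stated.
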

\begin{proof}
Let $x^\prime =  x + \Delta_x$ and $y^\prime = y + \Delta_y$, such that $\text{supp}(x)$, $\text{supp}(x^\prime) \in \mathbb{M}(k)$ and $\|y\|_0, \|y^\prime\|_0 \le s$.  Denote $g(x^\prime, y^\prime, x, y) =  f(x, y) - f(x^\prime, y^\prime)  -   \nabla_x f(x, y)^\intercal(x - x^\prime) - \nabla_y f(x, y)^\intercal(y - y^\prime)$. The component $g(x^\prime, y^\prime, x, y)$ can be upper bounded as 
\begin{eqnarray}
g(x^\prime, y^\prime, x, y) = f(x, y) - f(x + \Delta_x, y + \Delta_y) + \nonumber \\ \nabla_x f(x, y)^\intercal \Delta_x + \nabla_y f(x, y)^\intercal \Delta_y = \nonumber \\
 -\Delta_x^\intercal W \Delta_y + \frac{1}{2}\Delta_x^\intercal \Delta_x + \frac{1}{2}\Delta_y^\intercal \Delta_y = \nonumber \\
\frac{1}{2}\|\Delta_x - W\Delta_y\|_2^2 - \frac{1}{2}\Delta_y^\intercal W^\intercal W\Delta_y + \frac{1}{2} \Delta_y^\intercal \Delta_y \le \nonumber \\
 \frac{1}{2} (\|\Delta_x\|_2 + \|W \Delta_y\|_2)^2 - \frac{1}{2}\Delta_y^\intercal W^\intercal W\Delta_y + \frac{1}{2} \Delta_y^\intercal \Delta_y = \nonumber \\
 \frac{1}{2} \Delta_x^\intercal\Delta_x + \frac{1}{2} \Delta_y^\intercal \Delta_y + \|\Delta_x\|_2 \|W \Delta_y\|_2  \le \nonumber \\
\frac{1}{2} \Delta_x^\intercal\Delta_x + \frac{1}{2} \Delta_y^\intercal \Delta_y + \frac{1}{2} \Delta_x^\intercal\Delta_x + \frac{1}{2}\Delta_y^\intercal W^\intercal W\Delta_y  \le \nonumber \\
\Delta_x^\intercal\Delta_x + \frac{1}{2} \Delta_y^\intercal(W^\intercal W + I) \Delta_y \le \nonumber \\ 
 \max\{1, \frac{1}{2}(b^1 + 1)\}\Big(\|\Delta_x\|_2^2 + \|\Delta_y\|_2^2\Big)  = \nonumber \\
\|\Delta_x\|_2^2 + \|\Delta_y\|_2^2 = \frac{\gamma^+}{2}\|\Delta_x\|_2^2 + \|\Delta_y\|_2^2, \label{eqn:Fisher1}
\end{eqnarray}
where $b^1 \le 1$, $\frac{1}{2}(b^1 + 1) \le 1$, and $\gamma^+ = 2$. The component $g(x^\prime, y^\prime, x, y)$ can be lower bounded as 
\begin{eqnarray}
g(x^\prime, y^\prime, x, y) = 
- \Delta_x^\intercal W \Delta_y + \frac{1}{2}\Delta_x^\intercal \Delta_x + \frac{1}{2}\Delta_y^\intercal \Delta_y \ge \nonumber \\
-\frac{1}{2}(\|\Delta_x\|_2^2 + \|W \Delta_y\|_2^2)  + \frac{1}{2}\Delta_x^\intercal \Delta_x + \frac{1}{2}\Delta_y^\intercal \Delta_y = \nonumber \\
-\frac{1}{2} \|W \Delta_y\|_2^2  + \frac{1}{2}\Delta_y \Delta_y^\intercal = \nonumber\\
\frac{1}{2} \Delta_y^\intercal (I - W^\intercal W) \Delta_y \ge \frac{1}{2} (1 - b^1) \|\Delta_y\|_2^2. \label{eqn:Fisher2}
\end{eqnarray}

We can also obtain the lower bound of $g(x^\prime, y^\prime, x, y)$ as
\begin{eqnarray}
g(x^\prime, y^\prime, x, y) = \nonumber \\
 - \Delta_x^\intercal W \Delta_y + \frac{1}{2}\Delta_x^\intercal \Delta_x + \frac{1}{2}\Delta_y^\intercal \Delta_y = \nonumber \\
\frac{1}{2}\|W^\intercal\Delta_x - \Delta_y\|_2^2 - \frac{1}{2}\Delta_x^\intercal WW^\intercal\Delta_x + \frac{1}{2} \Delta_x^\intercal \Delta_x \ge \nonumber \\
\frac{1}{2}\Big(\|W^\intercal\Delta_x\|_2 - \|\Delta_y\|_2\Big)^2 - \frac{1}{2}\Delta_x^\intercal WW^\intercal\Delta_x + \frac{1}{2} \Delta_x^\intercal \Delta_x = \nonumber 
\end{eqnarray}
\begin{eqnarray}
- \|W^\intercal\Delta_x\|_2 \|\Delta_y\|_2 + \frac{1}{2} \Delta_y^\intercal \Delta_y + \frac{1}{2} \Delta_x^\intercal \Delta_x \ge \nonumber \\
  - \frac{1}{2}\Big(\|W^\intercal\Delta_x\|_2^2 + \|\Delta_y\|_2^2\Big) + \frac{1}{2} \Delta_y^\intercal \Delta_y + \frac{1}{2} \Delta_x^\intercal \Delta_x =  \nonumber \\
\frac{1}{2} \Delta_x^\intercal (I - W W^\intercal) \Delta_x \ge \frac{1}{2} (1-b^0) \|\Delta_x\|_2^2. \label{eqn:Fisher3}
\end{eqnarray}

By combining the inequalities~(\ref{eqn:Fisher2})  and (\ref{eqn:Fisher3}), we obtain 
\begin{eqnarray}
f(x^\prime, y^\prime) - f(x, y)  - \nabla_x f(x, y)^\intercal(x^\prime - x) - \nabla_y f(x, y)^\intercal(y^\prime - y) \nonumber \\
\ge \frac{1}{2}\min\Big\{1 - b^1, 1 - b^0\Big\} \left(\|\Delta_x\|_2^2 + \|\Delta_y\|_2^2\right). \label{eqn:Fisher31}
\end{eqnarray}

By combining  the inequalities~(\ref{eqn:Fisher1})  and (\ref{eqn:Fisher3}), we get  
\begin{eqnarray}
\frac{\gamma^-}{2} (\|\Delta_x\|_2^2 + \|\Delta_y\|_2^2) \le 
g(x^\prime, y^\prime, x, y) \le \frac{\gamma^+}{2}\left(\|\Delta_x\|_2^2 + \|\Delta_y\|_2^2\right), \nonumber 
\end{eqnarray}
where $\gamma^- = \min\Big\{1 - b^0, 1 - b^1\Big\}$ and $\gamma^+ = 2$. 
\end{proof}
In the above lemma, it is required that $b^0$ and $b^1$ are less than $1$. Given that $x \in [0, 1]^n$ and $y \in [0, 1]^p$, the attribute matrix $W$ can be normalized such that $b^0, b^1 \le 1$.

\subsection{Logistic function}
Recall that the logistic function is defined as 
\begin{eqnarray}f(x, y) = \sum_{i=1}^p \Big(y_i\log g(x^\intercal w_i ) + (1-y_i)\log(1 - g(x^\intercal w_i ))\Big) -  \nonumber\\
\frac{1}{2} \|x\|_2^2 - \frac{1}{2} \|y\|_2^2,\nonumber \end{eqnarray}
where $w_i = [w_i(1), \cdots, w_i(n)]^\top$ is the vector of observations of the $i$-th attribute at the $n$ nodes in $\mathbb{V}$, $w_i(j)$ is the observation of the $i$-th attribute at node $j$, $x\in \mathbb{R}^n$ is the vector of the weights (coefficients) of the $n$ nodes in $\mathbb{V}$, and $y \in [0, 1]^n$ is the vector of soft binary variables that indicate the anomalousness of the $p$ attributes, and the $i$-th attribute is anomalous if $y_i > 0$. 

\begin{lemma}[Logistic function]
Let $I_n$ and $I_p$ be the identity matrices of sizes $n\times n$ and $p\times p$, respectively. If the attribute matrix $W \in \mathbb{R}^{n \times p}$ satisfies the condition: $WW^\intercal \preceq b^0 I_n \preceq I_n$ and $W^\intercal W \preceq b^1 I_p \preceq I_p$, for every $x \in [0, 1]^n$ and $y \in [0, 1]^p$, such that $\text{supp}(x) \in \mathbb{M}({k})$ and $\|y\|_0 \le s$, then 
the logistic function satisfies the $(\mathbb{M}(k), s, \gamma^-, \gamma^+)$-$RSS/RSC$, where $\gamma^-_{k, s} = \min\Big\{1 - b^0, 1 - b^1\Big\}$  and $\gamma^+_{k, s} = \max\Big\{2 b^0 + 1, 2\Big\}$. 
\end{lemma}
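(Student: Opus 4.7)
The plan is to mimic the strategy used for the Fisher's test statistic in Lemma~\ref{fisher-stat}, but now account for the additional non-linearity introduced by the sigmoid $g(z)=1/(1+e^{-z})$. Setting $\Delta_x = x' - x$ and $\Delta_y = y' - y$ and writing $g(x',y',x,y) = f(x,y) - f(x',y') - \nabla_x f(x,y)^\intercal (x-x') - \nabla_y f(x,y)^\intercal (y-y')$, I would first expand to second order by applying the mean-value form of Taylor's theorem so that $g(x',y',x,y) = \tfrac{1}{2}[\Delta_x^\intercal,\Delta_y^\intercal]\,(-H_{\tilde x, \tilde y})\,[\Delta_x^\intercal,\Delta_y^\intercal]^\intercal$ for some intermediate $(\tilde x,\tilde y)$ lying on the segment between $(x,y)$ and $(x',y')$.

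The next step is to compute the Hessian of $f$ block by block. A direct differentiation (using $\partial \ell_i/\partial x = (y_i - g(x^\intercal w_i))w_i$ and $\partial \ell_i/\partial y_i = x^\intercal w_i$) yields $\nabla^2_{xx} f = -W D(\tilde x)W^\intercal - I_n$ where $D(\tilde x)=\mathrm{diag}(g'(\tilde x^\intercal w_i))$, $\nabla^2_{xy} f = W$, and $\nabla^2_{yy} f = -I_p$. Hence
\begin{eqnarray}
2\,g(x',y',x,y) &=& \Delta_x^\intercal W D(\tilde x) W^\intercal \Delta_x + \|\Delta_x\|_2^2 + \|\Delta_y\|_2^2 \nonumber \\
&& - 2\, \Delta_x^\intercal W \Delta_y. \nonumber
\end{eqnarray}
The analysis then reduces to sandwiching this quadratic form. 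The crucial properties of the sigmoid are that $0 < g'(z) \le 1$ for all $z$, so $0 \preceq D(\tilde x) \preceq I_p$ (here I use $g'(z) \le 1$ rather than the tighter $1/4$ to match the stated $\gamma^+$; if one wants a sharper bound the same argument with $1/4$ can be plugged in).

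For the RSS (upper) bound, I would drop nothing: apply $D(\tilde x)\preceq I$ to get $\Delta_x^\intercal W D(\tilde x) W^\intercal \Delta_x \le \|W^\intercal \Delta_x\|_2^2 \le b^0 \|\Delta_x\|_2^2$ via the hypothesis $WW^\intercal \preceq b^0 I_n$, and handle the cross term using the elementary inequality $2|\Delta_x^\intercal W \Delta_y| \le \|W^\intercal \Delta_x\|_2^2 + \|\Delta_y\|_2^2 \le b^0 \|\Delta_x\|_2^2 + \|\Delta_y\|_2^2$. Adding these yields $2\,g(x',y',x,y) \le (1+2b^0)\|\Delta_x\|_2^2 + 2\|\Delta_y\|_2^2 \le \max\{2b^0+1,2\}(\|\Delta_x\|_2^2+\|\Delta_y\|_2^2)$, so $\gamma^+ = \max\{2b^0+1,2\}$.

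For the RSC (lower) bound, I would discard the nonnegative term $\Delta_x^\intercal W D(\tilde x) W^\intercal \Delta_x \ge 0$ (since $D(\tilde x)\succeq 0$) and note that the remaining expression is exactly the Fisher-statistic quadratic form. Hence the same two complete-the-square manipulations used in the proof of Lemma~\ref{fisher-stat} (one completing the square in $\Delta_x - W\Delta_y$, the other in $W^\intercal \Delta_x - \Delta_y$, and then invoking $WW^\intercal \preceq b^0 I_n$ and $W^\intercal W \preceq b^1 I_p$) yield $2\,g(x',y',x,y) \ge \min\{1-b^0, 1-b^1\}(\|\Delta_x\|_2^2+\|\Delta_y\|_2^2)$, giving $\gamma^- = \min\{1-b^0,1-b^1\}$ as claimed. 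I expect the main subtlety, rather than the main obstacle, to be verifying that the intermediate point $(\tilde x,\tilde y)$ in the Taylor expansion still lies in the feasible set where the stated spectral conditions on $W$ apply; this follows because $\mathrm{supp}(\tilde x)\subseteq \mathrm{supp}(x)\cup\mathrm{supp}(x')\subseteq \mathbb{M}(2k)$ and analogously $\|\tilde y\|_0\le 2s$, which is precisely the enlarged sparsity permitted by Definition~\ref{def:RSS-RSC}.
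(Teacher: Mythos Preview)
Your proposal is correct and follows essentially the same approach as the paper: both compute the Hessian blocks ($-\nabla^2_{xx} f = WDW^\intercal + I_n$ with $D=\mathrm{diag}(g'(\cdot))$, $\nabla^2_{xy} f = W$, $-\nabla^2_{yy} f = I_p$), bound the resulting quadratic form above via $D\preceq I$ and $2|\Delta_x^\intercal W\Delta_y|\le \Delta_x^\intercal WW^\intercal\Delta_x + \|\Delta_y\|_2^2$, and below by discarding the nonnegative $\Delta_x^\intercal W D W^\intercal \Delta_x$ term and repeating the Fisher-statistic argument. The only cosmetic difference is that you invoke the mean-value form of Taylor's theorem explicitly and track the intermediate point $(\tilde x,\tilde y)$, whereas the paper simply asserts that a uniform Hessian sandwich $\gamma^- I \preceq -\nabla^2 f \preceq \gamma^+ I$ suffices; your treatment is slightly more careful on this point but leads to the identical constants.
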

\begin{proof}
It suffices to prove the RSC/RSS property of the logistic function if the following inequalities hold: 
\begin{eqnarray}
\gamma^- I_{n+p} \preceq - \nabla^2_{x, y}f(x, y)  \preceq \gamma^+ I_{n+p}, 
\end{eqnarray}
where $\nabla^2_{x, y}f(x, y)$ is the Hessian matrix of $f(x, y)$, and $I_{n+p}$ is an identity matrix of size $n+p$ by $n+p$. 

The first-order derivatives of the score function $f(x,y)$ has the following forms: 
\begin{eqnarray}
\nabla_y f(x, y) =   [\log g(x^\intercal w_1), \cdots, \log g(x^\intercal w_p)] ^\intercal  \nonumber\\
- [\log(1 - g(x^\intercal w_1)), \cdots, \log (1 - g(x^\intercal w_p))]^\intercal - y \nonumber
\end{eqnarray}
and 
\begin{eqnarray}
\nabla_x f(x, y) = [(1 - g(x^\intercal w_1)) w_1, \cdots, (1 - g(x^\intercal w_p)w_p)] y  \nonumber \\ 
+ [g(x^\intercal w_1) w_1, \cdots, g(x^\intercal w_p)w_p)] (1 - y) - x. \nonumber 
\end{eqnarray}

The second-order derivatives of the score function has the following forms: 
\begin{eqnarray}
\nabla_x^2  f(x, y)  &=&  -\Big[g(x^\intercal w_1)(1 - g(x^\intercal w_1)) w_1{w_1}^\intercal, 
\cdots, +g(x^\intercal w_p)\nonumber\\&&(1 - g(x^\intercal w_p)) w_p{w_p}^\intercal\Big]y  \nonumber \\&&
-\Big[g(x^\intercal w_1) (1 - g(x^\intercal w_1)) w_1{w_1}^\intercal, \cdots, g(x^\intercal w_p) \nonumber\\ &&
(1 - g(x^\intercal w_p))w_p{w_p}^\intercal )\Big](1 - y) - I_n,\nonumber\\ 
\nabla_{x,y} f(x, y) &=&   \Big[(1- g(x^\intercal w_1))w_1, \cdots, (1 - g(x^\intercal w_p))w_p \Big] \nonumber\\&&
+ \Big[g(x^\intercal w_1)w_1, \cdots, g(x^\intercal w_p)w_p\Big] 
=  \Big[w_1, \cdots, w_p\Big],\nonumber \\
\nabla_y^2  f(x, y) &=& -I_p.\nonumber 
\end{eqnarray}
where $I_n$ and $I_p$ refer to the identity matrices of sizes $n$ by $n$ and $p$ by $p$, respectively. For every $\Delta_x$ and $\Delta_y $, such that $\text{supp}(\Delta_x) \in \mathbb{M}(k)$ and $\|\Delta_y\|_0 \le s$, we obtain 
\begin{eqnarray}
\Delta_x \nabla_x^2  f(x, y) \Delta_x^\intercal = \sum_{i=1}^p g(x^\intercal w_i)(1 - g(x^\intercal w_i)) \Delta_x^\intercal w_i{w_i}^\intercal \Delta_x + \Delta_x^\intercal \Delta_x, \nonumber 
\end{eqnarray}
\begin{eqnarray}
\Delta_x\nabla_{x,y} f(x, y) \Delta_x = - \Delta_x^\intercal [w_1, \cdots, w_p] \Delta_y, \nonumber 
\end{eqnarray}
and 
\begin{eqnarray}
\Delta_y \nabla_y^2  f(x, y) \Delta_y^\intercal = \Delta_y^\intercal \Delta_y.
\end{eqnarray}
It follows that 
\begin{eqnarray}
- [\Delta_x, \Delta_y]^\intercal \nabla^2_{x, y}f(x, y) [\Delta_x,\Delta_y] = \nonumber\\
-\Delta_x \nabla_x^2  f(x, y) \Delta_x^\intercal - \Delta_x \nabla_y^2  f(x, y) \Delta_y^\intercal - 2\Delta_x\nabla_{x,y} f(x, y) \Delta_y^\intercal = \nonumber \\
\sum_{i=1}^p g(x^\intercal w_i)(1 - g(x^\intercal w_i)) \Delta_x^\intercal w_i{w_i}^\intercal \Delta_x - 2\Delta_x^\intercal W \Delta_y +  \nonumber\\
\Delta_x^\intercal \Delta_x + \Delta_y^\intercal \Delta_y \le \nonumber \\
\sum_{i=1}^p \Delta_x^\intercal w_i{w_i}^\intercal \Delta_x - 2 \Delta_x^\intercal W \Delta_y + \Delta_x^\intercal \Delta_x + \Delta_y^\intercal \Delta_y = \nonumber\\
\Delta_x^\intercal WW^\intercal \Delta_x - 2 \Delta_x^\intercal W \Delta_y + \Delta_x^\intercal \Delta_x + \Delta_y^\intercal \Delta_y \le \nonumber\\
\Delta_x^\intercal WW^\intercal \Delta_x + \Delta_y^\intercal \Delta_y +  \Delta_x^\intercal W W^\intercal \Delta_x + 
\Delta_x^\intercal \Delta_x + \Delta_y^\intercal \Delta_y \le \nonumber\\
(2 b^0 + 1) \|\Delta_x\|_2^2 + 2\|\Delta_y\|_2^2 \le \nonumber\\
\max\Big\{2 b^0 + 1, 2\Big\} \Big(\|\Delta_x\|_2^2 + \|\Delta_y\|_2^2\Big)
\nonumber
\end{eqnarray}
where the first inequality follows from the fact that $0 \le g(x^\intercal w_i) \le 1$, the second and third inequalities follow from the use of the triangle inequality, and the third inequality follows from the assumed property of the attribute matrix $W$: $WW^\intercal \preceq b^0 I_n$.  The component $- [\Delta_x, \Delta_y]^\intercal \nabla^2_{x, y}f(x, y) [\Delta_x,\Delta_y]$ can lower bounded as  
\begin{eqnarray}
- [\Delta_x, \Delta_y]^\intercal \nabla^2_{x, y}f(x, y) [\Delta_x,\Delta_y] = \nonumber\\
 -\Delta_x \nabla_x^2  f(x, y) \Delta_x^\intercal - \Delta_x \nabla_y^2  f(x, y) \Delta_y^\intercal - 2\Delta_x\nabla_{x,y} f(x, y) \Delta_y^\intercal \ge \nonumber \\
- 2\Delta_x^\intercal W \Delta_y + \Delta_x^\intercal \Delta_x + \Delta_y^\intercal \Delta_y \ge  \nonumber \\
\Delta_x^\intercal \Delta_x + \Delta_y^\intercal \Delta_y - \Delta_y^\intercal \Delta_y -  \Delta_x^\intercal W W^\intercal \Delta_x \ge  \nonumber \\
(1 - b^0) \Delta_x^\intercal \Delta_x,
\nonumber 
\end{eqnarray}
and 
\begin{eqnarray}
-\Delta_x \nabla_x^2  f(x, y) \Delta_x^\intercal - \Delta_x \nabla_y^2  f(x, y) \Delta_y^\intercal - 2\Delta_x\nabla_{x,y} f(x, y) \Delta_y^\intercal \ge  \nonumber \\
- 2\Delta_x^\intercal W \Delta_y + \Delta_x^\intercal \Delta_x + \Delta_y^\intercal \Delta_y \ge  \nonumber \\
\Delta_x^\intercal \Delta_x + \Delta_y^\intercal \Delta_y - \Delta_x^\intercal \Delta_x -  \Delta_y^\intercal   W^\intercal W\Delta_y \ge  \nonumber \\
(1 - b^1) \Delta_y^\intercal \Delta_y. \nonumber 
\end{eqnarray}
A refined lower bound can be obtained as
\begin{eqnarray}
- [\Delta_x, \Delta_y]^\intercal \nabla^2_{x, y}f(x, y) [\Delta_x,\Delta_y]  \ge  
\min\Big\{1 - b^0, 1 - b^1\Big\}\cdot \nonumber\\ 
 \Big([\Delta_x, \Delta_y]^\intercal [\Delta_x,\Delta_y]\Big). \nonumber 
\end{eqnarray}
\end{proof}

\subsection{Elevated mean scan statistic function}
Recall that the elevated mean scan statistic function is defined as 
\[f(x, y) = x^\intercal Wy / \sqrt{x^\intercal \textbf{1}} - \frac{1}{2} \|x\|_2^2 - \frac{1}{2} \|y\|_2^2,\]
where $x\in [0, 1]^n$ and $y \in [0, 1]^p$. We consider the relaxed input domains $[0, 1]^n$ and $ [0, 1]^p$ for $x$ and $y$, instead of their original domains $\{0, 1\}^n$ and $\{0, 1\}^p$, respectively, such that our proposed algorithm \texttt{SG-Pursuit} can be applied to optimize this score function. The corresponding optimization problem
\begin{eqnarray}
\max_{x \in [0, 1]^n, y\in [0, 1]^p} f(x, y)\ \ \ s.t.\ \ \ \text{supp}(x) \in \mathbb{M}(k), \|y\|_0 \le s, \nonumber 
\end{eqnarray}
has an equivalent formulation (with added constraint)
\begin{eqnarray}
\max_{x \in [0, 1]^n, y\in [0, 1]^p} f(x, y)\ \ \ s.t.\ \ \ \text{supp}(x) \in \mathbb{M}(k), 1^\intercal x = r, \|y\|_0 \le s, \nonumber
\end{eqnarray}
where $r$ refers to the true sparsity of $x$. In practice, $r$ is unknown, but can be identified by considering the $k$ possible numbers: $\{1, 2, \cdots, k\}$, where $k\ll n$. 
The following Lemma discusses the RSC/RSS property of this score function. 
\begin{lemma}[Elevated mean scan statistic]
Let $I_n$ and $I_p$ be the identity matrices of sizes $n\times n$ and $p\times p$, respectively. If the true sparsity of $x$ is given as $r$ and the attribute matrix $W \in \mathbb{R}^{n \times p}$ satisfies the condition: $\frac{1}{r}\cdot WW^\intercal \preceq b^0 I_n \preceq I_n$ and $\frac{1}{r}\cdot W^\intercal W \preceq b^1 I_p \preceq I_p$, for every $x \in [0, 1]^n$ and $y \in [0, 1]^p$, such that $\text{supp}(x) \in \mathbb{M}({k})$ and $\|y\|_0 \le s$, then 
the elevated mean scan statistic function satisfies the $(\mathbb{M}(k), s, \gamma^-, \gamma^+)$-$RSS/RSC$, where $\gamma^- = \min \Big\{1 - b^0, 1 - b^1\Big\} \Big(\|\Delta_x\|_2^2 + \|\Delta_y\|_2^2\Big)$  and $\gamma^+ = 2$. 
\end{lemma}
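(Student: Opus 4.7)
The plan is to reduce this result directly to the Fisher's test statistic lemma (Lemma~\ref{fisher-stat}) by a simple rescaling. As the paragraph preceding the statement observes, the optimization problem is equivalent to one with the added constraint $\mathbf{1}^\intercal x = r$, under which the scalar $\sqrt{x^\intercal \mathbf{1}} = \sqrt{r}$ becomes a constant. Therefore for every pair $(x, x')$ satisfying $\mathbf{1}^\intercal x = \mathbf{1}^\intercal x' = r$, the elevated mean scan statistic collapses to the Fisher form
\begin{eqnarray}
f(x, y) \;=\; x^\intercal \tilde W\, y \;-\; \tfrac{1}{2}\|x\|_2^2 \;-\; \tfrac{1}{2}\|y\|_2^2, \nonumber
\end{eqnarray}
where $\tilde W := W/\sqrt{r}$.

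Next, I would verify that the hypotheses of Lemma~\ref{fisher-stat} transfer to $\tilde W$. Under the stated assumptions, $\tilde W \tilde W^\intercal = \tfrac{1}{r} W W^\intercal \preceq b^0 I_n \preceq I_n$ and $\tilde W^\intercal \tilde W = \tfrac{1}{r} W^\intercal W \preceq b^1 I_p \preceq I_p$, exactly the spectral bounds required in the Fisher case. Invoking Lemma~\ref{fisher-stat} with the matrix $\tilde W$ in place of $W$ immediately produces the inequalities of Definition~\ref{def:RSS-RSC} with $\gamma^- = \min\{1 - b^0, 1 - b^1\}$ and $\gamma^+ = 2$. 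No new computation is needed for the upper and lower quadratic bounds; they are inherited verbatim from the Fisher derivation, including the two symmetric uses of the triangle inequality that separately extract the $(1-b^0)\|\Delta_x\|_2^2$ and $(1-b^1)\|\Delta_y\|_2^2$ lower bounds.

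The only genuinely delicate point, and the step I expect to require the most care in writing up, is justifying the restriction to the level set $\mathbf{1}^\intercal x = r$. The RSC/RSS definition quantifies over all $x, x'$ with $\text{supp}(x), \text{supp}(x') \in \mathbb{M}(2k)$, but the square-root normalization $1/\sqrt{x^\intercal \mathbf{1}}$ is nonlinear and would otherwise destroy the quadratic sandwich. The resolution is to observe that the equivalent reformulation fixes $\mathbf{1}^\intercal x = r$ as a hard constraint, and \texttt{SG-Pursuit} is applied only to iterates on this level set, against a target $x^*$ of the same cardinality $r$. Consequently RSC/RSS only needs to be established for pairs lying on $\{x : \mathbf{1}^\intercal x = r\}$, on which the square-root term is constant and the Fisher reduction applies. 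Once this scope is made explicit, the remainder of the proof is a one-line invocation of Lemma~\ref{fisher-stat}.
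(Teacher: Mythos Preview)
Your proposal is correct and takes essentially the same approach as the paper: reduce to the Fisher's test statistic lemma via the rescaling $\tilde W = W/\sqrt{r}$, noting that the spectral hypotheses on $\tilde W$ are exactly the assumed bounds $\frac{1}{r}WW^\intercal \preceq b^0 I_n$ and $\frac{1}{r}W^\intercal W \preceq b^1 I_p$. The paper's own proof is a two-line reduction without your added discussion of the level-set restriction $\mathbf{1}^\intercal x = r$, so if anything you are being more careful about the scope issue than the paper itself.
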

\begin{proof}
The statistic function $f(x, y)$ can be reformulated as an Fisher's test statistic function: 
\[f(x, y) = x^\intercal\tilde{W}y  - \frac{1}{2} \|x\|_2^2 - \frac{1}{2} \|y\|_2^2,\]
where $\tilde{W} = \frac{W}{\sqrt{r}}$. This lemma follows from Lemma~\ref{fisher-stat}. 

\end{proof}

\bibliographystyle{abbrv}

\end{document}